\newcommand{\eat}[1]{}
\newtheorem{theorem}{Theorem}[section]
\newtheorem{lemma}[theorem]{Lemma}
\newtheorem{definition}[theorem]{Definition}
\newtheorem{proposition}[theorem]{Proposition}
\newtheorem{corollary}[theorem]{Corollary}
\newtheorem{fact}[theorem]{Fact}
\newtheorem{remark}[theorem]{Remark}
\newcommand{\Acal}{\mathcal{A}}
\newcommand{\Bcal}{\mathcal{B}}
\newcommand{\Ccal}{\mathcal{C}}
\newcommand{\Dcal}{\mathcal{D}}
\newcommand{\Fcal}{\mathcal{F}}
\newcommand{\Jcal}{\mathcal{J}}
\newcommand{\Lcal}{\mathcal{L}}
\newcommand{\Mcal}{\mathcal{M}}
\newcommand{\Rcal}{\mathcal{R}}
\newcommand{\Scal}{{\mathcal{S}}}
\newcommand{\Tcal}{{\mathcal{T}}}
\newcommand{\Vcal}{\mathcal{V}}
\newcommand{\Xcal}{\mathcal{X}}
\newcommand{\Ycal}{\mathcal{Y}}
\newcommand{\Zcal}{\mathcal{Z}}
\newcommand{\intzo}{\ensuremath{[0,1]}}
\newcommand{\intpmo}{\ensuremath{[-1,1]}}
\newcommand{\relu}{\mathrm{ReLU}}
\newcommand{\II}{\mathbb{I}} 
\newcommand{\E}{\mathbb{E}}
\newcommand{\reals}{{\mathbb{R}}}
\newcommand{\eps}{\varepsilon}
\newcommand{\abs}[1]{\left| #1 \right|}
\DeclareMathAlphabet\mathbfcal{OMS}{cmsy}{b}{n}
\newcommand{\X}{\mathcal{X}}
\newcommand{\Y}{\mathcal{Y}}
\newcommand{\mC}{\mathcal{C}}
\newcommand{\mD}{\mathcal{D}}
\newcommand{\mS}{\mathcal{S}}
\newcommand{\mT}{\mathcal{T}}
\newcommand{\mB}{\mathcal{B}}
\newcommand{\mF}{\mathcal{F}}
\newcommand{\mL}{\mathcal{L}}
\renewcommand{\Im}{\mathrm{Im}}
\newcommand{\ty}{\mathbf{\tilde{y}}}
\newcommand{\R}{{\mathbb{R}}}
\newcommand{\Z}{{\mathbb{Z}}}
\newcommand{\N}{{\mathbb{N}}}
\newcommand{\mb}[1]{\mathbf{#1}}
\newcommand{\y}{\mathbf{y}}
\newcommand{\x}{\mathbf{x}}
\newcommand{\z}{\mathbf{z}}
\newcommand{\rgta}{\rightarrow}
\newcommand{\lt}{\left}
\newcommand{\rt}{\right}
\newcommand{\zo}{\ensuremath{\{0,1\}}}
\newcommand{\norm}[1]{\left\lVert#1\right\rVert}
\newcommand{\CE}{\mathrm{CE}}
\newcommand{\hkl}{k_{\hat{\ell}}}
\newcommand{\hell}{\hat{\ell}}
\newcommand{\tmD}{\tilde{\mathcal{D}}}
\newcommand{\pmo}{\ensuremath{ \{\pm 1\} }}
\newcommand{\fr}[1]{\ensuremath{\frac{1}{#1}}}
\newcommand{\defeq}{\mathrel{:=}}
\DeclareMathOperator{\rank}{rank}
\DeclareMathOperator*{\argmin}{arg\,min}
\newcommand{\Ber}{\mathrm{Ber}}
\newcommand{\Lip}{\mathcal{F}_\mathrm{lip}}
\newcommand{\Cvx}{\mathcal{F}_\mathrm{cvx}}
\newcommand{\llip}{\mathcal{L}_{\mathrm{lip}}}
\newcommand{\lcvx}{\mathcal{L}_{\mathrm{cvx}}}
\newcommand{\Cvxvec}{\mathbb{L}_{\mathrm{cvx}}}
\newcommand{\tests}{tests}
\newcommand{\PRcomment}[1]{{\color{cyan}[Prasad: #1]}}
\newcommand{\sign}{\mathrm{sign}}
\newcommand{\wl}{\mathsf{WL}}
\newcommand{\ece}{\mathsf{ECE}_{\Scal}}
\newcommand{\estece}{\mathsf{estECE}_{\Scal}}
\newcommand{\recal}{\mathsf{reCAL}_{\Scal}}
\newcommand{\ma}{\mathsf{MA}}
\title{Omnipredictors for regression and the approximate rank of convex functions}
\author{Parikshit Gopalan\\
Apple\thanks{{\tt parikg@apple.com}}\and 
Princewill Okoroafor\\
Cornell University\thanks{{\tt pco9@cornell.edu}}\and 
Prasad Raghavendra\\
UC Berkeley\thanks{{\tt raghavendra@berkeley.edu}}
\and   Abhishek Shetty \\
UC Berkeley\thanks{\tt shetty@berkeley.edu}
\and Mihir Singhal\\
UC Berkeley\thanks{{\tt mihirs@berkeley.edu}}
}
\date{}
\begin{document}
\maketitle
\begin{abstract}
Consider the supervised learning setting where the goal is to learn to predict labels $\mathbf y$ given points $\mathbf x$ from a distribution. An \textit{omnipredictor} for a class $\mathcal L$ of loss functions and a class $\mathcal C$ of hypotheses is a predictor whose predictions incur less expected loss than the best hypothesis in $\mathcal C$ for every loss in $\mathcal L$. Since the work of \cite{gopalan2021omnipredictors} 
that introduced the notion, there has been a large body of work in the setting of binary labels where $\mathbf y \in \{0, 1\}$, but much less is known about the regression setting where $\mathbf y \in [0,1]$ can be continuous. The naive generalization of the previous approaches to regression is to predict the probability distribution of $y$, discretized to $\varepsilon$-width intervals. 
The running time would be exponential in the size of the output of the omnipredictor, which is $1/\varepsilon$.

Our main conceptual contribution is the notion of \textit{sufficient statistics} for loss minimization over a family of loss functions: these are a set of statistics about a distribution such that knowing them allows one to take actions that minimize the expected loss for any loss in the family. The notion of sufficient statistics
relates directly to the approximate rank of the family of loss functions. Thus, improved bounds on the latter yield improved runtimes for learning omnipredictors.

Our key technical contribution is a bound of $O(1/\varepsilon^{2/3})$ on the $\epsilon$-approximate rank of convex, Lipschitz functions on the interval $[0,1]$, which we show is tight up to a factor of $\mathrm{polylog} (1/\epsilon)$.  This yields improved runtimes for learning omnipredictors for the class of all convex, Lipschitz loss functions under weak learnability assumptions about the class $\mathcal C$.
We also give efficient omnipredictors when the loss families have low-degree polynomial approximations, or arise from generalized linear models (GLMs). This translation from sufficient statistics to faster omnipredictors is made possible by lifting the technique of loss outcome indistinguishability introduced by \cite{gopalan2022loss} for Boolean labels to the regression setting. 
\end{abstract}
\thispagestyle{empty}
\newpage
\tableofcontents
\thispagestyle{empty}
\newpage
\setcounter{page}{1}

\section{Introduction}

Loss minimization is the dominant paradigm for training machine learning models. In the supervised learning setting, given a distribution $\mD^*$ on point-label pairs (which we refer to as nature's distribution), we pick a family of hypotheses $\mC$, a loss function $\ell$, and find the hypothesis from $\mC$ that minimizes the expected loss over the distribution. This reduces the task of learning  to an optimization problem over a  parameter space. While this recipe has proven extremely successful, one can ask whether it adequately models a process as complex as learning. 

A weakness of this paradigm is that learning is not robust to the choice of loss function. Different losses result in different optimization problems (which must be solved afresh), and hence typically different optimal hypotheses. One would imagine that each time we minimize a different loss, we learn something new about nature. Is there a universal and rigorous way to synthesize all that we learn into a single model that describes our complete understanding of nature, and does well on all of these losses? Standard loss minimization does not provide a solution for this goal.

Quite often, the exact loss function is not known {\em a priori}.
To illustrate this, we present a simple scenario here.
Suppose that a retailer is building a model to forecast demand for an item in each of its stores. 
\begin{itemize}[leftmargin=*]
\item The retailer has a feature vector $\x$ associated with each store, such as geographical location, foot traffic,  which they use to forecast the demand $p(\x)$ for the item in the store. Based on the forecast $p(\x)$, they decide how much of the item to stock up, which is a number $t \in [0,1]$.  

\item The realized demand is given by $\y^* \in [0,1]$ which represents how much demand for the item there actually was. We assume a joint distribution on $(\x, \y^*)$, but for each $\x$ we only see a single draw $\y^*$ from the joint distribution. 

\item Assume that the retailer sells the item at a fixed retail price of $\$1$ per unit. If the retailer procures the item at a wholesale price per unit $c$ that is determined by the market, and can fluctuate day to day. The loss incurred by the retailer is given by
$\ell^{c}(\y^*,t) = c \cdot t - 1 \cdot \min(t,\y^*)$
\end{itemize}

\eat{
\PRcomment{I am confused here.  Even if we fix the price $p$, we have different loss functions in terms of $\y$, $\ell_{\theta} = \ell(\y,\theta)$ depending on the action $\theta$.  These are all relu functions in $\y$. 
 Naively, even to choose the optimal action, one needs to have all the relu functions here, even if price $p$ is fixed and known.  But why does that example not motivate omniprediction?  Is it because, if we decided to directly learn the  "optimal action function" $\theta$, then it is just loss minimization for that function? 
 }
}

The key observation is that the exact loss function $\ell^c$ depends on the wholesale price per unit $c$ which may be unknown {\em a priori} and probably fluctuates over time. At the time of training the model, the forecaster knows the general shape of the loss function family, but not the exact loss they need to minimize.

The stylized scenario described here is just one example of a recurring theme in applications of forecasting, where the true loss functions are not known {\em a priori}.
This can occur because the loss functions depend on parameters that are not fixed yet.  Alternatively, the same forecasts may be used in many different settings, each of which requires its own distinct loss function.  This raises the question, can we have forecasts which are guaranteed to do well, as measured by any loss drawn from a broad family?

\paragraph{Omniprediction.} This motivated the study of omniprediction, initiated in the work of Gopalan, Kalai, Sharan, Reingold and Wieder \cite{gopalan2021omnipredictors}. 
We will now formally describe the notion of omnipredictors.

\begin{itemize}[leftmargin=*]
\item \textit{Point and label distribution:}
As in supervised learning, the central object being learnt is specified by the {\it nature's distribution}, which is  a joint distribution $\mD^*$ over points $\x \in \X$ and corresponding labels $y^* \in \mathcal{Y}$.

In the demand forecasting example, $\x$ will be the vector of features for each store. The label $\y^*$,  a real number in $[0,1]$, is the demand for the item in the store $\x$, hence $\Ycal = [0,1]$.

\item \textit{Actions and Loss families:} 
The agent, who was the retailer in our running example, intends to use the output of a prediction algorithm towards selecting from a set of actions $\mathcal{A}$.

The loss incurred is a function of the true label and the action chosen, i.e., the loss function family is specified as $\Lcal = \{\ell: \mathcal{Y} \times \Acal \to \R\}$  a family of loss functions.
In this work, the labels $y^*$ would be real valued and therefore $\mathcal{Y} \defeq [0,1]$.

In our running example, the action space consisted of how much of the item to stock up (denoted by $t \in [0,1]$), and the retailer wishes to choose an optimal value for it.
The family of loss functions $\Lcal$ were given by $\ell^c(y,t) = c t - \min(t,y)$ for $c \in [0,1]$.

\item \textit{Predictions and optimal actions:}  
An omnipredictor consists of two efficiently computable functions.  
\begin{itemize}
\item \textit{Prediction function: $p: \X \to \mathcal{P}$} 
Given an input label $\x$, an omnipredictor outputs a prediction $p(\x)$ in some range $\mathcal{P}$, which we will specify shortly. 

The output $p(\x)$ of the omnipredictor  should be thought of as  its prediction of the conditional distribution of the label $\y^* | \x$.

\item \textit{Post-processing function: $k: \mathcal{P} \times \mL \to \mathcal{A}$}

Given the prediction $p(\x)$ and a loss function $\ell \in \mathcal{L}$ in the family, outputs a predicted action $k(p(\x),\ell)$ for the agent.

\end{itemize}

For example, an omnipredictor could simply output a distribution $\mathcal{D}$ over $[0,1]$, which is its prediction for the conditional distribution for $\y|\x$.
In our running example, $\mathcal{D}$ would be its prediction of the probability distribution of demand in the store $\x$.
In this case, the range $\mathcal{P}$ is the space of all probability distributions over $[0,1]$.
Then, the optimal action on a given loss function $\ell \in \mL$ is given by,
\[ k(\mD, \ell) = \arg \min_{\theta \in [0,1]} \E_{ y \in \mD }[\ell(y,\theta)] \]
Predicting the entire distribution is not succinct for real-valued labels. \footnote{We do not make parametric assumptions about the distribution of $\y^*|\x$.} One of the major thrusts of our work will be to find more succinct descriptions of the distribution. 

Crucially, the omnipredictor is trained once and for all without knowing a specific loss. Although the post-processing depends on the loss, it does not require further learning or access to the data set. For instance, take the Boolean setting when $\y^* \in \zo$, and the predictor $p(\x) \in [0,1]$ is an estimate for $\E[\y^*|\x]$. For the $\ell_1$ loss $\ell_1(y, t) = |y -t|$, we take the action $1$ if $p(x) >1/2$ and $0$ otherwise, while for the squared loss, it is the identity function.  

\item \textit{Performance guarantee:}
For both computational and information-theoretic reasons, it is often infeasible to even estimate how far the recommended actions of an omnipredictor are from optimal.
This motivates defining a guarantee for the performance of an omnipredictor relative to the best hypothesis from a concept class.

Fix a concept class of hypotheses,  $\mC = \{ c: \X  \to \Acal\}$ that given the features outputs a recommended action. 
An $(\mL, \mC)$-omnipredictor is one whose expected losses under the distribution $\mathcal{D}$ compete with the best hypothesis in $\mC$ for any loss $\ell \in \mL$.
The power of this guarantee comes from the fact that prediction algorithm $p$ makes predictions without knowing the loss function $\ell$. Yet, these predictions (with the right post-processing function $k$) can compete against the benchmark
\[ \min_{c\in \mC}\E_{(\x, \y^*) \sim \mD}[\ell(y^*, c(\x))] \]
which is very much dependent on the choice of $\ell$.

\end{itemize}

\paragraph{Omniprediction in the Boolean setting}
The work of \cite{gopalan2021omnipredictors} which introduced the notion of omniprediction, studied the Boolean setting where $\y^* \in \zo$. Their starting point is the observation that if we could learn the conditional distribution $\y^*|\x$, then subsequently we could take actions that optimize any loss function, without any further learning or access to the data. 
Since the labels are Boolean, the conditional distribution $\y^*|\x$ is fully described by a single number, namely $p^*(\x) = \E[\y^*|\x]$, and this is what our predictor attempts to predict.

Learning $p^*$ is not feasible in general, for computational and information-theoretic reasons. 
Yet, \cite{gopalan2021omnipredictors} showed that one can efficiently learn $(\lcvx, \mC)$ omnipredictors where $\lcvx$ is the family of convex, Lipschitz loss functions for all $\mC$ that satisfy a basic learnability condition called weak agnostic learnability.  They show this via a surprising connection to a multigroup fairness notion known as multicalibration, introduced by Hebert-Johnson, Kim, Reingold and Rothblum \cite{hebert2018multicalibration}. There has since been a large body of work on this topic, giving omnipredictors for other classes of loss functions  \cite{gopalan2022loss, DworkLLT23}, for constrained optimization \cite{HuNRY23, GGJKMR23}, for other prediction scenarios \cite{GJRR24, KimP23} and proposing stronger notions \cite{GKR23}. Most of this work  considers either the Boolean setting or the multiclass setting (see the discussion of related work).

\subsection{Omnipredictors for regression}

In this work, we present a comprehensive theory of omniprediction for the challenging regression setting where the labels $\y^* \in [0,1]$ are allowed to be continuous. We briefly describe our main contributions.
\footnote{There is ambiguity in the use of the term regression in the literature. In our paper, we will use the term regression to mean that the variable $\y$ is continuous, the loss can arbitrary. In the literature, regression can sometimes refer to (certain) loss minimization problems where $\y \in \zo$, as in logistic regression. }

The natural motivation for considering continuous labels comes from the fact that many real-life forecasting tasks involve predicting real-valued attributes: the amount of rain tomorrow, the price of a stock next week, the temperature of a city in ten years. However, extending prior results on omniprediction to the regression setting has proved challenging (the only prior result we are aware of comes from \cite[Section 8]{gopalan2021omnipredictors}). Indeed, several techniques used in prior work, do not generalize to the more complex continuous setting, see the related work section for a detailed discussion.

\paragraph{Sufficient statistics:}
To develop a theory of omnipredictors for regression, the first question to answer is:  {\em what information should the prediction $p(\x)$ convey about the distribution of the label?} This question has been studied recently in the literature, most relevant to us are the works of \cite{JLPRV21} and \cite{bernoulli}. The answer naturally depends on what the predictions are meant to achieve. For omniprediction, the predictions should enable expected loss minimization for any loss function drawn from the family $\mL$ of loss functions. 

The naive solution would require that $p(\x)$ reveals the conditional probability distribution $\y^*|\x$.
But since $\y^*$ is a continuous random variable, the distribution $\y^* | \x$ may not have a finite description.

 For an arbitrary distribution $\mD$ over $\intzo$, we use the term {\it "statistic"} to refer to the expectation $\E_{\y \sim \mD}[s(\y)]$ of a bounded function $s: \intzo \to [-1,1]$.
For reasons that will be clear later, we limit ourselves to this class of statistics called {\em linearizing statistics} by \cite{bernoulli}. A natural alternative (considered in \cite{JLPRV21, bernoulli}) would be to have the predictions be the statistics associated with the distribution $\y^* | \x$.
This raises the question of when a family of statistics is {\it sufficient} for omniprediction for a loss family $\mL$.
We abstract the requirement in the following definition (the notation ignores the conditioning on $\x$): 

{\em A family of statistics $\Scal$ are said to be {\it sufficient} for the loss family $\mL$ if for any loss $\ell \in \mL$ and distribution $\mD$ over $\intzo$, the value of the statistics $\Scal$ for distribution $\mathcal{D}$ determine the (near) optimal action that (approximately) minimizes the loss. In other words 
\[ k(\mathcal{D}, \ell) = \argmin_{t \in \mathcal{A}} \E_{\y \sim \mD}[\ell(\y, t)], \] 
is determined by the statistics $\Scal$ of distribution $\mD$.
}

If there is a (small) set of {\em sufficient statistics} $\Scal$ for the family $\mL$, then our omnipredictor would try and predict these statistics for every $\x$.   For Lipschitz loss functions, there is a simple set of $1/\eps$   sufficient statistics: the probabilities of the events $\y \in [i\eps, (i+1)\eps)$ for every $i$. We call these the CDF statistics, since they tell us the CDF of $\y$ to within accuracy $\eps$. This gives sufficient information to  minimize the expected loss over actions for any Lipschitz loss function within an additive $\eps$. This was the approach taken in \cite[Section 8]{gopalan2021omnipredictors}, which treats the problem of predicting these probabilities as a multiclass labeling problem with $1/\eps$ labels.

For specific families of loss functions, one could hope to get more succinct statistics. For instance, consider the family of $\ell_p$ loss functions for even $p$, given by $\{ \ell_p(y,t) = (y-t)^p \}$.  
Since this space is spanned by the monomials $\Scal = \{s_i(y) = y^i \;|\; i = 0,\ldots,p\}$, the first $p$ moments are sufficient statistics for this family.

This naturally raises the question: what is the smallest set of sufficient statistics for a family $\Lcal$ of losses? Let us see why it holds the key to more efficient algorithms for omniprediciton.

\paragraph{Omniprediction from indistinguishability:}

The work of \cite{gopalan2022loss}  gives a template for establishing omniprediction by establishing a stronger condition they call loss outcome indistinguishability, which is inspired by the notion of outcome indistinguishability introduced by \cite{OI}. 

Specifically, they show that in the Boolean case, $(\Lcal,\mC)$-omniprediction against a class of loss functions $\Lcal$ and concept class $\mC$ is implied the following properties of the prediction function $p$.
\begin{enumerate}
    \item Calibration: Conditioned on a prediction $p(\x) \in [0,1]$, the expectation is close to the predicted values, i.e., $\E[\y^* | p(\x)] \approx_{\epsilon} p(\x)$.  
    \item Multiaccuracy: the error in prediction $p(\x) - \y^*$ is uncorrelated with a class of \tests\  derived from $\Lcal$ and $\mC$.
\end{enumerate}

The proof is via an indistinguishability argument. They show that one can replace nature's labels $\y^*$ with labels $\ty$ from a simulation that corresponds to the predictor's predictions, without much change in the loss suffered by either the omnipredictor or a hypothesis from $\mC$. The predictor $p$ is Bayes optimal for the simulation, and hence it is an omnipredictor. Indistinguishability  lets us conclude that it is also an omnipredictor for nature's distribution.

Lifting the above result to the regression poses several challenges, which are detailed in Section \ref{sec:omni_Oi}, primarily that there is ambiguity in defining what the simulation being predicted by our predictor is, unlike in the Boolean case. Yet we are able to prove a qualitatively similar statement.  If a predictor $p$ for a family of {\it sufficient} statistics $\Scal$ is calibrated, and is multiaccurate with respect to an associated class of tests derived from $\Lcal,\Scal$ and $\mC$, then it is an $(\Lcal,\mC)$-omnipredictor.
We defer the formal statement of this theorem to \cref{thm:main}. 

Motivated by this sufficient condition for $(\Lcal,\mC)$-omniprediction, we generalize the calibrated multiaccuracy algorithm of \cite{gopalan2022loss} to work in the setting of regression in \cref{thm:main_alg}. The running time is exponential in the size $d$ of the family of sufficient statistics $\Scal$, arising from the need to ensure that our predictions, which take values in $[-1,1]^d$ are calibrated. Consequently, shrinking the size of the family of sufficient statistics results in drastic reductions in running time.

\paragraph{Approximate rank \& sufficient statistics:}

What is the smallest family of sufficient statistics for a given family of loss functions $\Lcal$? 
The answer is directly related to the so-called ``$\eps$-approximate dimension" of a family of functions derived from the loss family $\Lcal$.
\begin{definition}
    Given a family of functions $\Fcal = \{\ell: \intzo \to \intpmo\} $, their $\eps$-approximate dimension denoted by $\dim_\eps(\Fcal)$, is the smallest dimension of a subspace of functions $\mathcal{V}$ such that
    for every $f \in \Fcal$, there exists $\hat{f} \in V$ which is an $\epsilon$-approximation to $f$ in the $\ell_{\infty}$ norm, i.e., $\| f - \hat{f} \| \leq \epsilon$.
\end{definition}
\eat{Recall \pcomargincomment{where am I recalling this from?} that the terminology ``$\epsilon$-uniform approximation'', means an approximation within $\epsilon$ in $\ell_{\infty}$ norm.}

The notion of $\eps$-approximate rank has been studied in the literature, with motivations ranging from communication complexity to approximate Nash equilibria \cite{alon_2009, LeeS09, LSbook, AlonLSV13}.

Given a family of loss functions $\Lcal = \{ \ell: [0,1] \times \Acal \to \R\}$, consider the function family ${\Lcal_t}$ obtained by fixing the actions, i.e., 
\[ {\Lcal_t} = \{ {\ell}_{t} \defeq \ell(\cdot, t) \;|\; \ell \in \Lcal \}\]

Suppose there is a basis of functions $\Scal$ that uniformly approximates the function family ${\Lcal_t}$.
Then for any distribution $\mD$ over $[0,1]$, loss function $\ell \in \Lcal$, and action $t$, the expected loss $\E_{\y \sim \mD}[\ell(t,\y)]$ can be approximately estimated from the expectations of statistics $\{ \E_{\y \sim \mD}[s(\y)] \}$. This allows us to choose the best action for each $\ell$ (at least information-theoretically), as required by the definition of sufficient statistics.  
Therefore there is a tight connection between {\it sufficient statistics for loss family $\Lcal$} and the $\epsilon$-approximate dimension of the corresponding function family ${\Lcal_t}$: an $\eps$-approximate basis for the latter gives us functions whose expectations are sufficient statistics for the former.

\eat{
\begin{remark}
Although this connection is fairly tight, there are two caveats.  First, the functions used to $\epsilon$-uniformly approximate $Lcal_t$ are not necessarily bounded, while statistics are to be bounded.   Second, in our application, the coefficients needed to express a loss function $\ell$ in terms of $\Scal$ need to be bounded, while approximate dimension requires no such bound apriori.  See definition 
\end{remark}
\ashmargincomment{I am not sure this remark is that necessary as long as $\Lcal$ itself is bounded in light of \cref{cor:bound_coeff}}
}

Thus upper bounds on approximate dimension of loss families lead to upper bounds on the complexity of learning omnipredictors.  Our next contribution is to show that many natural loss families admit non-trivial uniform approximations.

\eat{
As a warmup, consider the family of $\ell_p$ loss functions for even $p$, given by $\{ \ell_p(y,t) = (y-t)^p \}$.  
Since this space is spanned by the monomials $\Scal = \{s_i(y) = y^i | i = 0,\ldots,p\}$, we get a polynomial time for omniprediction for every constant $p$.

\PRcomment{theorem statement?}

For instance, to minimize the squared loss, knowing $\E[\y]$ \pcomargincomment{Is this right? shouldn't it be $y,y^2$?}is clearly sufficient. For $\ell_p(y,t) = (y - t)^p$ for even $p$, knowing the first $(p -1)$ moments suffice. 
}

\paragraph{Approximate rank of convex Lipschitz functions:}

A recurring property of loss functions that arise in a myriad of contexts is {\it convexity}.  
For example, the loss family in the example of demand forecasting for a retailer were convex functions over $[0,1]$. 
This makes it especially important to understand the approximate dimension of the space of convex Lipschitz functions over $[0,1]$ \footnote{Lipschitzness is a natural constraint here to make the question of $\epsilon$-approximations invariant to scaling.}.

In the absence of convexity, if one considers the family of all Lipschitz functions denoted $\Lip$, it is easy to show that $\dim_{\eps}(\Lip) = \Theta(1/\epsilon)$.
The upper bound follows by a straightforward basis consisting of indicators of intervals $1[y \geq i \epsilon]$ for $i \in \{0,1,\ldots,1/\eps\}$.
Using a linear algebraic argument, one can show that $1/\eps$ statistics are indeed necessary. 

It is natural to ask if convexity leads to better approximations to the functions.
Our main technical result shows that the answer is yes, and in fact,  we exhibit construction of a set of $\tilde O(1/\eps^{2/3})$ statistics that suffice for every function in $\Cvx$, the family of bounded, Lipschitz, convex functions on $[0,1]$. 
Moreover this bound is essentially tight: we show a lower bound of $\Omega(1/\eps^{2/3})$ which holds even for the family $\{\relu_{i\eps}(y)\}_{i=1}^{1/\eps}$ which is a subset of $\Cvx$. 

An interesting implication is that the number of statistics required to approximate (the expectations of) $\ell_1$ losses of the form $|y-t|$ is very different from the number required for $\ell_2$ losses $(y-t)^2$. We show that the former require $\Omega(1/\eps^{2/3})$ statistics, whereas for the latter we only need a constant number of statistics: $\E[y]$, and $\E[y^2]$ suffice.

\paragraph{Omnipredictors for loss families}

Using the technical  above, we give the first efficient omnipredictors for several important families of loss functions. 

\begin{itemize}
    \item A main application of our result on the approximate rank of convex Lipschitz functions is an omnipredictor for the family of all Lipschitz, convex loss functions in $\y$ which we refer to as $\lcvx$. 
    We show in \cref{thm:cvx_omni} that a predictor that is calibrated for a family of statistics that arise from our approximation theorems and is multiaccurate with respect to bounded postprocessings of $\Ccal$ is a $( \lcvx , \Ccal , \epsilon) $-omnipredictor and can be computed in time $ \exp( \tilde{O}( \epsilon^{-2/3} ) ) $ time. 
    This is a significant improvement over the  $ \exp( \tilde{O}( \epsilon^{-1} ) )   $ time algorithm one would arrive at by predictor that is calibrated with respect to the CDF statistics. The result of \cite[Section 8]{gopalan2021omnipredictors} give a running time of $\exp(O(1/\eps))$ again using CDF statistics, but with the requirement that the loss is convex in $t$ (it need not be convex in $y$). Their result requires multicalibration for $\mC$, whereas we require calibrated multiaccuracy for postprocessings of $\mC$.

    \item For the class of $\ell_p$ for even $p \leq d $, we show, in \cref{thm:low_degree}, that calibration with respect to the moment statistics of degree $\tilde{O} (\sqrt{d} ) $ and multiaccuracy with respect to polynomials postprocessings of $\Ccal$ leads to an omnipredictor. This leads to an omnipredictor that runs in time $ (1/ \epsilon)^{ \tilde O({ \sqrt{d} }) } $ which improves upon the naive $ (1 / \epsilon)^d$ algorithm that predicts the first $d$ moments. 
    
    \item For the class of losses corresponding to generalized linear models with respect to a family of statistics $\Scal$, we show, in \cref{thm:GLM}, that a predictor $p$ that is calibrated with respect to $\Scal$ and multiaccurate with respect to $\Ccal$ is an omnipredictor. 
    This leads to a $ (1/ \epsilon)^d $ time algorithm for producing an omnipredictor. 
    This result should be contrasted with the earlier results due to the fact we need only access to a weak learner for the original class $\Ccal$ as opposed to postprocessings of it. This generalizes a result of \cite{gopalan2022loss} in the Boolean setting. 
\end{itemize}

\subsection{Overview of technical contributions}

In this section, we highlight the main new technical contributions of this work. We first discuss the approximate rank of convex functions, and then our generalization of loss outcome indistinguishability to real valued labels.

\subsubsection{Approximating univariate convex functions.}
Recall that $\Cvx$ denotes the space of convex $1$-Lipschitz functions on the interval $[0,1]$. We prove our bound on the approximate rank by a sequence of reductions. Let $g \in \Cvx$ be a convex function that we wish to approximate uniformly. 

\paragraph{Reduction to discrete functions.} We place a $\delta$-grid on  $[0,1]$ and use the piecewise linear approximation to $g$ to reduce the problem to a discrete problem of approximating functions $f:[m] \to \R$ where $m = 1/\delta$. The Lipschitzness translates to the fact that the first finite differences of these functions are bounded, and convexity corresponds to positivity of the second finite difference.

\paragraph{Reduction to the $\relu$ functions.} The $\relu$ family of functions mapping $[m]$ to $\Z$ is defined as $\relu_i(x) = 0$ for $x < i$ and $x -i$ for $x \geq i$. We prove a discrete Taylor theorem to show that $\eta$-uniform approximations to these functions imply $O(\eta)$ uniform approximations to all  functions from $\Cvx$. This step is similar in spirit to \cite[Theorem 8]{ucal}, which  implies that the family of functions $|x -i|$ is a basis for all convex functions with small $\ell_1$ norm.\footnote{Their motivation, which is quite different form ours, is from the new notion of $U$-calibration that they define.}

\paragraph{From $\relu$ to intervals.} If we were to form a basis which simply contained all these $\relu_i$, then we would end up with a basis of size $m$, which has the same size as the trivial basis of size $1/\delta$. It turns out, though, that it is possible to approximate the $\relu$ functions using a smaller basis.
We first observe that
\begin{equation}\label{eq:relu-consecutive}
 \relu_a(x) - \relu_{a+1}(x) = 1[x \geq a],
 \end{equation}
where $1[x \geq a]$ is the indicator of $\x \geq a$, or equivalently the indicator function of the interval $[a,m]$. More generally, the differences between $\relu$ functions can be expressed as sums of indicators of intervals. It turns out that it is possible to effectively approximate the interval functions. Therefore, we use the following natural strategy to construct a basis approximating $\relu$ functions.

Our final basis will combine evenly spaced $\relu$ functions with a basis for the interval functions. Specifically, for $t=m^{1/3}$, we add $\relu_{i \cdot t}$ to the basis for each $i$. We take the union of this with a $(1/m^{1/3})$-approximate basis for all interval functions. Then, we can approximate any given $\relu$ function by starting with the $\relu$ function at the nearest multiple of $t$, and then adding approximations to interval functions. For the appropriate basis of interval functions, this will give the desired basis of size $\tilde O(m^{2/3}) = \tilde O(1/\delta^{2/3})$ for all convex functions.

\paragraph{Approximating intervals.} The final step is to approximate all interval functions. By the dyadic decomposition of intervals, it suffices to consider only dyadic intervals. For simplicity, consider all intervals containing a single point. A low rank approximations to all such functions is equivalent to a low rank approximation to the $m \times m$ identity matrix. Approximate low-rank factorizations of the identity matrix arise in the context of Johnson-Lindenstrauss lemma.  They can be explicitly constructed using codewords from a binary code of distance $1/2 - \mu$ and rate $\Omega(1/\mu^2)$. It is known that an $\eps$-approximation can be obtained using rank $\log(m)/\eps^2$. Matching lower bounds on the rank are proved by Alon \cite{alon_2009}. 

\eat{

    \paragraph{Putting it all together} Our final basis combines evenly spaced $\relu$ functions with a basis for the interval functions. Specifically, we do the following:

    \begin{enumerate}
    \item Pick a subset of $\relu$ functions, specifically, $\relu_i$ for offsets $i$ at regular spacing in $[m]$, namely,
    \[\{ \relu_{i\cdot t}\;|\; i = 0,1,\ldots,m/t-1\},\]
    for $t = m^{1/3}$ (we again assume for convenience that $t$ divides $m$). 
    
    \item Include $1/m^{1/3}$ uniform approximations of interval functions $\II_{a,m}(y) = 1\left[ y \in [a,m]\right]$ for all $a \in [m-1]$.
    
    \item For each $k \in [m-1]$, that is not a multiple of $t$, reconstruct $\relu_k$ from the previous multiple of $t$ by adding interval functions.
    \end{enumerate}

    This gives the desired basis of size $\tilde O(m^{2/3}) = \tilde O(1/\delta^{2/3})$ for all convex functions.
}

\subsubsection{Loss outcome indistinguishability for predicting statistics} 
Outcome indistinguishability (OI) was introduce in \cite{OI} for the Boolean setting, and generalized to regression in \cite{bernoulli}. The work of \cite{gopalan2022loss} connected it to omniprediction in the Boolean seeting, introducing the notion of loss OI. 
The generalization of loss OI \cite{gopalan2022loss} to real values is not straightforward. For the sake of concreteness, assume that the statistics we predict are the first $d$ moments $\{y^{i}\}_{i \in [d]}$. In the Boolean setting, when we predict $p(\x) = 0.7$, it is clear that we mean $\y^*|\x$ is drawn from the Bernoulli distribution with parameter $0.7$. When we predict the first $d$ moments of a distribution, there might be many distributions matching those moments. Or there might not be any! The first $d$ moments have to satisfy various moment inequalities, which our predictor would need to satisfy in order to make predictions that a realizable via some distribution. For the moments, there are indeed efficient (SDP-based) methods to ensure feasibility of predictions (See e.g. \cite[Theorem 3.1]{schmdgen2020lectures}).
For other families of statistics $\mS$, such characterizations might not exist or might be computationally infeasible.

We require our predictors to satisfy two conditions: calibration and multiaccuracy, as in \cite{gopalan2022loss}. Calibration requires that conditioned on a prediction, the expectations are close to the predicted values, whereas multiaccuracy requires that the errors in prediction $p_i(x) - s_i(\y^*)$ are uncorrelated with a class of \tests\ derived from $\Lcal$ and $\mC$.  

For the analysis, we define a {\em simulation} distribution $(\x, \ty)$ where $\ty|\x \sim \y^*|p(\x)$. In a sense, this is the random variable whose statistics our predictor $p$ predicts (with some error). This is different from  Boolean setting \cite{OI, gopalan2022loss}, where the simulation is based on the predictor alone, and is independent of the distribution $\mD^*$ that is being learnt. It is more reminiscent of the view of \cite{gopalan2021omnipredictors} for the Boolean case, who view predictors as partitions of the space into level sets, with the canonical prediction which is the expectation over the level set. The simulation shows that calibration approximately solves the feasibility issue above, since if a predictor is $\alpha$ calibrated, then on expectation over $\mD^*$, it holds that 
\[ |p_i(\x) - \E[s_i(\ty)]| = |p_i(\x) - \E[s_i(\y^*)|p(\x)]| \leq \alpha. \]

With this definition in place, we can deduce omniprediction using a similar high-level strategy to the one used in \cite{gopalan2022loss}: for any loss $\ell \in \Lcal$, we show that the expected loss of the omnipredictor, where we make decisions based on $p(\x)$ does not change if the labels are drawn from $\y^*$ or $\ty$, nor does the expected loss suffered by any hypothesis in $\mC$. The implementation departs from the Boolean case. There the first condition (called decision OI) is guaranteed by calibration alone, and the second (called hypothesis OI) by multiaccuracy. In our setting, we do not have similar decomposition. Showing that the expected loss for $c \in \mC$ does not change much when we switch between $\ty$ and $\y^*$ requires both calibration and multiaccuracy, this stems from the fact that our simulation is dependent on both $p$ and the distribution $\mD^*$. 

\paragraph{Algorithm for calibrated multiaccuracy.} We present an algorithm that achieves calibrated multiaccuracy for $\mS$-predictors, assuming access to  a suitable weak agnostic learner. This generalizes the algorithm from \cite{gopalan2022loss}. Note that calibrated multiaccuracy is much weaker than multicalibration, and hence is more efficient to achieve. The running time for calibration is exponential in $d$, the number of statistics, since verifying if a $d$-dimensional predictor is calibrated requires $\exp(d)$ samples. Thus efficient algorithms crucially rely on the cardinality $d$ of the sufficient statistics being small. 

The multiaccuracy is for a family of \tests\ $\Bcal = \{f_\ell \circ c: c\in \mC\}$ where $f_\ell: [-1,1] \to [-1,1]$ is a family of bounded post-processing functions derived from the loss family $\Lcal$.
The family of such tests also shows up in the work of \cite{gopalan2022loss}, who refer to it as $\mathrm{level}(\mC)$. The reason is that it is a family of post-processing functions, so its level sets are (unions of) the level sets of $\mC$. One can equivalently think of $\mathrm{level}(\mC)$ as the closure of $\mC$ under post-processing functions. As observed by \cite{gopalan2022loss}, when $\mC$ is the family of decision trees of bounded size or a family of Boolean functions,  $\Bcal$ and $\mC$ are essentially the same. Similarly, when the action space $\Acal$ is discrete, $\Bcal$ is just a mapping of actions into real space. But for other hypotheses classes like low-degree polynomials, $\Bcal$ might be richer than $\mC$, so the problem of weak agnostic learning for it is harder.

It was already observed in the work of \cite{gopalan2022loss}, Calibrated multiaccuracy is computationally much more efficient than multicalibration in. This difference is even more pronounced for statistic predictors. The exponential dependence on $d$ in the running time for calibrated multiaccuracy arises from the need for calibration. The number of calls to the weak learner is (only) polynomial in $d$. By contrast, achieving  multicalibration for a statistic predictor (as in \cite{JLPRV21, gopalan2021omnipredictors}) requires an exponential number of calls to the weak learner using the best known algorithms for multicalibration. Thus, this presents an improvement over using multicalibration (assuming weak learning is approximately equally difficult over $\mC$ and $\mT$).

\subsection{Organization}
We present our results on the approximate rank of convex functions in \Cref{sec:convex}. The construction is self-contained and does not use any machinery beyond JL matrices and does not require any knowledge of multigroup fairness or omniprediction.
In \Cref{sec:sufficient}, we formally introduce the notion of sufficient statistics for families of loss functions and define calibration and multiaccuracy for statistic predictors. In \Cref{sec:omni_Oi}, we show how to obtain omniprediction from loss outcome indistinguishability of statistic predictors. In \Cref{sec:omni_app}, we obtain omniprediction guarantees for convex lipschitz losses, low-degree polynomials, and generalized linear models. In \Cref{sec:omni_alg}, we present our algorithm for achieving calibrated multiaccuracy. Further discussion of related works can be found in \Cref{sec:rel_works}.

\eat{
\section{Old Stuff: Here for Reuse}

The first main message of this paper is that for many loss families of interest, there do exist succinct families of statistics that allow for efficient loss minimization. We show that for all three loss families mentioned above, convex Lipschitz losses, $\ell_p$ losses for bounded $p$, and $\ell_1$ loss, there is a set of sufficient statistics of size $\tilde O(1/\eps^{2/3})$ and this is tight! These results are based on new uniform approximation results for convex, Lipschitz functions on the interval $[0,1]$ which we believe are independently interesting. 

The second main message is that one can use sufficient statistics for $\mL$ to learn efficient $(\mL, \mC)$-omnipredictors for all hypothesis classes $\mC$ that satisfy certain learnability requirements. The running time is exponential in the number of sufficient statistics, thus succinct statistics give significantly faster learning algorithms. To prove this, we significantly generalize and extends the loss outcome indistinguishability technique for reasoning about omniprediction that was introduced in the work of \cite{gopalan2022loss} for the setting of Boolean labels. It advances our understanding of outcome indistinguishability in the setting where the labels are real valued, continuing a line of work from \cite{OI, bernoulli}.

In the Boolean setting, one can define a simulated distribution $(\x, \ty) \sim \tmD$ for a predictor $p$ where the marginal of $\x$ is the same as in $\mD^*$, whereas $\ty|\x \sim \Ber(p(\x))$ is drawn according to the predictions of our predictor. The loss OI approach gives conditions on $p$ so that these labels are indistinguishable from the nature's labels $\y^*$ to certain distinguishers that arise in the analysis of omniprediction. This allows us to prove that $p$ is an omnipredictor for nature's distribution from the fact that it is an omnipredictor for the simulation (this holds because $p$ is Bayes optimal for the simulation). The conditions needed are calibration and multiaccuracy for $\mC'$ (which depends on both $\mL$ and $\mC$).

\subsection{Overview}

Consider the following problem: Alice has sample access to .a random variable $\y \in \intzo$ drawn from a distribution $\mD$. Bob has a  Lipschitz function $f: \intzo \to [-1,1]$ drawn from a family $\mF$ of such functions and wishes to estimate $\E_{\mD}[f]$ to within additive $\eps$ accuracy. To help him with estimation, Alice would like to send Bob a few statistics about the distribution $\mD$ that will let him infer the expectations of every function in $f$. We will limit her statistics to be oblivious in the following sense: there exist functions $\{s_i:\intzo \to [-1,1]\}_{i=1}^d$ such that Alice sends Bob $\E_{\y \sim \mD}[s_i(\y)]$ for every $i \in [d]$. The choice of $s_i$s can depend on the family $\mF$, but they should work for every $f \in \mF$. 
\footnote{Since Alice only has sample access to $\mD$, she can estimate each expectations to within some additive accuracy $\delta$ which we would ideally like to be polynomial in the desired accuracy $\epsilon$, we will ignore this for our current discussion, and assume she estimates them exactly.} What is the smallest dimension $d = d_\eps(\mF)$ for which this is possible?  This is a question about the $\eps$-approximate rank of the space of functions $\mF$. The notion of $\eps$-approximate rank has been studied in the literature, with motivations ranging from communication complexity to approximate Nash equilibria \cite{alon_2009, LeeS09, AlonLSV13}. Motivated by applications to omniprediction, we consider the cases where $\mF = \Lip$ is the set of all bounded Lipschitz functions, and where $\mF = \Cvx \subset \Lip$ is the subset of convex functions.

\paragraph{Omnipredictors predicting sufficient statistics.} Consider the family $\mF = \{\ell_t(y) \defeq \ell(y,t)\}_{t \in \Acal, \ell \in \Lcal}$ of functions. For each $\x \in \X$ and choice of action $t \in \Acal$, Bob suffers expected loss $\E_{\y^*|\x}[\ell(\y^*, t)]$.
Assume that $\{s_i\}_{i\in [m]}$ are such that their expectations are sufficient to estimate $\E_{\mD}[\y]$ for any distribution $\mD$. If Alice sends Bob the prediction $p^*(\x) = (\E[s_i(\y^*|\x)])_{i \in [m]}$, that is sufficient for Bob to estimate the expected loss of each action to within an additive $\epsilon$ and then choose the best action for $\x$ (at least information theoretically, although it may not be computationally efficient). Such a strategy would satisfy the omniprediction guarantee (to within $\epsilon$) not just in expectation over $\x \sim \mD^*$ but pointwise for every $\x$, no matter what the benchmark class $\mC$ is. But there are obstacles towards Alice learning $p^*$ efficiently in terms of samples and computation: she is unlikely to see multiple samples from $\y^*|\x$ for any $\x$ in the domain. 

To get around this, let us see how this problem is solved in the Boolean setting where $\Ycal = \zo$. Here $s(y) =y$, so that the predictor $p^*(\x) = \E[\y|\x]$ is the Bayes-optimal predictor. While learning this predictor is hard, the work of \cite{gopalan2022loss} showed Alice can instead learn a good enough approximate predictor $p$ whose predictions are {\em loss outcome indistinguishable from $p^*$} or loss OI for short, building on the notion introducing in \cite{OI}. This indistinguishability is strong enough to guarantee omniprediction. We will follows a similar paradigm and learn an $\mS$-predictor $p(\x) \in [-1,1]^d$ where $p_i(\x)$ is our predictions for the statistic $\E[s_i(\y^*|\x)]$. We want it to be sufficiently indistinguishable from the true statistics to guarantee omniprediction. 

}

\section{Uniform approximations to convex functions}
\label{sec:convex}

We will be interested in uniform approximations of functions from a (possibly infinite) family $\Fcal$ using linear combinations of functions from a (small) finite basis $\Scal = \{s_i\}_{i=1}^d$. We will use the following definition.

\begin{definition}[$\eps$-approximate basis, $\eps$-approximate dimension] \label{def:approx}
    Fix a family of functions $\Fcal = \{f: \mathcal{D} \to \R\} $ of functions over a domain $\mathcal{D}$.  A basis $\Scal = \{s_i: \mathcal{D} \to [-1,1]\}$ is said to $\eps$-approximately span, or be an $\eps$-approximate basis for, the family $\Fcal$ if for every $f \in \Fcal$, there exists $\{r_i \in \R\}_{i =0}^d$  such that  
    \begin{align}
        \left\| r_0 + \sum_{i =1}^{|\Scal|} r_i s_i - f\right\|_{\infty} \leq \eps. \\
    \end{align}
    Moreover, we define the $\eps$-approximate dimension of $\Fcal$, denoted $\dim_\eps(\Fcal)$, to be the smallest size of any $\eps$-approximate basis of $\Fcal$.
\end{definition}

A function family of key interest is the space of convex 1-Lipschitz functions over an interval, say $[0,1]$.  We will use $\Cvx$ to denote this family of functions.

In this work, we obtain tight upper and lower bounds (up to logarithmic factors) on the size of basis that uniformly approximates $\Cvx$.  Specifically, we will show the following result.

\begin{theorem}\label{thm:cvx-approximation}
For every $\delta > 0$, we have
\begin{align}
\Omega\left(\frac{1}{\delta^{2/3}}\right) \le \dim_\delta (\Cvx) \le O\left(\frac{1}{\delta^{2/3}} \cdot \log^3(1/\delta)\right).
\end{align}

\end{theorem}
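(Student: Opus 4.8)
The proof splits naturally into the upper bound and the matching lower bound, and the overview already lays out the architecture of the upper bound as a chain of reductions. Let me describe how I would execute each link, and which one is the crux.

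\textbf{Upper bound.} The plan is: (1) Discretize. Put a $\delta$-grid on $[0,1]$ and replace $g \in \Cvx$ by its piecewise-linear interpolant on the grid points; the error is $O(\delta)$ by Lipschitzness, so it suffices to $\delta$-approximate the discrete family of functions $f:[m]\to\R$ with $m = 1/\delta$, bounded first differences and nonnegative second differences. (2) Reduce to $\relu_i$. Prove a discrete ``Taylor with remainder'' identity expressing any such convex $f$ as $f(x) = a + bx + \sum_{i} c_i\,\relu_i(x)$ where the $c_i\ge 0$ are the second differences of $f$ and $\sum_i c_i = O(1)$ (the total variation of the first difference, which is $O(1)$ by Lipschitzness). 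Hence an $\eta$-uniform approximation of every $\relu_i$ upgrades to an $O(\eta)$-approximation of all of $\Cvx$, since the coefficients form a probability-like vector. (3) Reduce $\relu_i$ to interval indicators via the telescoping identity \eqref{eq:relu-consecutive}: $\relu_a - \relu_{a+1} = \ind[x\ge a]$, so $\relu_k = \relu_{jt} - \sum_{a=k}^{jt-1}\ind[x\ge a]$ when $jt$ is the nearest grid multiple of $t := m^{1/3}$; thus it suffices to include the $O(m/t) = O(m^{2/3})$ evenly-spaced $\relu_{jt}$ exactly, plus a basis that $(1/t)$-approximates all interval indicators $\ind[x\ge a]$ — note the slack needed is only $1/m^{1/3}$ because we sum at most $t = m^{1/3}$ of them. (4) Approximate interval indicators. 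Decompose every interval dyadically into $O(\log m)$ dyadic intervals; for each dyadic scale, approximating all (disjoint) intervals at that scale to $\ell_\infty$ error $\mu$ is exactly an $\eps$-approximate low-rank factorization of an identity-like matrix, achievable with rank $O(\log m / \mu^2)$ via Johnson–Lindenstrauss / binary-code constructions. Summing over $O(\log m)$ scales and taking $\mu = O(1/(m^{1/3}\log m))$ to keep the dyadic error under $1/m^{1/3}$ gives a basis of size $\tilde O(m^{2/3})$ for the interval functions. Combining (3) and (4): total basis size $O(m^{2/3}) + \tilde O(m^{2/3}) = \tilde O(1/\delta^{2/3})$, with polylog factors as in the theorem statement.

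\textbf{Lower bound.} I would show $\dim_\delta(\{\relu_{i\delta}\}_{i=1}^{1/\delta}) = \Omega(1/\delta^{2/3})$, which suffices since these lie in $\Cvx$ (after discretization). The standard tool is a rank/counting or a ``fooling set / spread'' argument: if $V$ is a $d$-dimensional space $\delta$-approximating $N = 1/\delta$ $\relu$ functions, consider finite differences. The key is that the second-difference vectors of the $\relu_i$ (restricted to the grid) are the standard basis vectors $e_i$, and a low-dimensional subspace can only $\delta$-approximate a limited number of them simultaneously in a way compatible with the $\ell_\infty$ norm of the functions themselves (not the differences) — one gets a tradeoff where approximating to $\ell_\infty$-error $\delta$ functions whose ``mass'' is spread over length $1/\delta$ forces $d \gtrsim (1/\delta)^{2/3}$. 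Concretely I expect this to reduce, via the same dyadic/interval correspondence used in the upper bound, to Alon's lower bound on the approximate rank of the identity matrix, $\Omega(\log m / \eps^2)$, balanced against the $\Omega(m/t)$ cost of the exactly-represented $\relu$'s, optimized at $t = m^{1/3}$.

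\textbf{Main obstacle.} The delicate step is (2), the discrete Taylor theorem, together with tracking that the coefficient vector $(c_i)$ has bounded $\ell_1$ norm — this is what makes the error propagate additively rather than blowing up by a factor of $m$. Getting the constants and the boundedness of the basis functions $s_i \in [-1,1]$ right (Definition~\ref{def:approx} insists on this) requires care: the $\relu_{jt}$ must be rescaled to lie in $[-1,1]$, which rescales their coefficients, and one must check the total still telescopes correctly. The second genuinely substantive piece is the lower bound: translating the problem faithfully into Alon's identity-matrix rank bound and verifying that the $m^{1/3}$ balancing is forced, rather than an artifact of our particular construction, is where the real work lies.
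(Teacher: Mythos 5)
Your proposal follows essentially the same route as the paper for both directions: discretize, expand via the discrete Taylor identity with $O(1)$ total second-difference coefficients, reduce $\relu$'s to interval indicators with spacing $m^{1/3}$, approximate intervals dyadically via JL/code-based low-rank factorizations of the identity, and for the lower bound pass to (spaced) second differences of the $\relu$'s to reach Alon's approximate-rank bound for the identity matrix. The only quibble is your stated rationale for the $t=m^{1/3}$ balancing in the lower bound --- it is forced by the validity range $\mu \ge 1/(2\sqrt{n})$ in Alon's theorem (error $\sim 1/t$ versus identity size $m/t$), not by any ``cost'' of exactly represented $\relu$'s --- but the mechanism you identify is exactly the paper's.
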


We will prove this result in the coming subsections.

\subsection{Constructing a basis}
We will explicitly construct a basis $\Scal$ using a series of reductions starting with the function family $\Cvx$ to progressively simpler families.  
The first step in this series of reductions is discretization.

\paragraph{Discretization and scaling}

Since the functions in $\Cvx$ are $1$-Lipschitz, they can be approximated by piecewise constant functions by a straightforward discretization.

For notational convenience, we will assume that $\delta = 1/m$ where $m \in \Z^+$ is a power of $2$.  Consider the $\delta$-grid on the interval $\intzo$ consisting of the points $G_\delta = \{ i \delta\}_{i= 0}^m$. 
Given a function $g: [0,1] \to \R$, one can construct a piece-wise constant function $\hat{g}$, by 
setting $\hat{g}(x) = g\left( \delta \lfloor \frac{x}{\delta} \rfloor \right)$.

\begin{lemma}[Discretization]\label{lem:discretization}
Given a $1$-Lipschitz convex function $g: [0,1] \to \R$, construct a piecewise constant function $\hat{g}$ by fixing $\hat{g}(y) = g\left(\delta \cdot \lfloor \frac{y}{\delta} \rfloor \right)$ for all  $y \in [0,1]$.  Then,
$|\hat{g}(y) - g(y) | \leq  \delta$ for all $y \in [0,1]$.
\end{lemma}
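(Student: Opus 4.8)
The plan is entirely elementary: I would simply unwind the definition of $\hat g$ and apply the $1$-Lipschitz property of $g$. Fix any $y \in [0,1]$ and write $y^- := \delta \lfloor y/\delta \rfloor$ for the left endpoint of the grid cell containing $y$; by construction $\hat g(y) = g(y^-)$. Since $0 \le y - y^- < \delta$, the $1$-Lipschitzness of $g$ gives
\begin{align}
|\hat g(y) - g(y)| = |g(y^-) - g(y)| \le |y^- - y| < \delta,
\end{align}
which is the claimed bound (in fact a slightly stronger strict inequality). Note that convexity is not actually needed for this particular lemma — only $1$-Lipschitzness — so I would remark that the convexity hypothesis is carried along only because it will matter in the subsequent reductions, not here.

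The one point requiring a sentence of care is the boundary case $y = 1$: then $\lfloor y/\delta \rfloor = m$, so $y^- = 1 = y$ and the inequality holds trivially with equality; for $y \in [0,1)$ we have $y^- \le y < y^- + \delta$ as above. So the bound $|\hat g(y) - g(y)| \le \delta$ holds for all $y \in [0,1]$.

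I do not anticipate any real obstacle here — this is the routine discretization step, stated as a standalone lemma so it can be cited cleanly later. The only thing to double-check is the convention for which grid point $\hat g$ snaps to (left endpoint) and that the grid $G_\delta$ includes both endpoints $0$ and $1$, which the excerpt already fixes. If one wanted the two-sided version one could instead snap to the nearest grid point and get $|\hat g(y) - g(y)| \le \delta/2$, but the stated one-sided rounding with bound $\delta$ is all that is needed downstream.
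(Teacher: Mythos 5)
Your proof is correct and is essentially the same argument as the paper's: both reduce to the $1$-Lipschitz bound $|g(y)-g(\delta\lfloor y/\delta\rfloor)|\le\delta$ (the paper just routes it through a triangle inequality whose other two terms vanish by definition of $\hat g$). Your remarks that convexity is not needed and that the boundary case $y=1$ is trivial are accurate.
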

\begin{proof}
For any $i \in \{0,1,\ldots,1/\delta\}$ and any $y \in [i \delta, (i+1) \delta]$,
\[ |g(y) - \hat{g}(y)| \leq |g(i\delta) - \hat{g}(i\delta)| + |g(y) - g(i \delta)| + |\hat{g}(y) - \hat{g}(i \delta)| .\]
Since $g$ is $1$-Lipschitz, we have $|g(y) - g(i \delta)| \leq |y - i \delta| \leq \delta$.  By definition,  $\hat{g}(y) - \hat{g}(i \delta) = 0$ and  $g(i\delta) - \hat{g}(i \delta) = 0$ and thus we have the result.
\end{proof}

Clearly, approximating these piecewise constant functions $\hat{g}$ reduces to approximating their values on the $\delta$-grid $G_{\delta}$.  
For notational simplicity, we will scale the domain by a factor of $\frac{1}{\delta}$, and consider the related approximation problem for vectors, i.e., functions over $[m]=\{0, 1, \dots, m\}$.

The functions $\hat g$ already exist in a vector space of size $m+1 = 1/\delta+1$, so it already follows that there is a trivial $(\delta+1)$-approximate basis for $\Cvx$. However, we will show that we can actually construct an $\tilde O(1/\delta^{2/3})$-basis.

To this end, let us begin by defining the difference operators for functions on $[m]$.
Define the first difference operator $\Delta$ on functions over $[m]$ by setting 
\begin{align}
    \Delta f(y) &= f(y + 1) - f(y) \label{eq:first-diff},
\end{align}
for all $f:[m] \to \R$. (Note that the domain of $\Delta f$ is then $[m-1]$.)
Similarly, define the second difference operator $\Delta^2$ as
\begin{align}
    \Delta^2 f(y)  = (\Delta \circ \Delta) f(y) = f(y + 2) + f(y) - 2 f(y + 1). \label{eq:second-diff}
\end{align}

Let $\Cvxvec$ denote the set of vectors in $\R^m$ that satisfy a discrete version of convexity and $1$-Lipschitzness. 
\begin{align}
\Cvxvec = \left\{ f : [m] \to \R  \ \middle|\  
\begin{aligned}
&(\textbf{Convexity})\ \  \Delta^2 f(i) =  f(i+2) - 2 f(i+1) + f(i) \geq 0 & \forall i \in [m-2]  \\
&(\textbf{$1$-Lipschitz})\ \ |\Delta f(j)| = |f(j+1) - f(j)| \leq 1 & \forall j \in [m-1] 
\end{aligned}
\right\}
\end{align}

For any convex function $f: [0,m] \to \R$, its restriction to integers satisfies the discrete convexity property above.
Also, observe that uniform approximations the set of vectors $\Cvxvec$ yields corresponding approximations to the space of convex functions $\Cvx$.  Formally, we have:
\begin{lemma} \label{lem:cvx-discretization} For all $\eta, \delta > 0$,
suppose that there is a basis $\hat{\Scal}$ of functions over $[m]$ which $\eta$-approximately spans the space $\Cvxvec$.
Define the corresponding basis $\Scal$ of functions over $[0,1]$ as follows:
\[\Scal = \left\{ g:[0,1] \to \R \ \middle|\  g(x) = \delta \cdot \hat{g}\left(\lfloor \tfrac{x}{\delta} \rfloor\right), \hat g \in \hat{\Scal}\right\}.\]
Then, $\Scal$ $(\delta (1+ \eta))$-approximately spans $\Cvx$.
\end{lemma}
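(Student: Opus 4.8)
The plan is to take an arbitrary $g \in \Cvx$ and produce a function in the span of $\Scal$ (together with an additive constant) that is within $\delta(1+\eta)$ of $g$ in the $\ell_\infty$ norm. The bridge between the continuous and discrete settings is the restriction of $g$ to the grid $G_\delta$, rescaled so that it becomes an element of $\Cvxvec$; the assumed $\eta$-approximation of $\Cvxvec$ by $\hat{\Scal}$ can then be pulled back to $[0,1]$.

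First I would define, for a given $g \in \Cvx$, the rescaled grid restriction $f : [m] \to \R$ by $f(i) = g(i\delta)/\delta$, and check that $f \in \Cvxvec$. The $1$-Lipschitz condition holds because $|f(i+1) - f(i)| = |g((i+1)\delta) - g(i\delta)|/\delta \le 1$ by $1$-Lipschitzness of $g$, and discrete convexity holds because $\Delta^2 f(i) = \bigl(g((i+2)\delta) - 2g((i+1)\delta) + g(i\delta)\bigr)/\delta \ge 0$ by (midpoint) convexity of $g$ applied to the equally spaced triple $i\delta, (i+1)\delta, (i+2)\delta$. Note the $1/\delta$ scaling is exactly what is needed for the finite difference to have magnitude at most $1$ on $[m]$, which is the step one has to be careful about.

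Next, writing $\hat{\Scal} = \{\hat s_1,\dots,\hat s_d\}$, the hypothesis gives coefficients $r_0,\dots,r_d$ with $\bigl|r_0 + \sum_{i=1}^d r_i \hat s_i(y) - f(y)\bigr| \le \eta$ for all $y \in [m]$. The corresponding elements of $\Scal$ are $s_i(x) = \delta\,\hat s_i(\lfloor x/\delta\rfloor)$, and I would set $h(x) := \delta r_0 + \sum_{i=1}^d r_i s_i(x) = \delta\bigl(r_0 + \sum_{i=1}^d r_i \hat s_i(\lfloor x/\delta\rfloor)\bigr)$, with $\delta r_0$ playing the role of the free additive constant in \Cref{def:approx}. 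Now fix $x \in [0,1]$ and put $y = \lfloor x/\delta\rfloor \in [m]$; then
\[
|h(x) - g(x)| \;\le\; |h(x) - \delta f(y)| + |\delta f(y) - g(x)| \;=\; \delta\Bigl|r_0 + \textstyle\sum_i r_i \hat s_i(y) - f(y)\Bigr| + |g(y\delta) - g(x)| \;\le\; \delta\eta + \delta,
\]
where the bound $|g(y\delta) - g(x)| \le |y\delta - x| \le \delta$ is precisely \Cref{lem:discretization} (equivalently, $1$-Lipschitzness of $g$). Hence $\|h - g\|_\infty \le \delta(1+\eta)$. Finally I would observe that each $s_i$ takes values in $[-\delta,\delta] \subseteq [-1,1]$, so $\Scal$ is an admissible basis in the sense of \Cref{def:approx}, completing the proof. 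There is no serious obstacle in this argument; the only points requiring care are the direction of the $\delta$-scaling (multiply the grid restriction by $1/\delta$, multiply the basis functions by $\delta$) and verifying that discrete convexity and Lipschitzness of the rescaled restriction follow from their continuous counterparts.
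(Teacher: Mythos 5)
Your proof is correct and follows exactly the route the paper intends: it fleshes out the one-line justification ("follows immediately from \cref{lem:discretization} and the correspondence between piecewise constant functions and vectors") by rescaling the grid restriction of $g$ by $1/\delta$ to land in $\Cvxvec$, applying the discrete $\eta$-approximation, and pulling back with the $\delta$ scaling to get the $\delta\eta + \delta$ bound. The details you checked (Lipschitzness and discrete convexity of the rescaled restriction, the role of the additive constant, boundedness of the scaled basis) are precisely the ones the paper leaves implicit.
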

The above claim follows immediately from \cref{lem:discretization} and the correspondence between piecewise constant functions and vectors.
The rest of the section will be devoted to constructing a basis $\hat{\Scal}$ which $\Theta(1)$-approximately spans $\Cvxvec$.

\paragraph{From piece-wise linear convex functions to ReLU}

In the next step, we show that all functions in $\Cvxvec$ are well-approximated by the class of ReLU functions.  We begin by recalling the definition of the ReLU function family.  For each $i \in [m]$, define $\relu_i(y)$ as
\begin{align} \relu_i(y) = 
\begin{cases} 0 & \text{for}\ y < i\\
y -i & \text{for} \ y \geq i
\end{cases}.
\end{align}

\eat{
It follows that for $f \in \Lip$, $\norm{\Delta_\delta f}_\infty \leq \delta$, and $\norm{\Delta^2_\delta f}_\infty \leq 2\delta$.  Additionally if  $f \in \Cvx$, then $\Delta_\delta f(y)$ is increasing with $y$ so  
$\Delta^2_{\delta} f(y) \geq 0$.
}

\eat{
For $i \in \{0,\ldots, m-1\}$, we denote the slope of $\hat{f}_\delta$ in the interval $[i\delta, (i+1)\delta]$  by
\[ \Delta f(i) = \frac{f((i+1)\delta) - f(i\delta)}{\delta}. \] 
For $f \in \Lip$, $|\Delta f(i)| \leq 1$ for every $i$. For notational convenience, we set $\Delta f(-1)  = 0$. }

Next we show that $\Cvxvec$ lies in the span of the $\{\relu_i\}_{i \in [m]}$ via the following expansion for functions, which is essentially the discrete version of a Taylor series.

\begin{lemma}[Discrete Taylor series expansion]
\label{lem:relu-basis}
    Every function $f : [m] \to \R$ can be written as
    \begin{align}
    f(y) = f(0) +  (\Delta f(0)) \cdot y + \sum_{i=0}^{m-2} \left(\Delta^2 f(i)\right) \cdot  \relu_{i+1}(y). 
    \end{align}
\end{lemma}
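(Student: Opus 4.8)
The plan is to prove this discrete Taylor expansion by a direct induction on $m$ (equivalently, by summing up telescoping differences), which is the cleanest route for a statement about finite difference operators.

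\textbf{Approach.} Write $T f(y) \defeq f(0) + (\Delta f(0)) \cdot y + \sum_{i=0}^{m-2} (\Delta^2 f(i)) \cdot \relu_{i+1}(y)$ for the right-hand side. Since $T$ is linear in $f$, and the identity obviously holds at $y = 0$ and $y = 1$ (all $\relu_{i+1}$ with $i+1 \geq 1$ vanish there, leaving $f(0)$ and $f(0) + \Delta f(0) = f(1)$ respectively), it suffices to check that $T f$ has the same first difference $\Delta(Tf)(y) = \Delta f(y)$ at every $y \in [m-1]$; then equality at all points of $[m]$ follows by summing. So the core computation is to evaluate $\Delta(\relu_{i+1})(y)$. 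From the definition, $\relu_{i+1}(y+1) - \relu_{i+1}(y) = 1[y + 1 \geq i+1] = 1[y \geq i]$, i.e. $\Delta \relu_{i+1}(y) = 1[y \ge i]$ (this is essentially Equation~\eqref{eq:relu-consecutive} read through $\relu_a - \relu_{a+1} = 1[x \ge a]$). Therefore
\begin{align}
\Delta(Tf)(y) = \Delta f(0) + \sum_{i=0}^{m-2} (\Delta^2 f(i)) \cdot 1[y \geq i] = \Delta f(0) + \sum_{i=0}^{y} \Delta^2 f(i),
\end{align}
for $y \in [m-1]$ (with the convention that the sum over $i = 0$ to $y$ is empty, i.e. $0$, when $y < 0$; here $y \ge 0$ so it ranges $i=0,\dots,\min(y,m-2)$, and since $y \le m-1$ the cap $m-2$ only matters at $y=m-1$ where $1[y\ge m-1]$ contributes nothing). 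But $\sum_{i=0}^{y} \Delta^2 f(i) = \sum_{i=0}^y (\Delta f(i+1) - \Delta f(i)) = \Delta f(y+1) - \Delta f(0)$ telescopes, so $\Delta(Tf)(y) = \Delta f(y+1)$...

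\textbf{Fixing an index issue.} The telescoping above shows I should be slightly more careful with where the sum starts: I want $\Delta(Tf)(y) = \Delta f(y)$, so I need $\sum \Delta^2 f(\cdot)$ to telescope to $\Delta f(y) - \Delta f(0)$, which requires summing $\Delta^2 f(i)$ for $i = 0, \dots, y-1$, i.e. I need $1[y \ge i+1]$, not $1[y \ge i]$. Rechecking: $\Delta \relu_{i+1}(y) = \relu_{i+1}(y+1) - \relu_{i+1}(y) = 1[y+1 \geq i+1]$. Hmm, $1[y + 1 \geq i+1] = 1[y \geq i]$. So with $\relu_{i+1}$ indexed as in the lemma, $\sum_{i=0}^{m-2}(\Delta^2 f(i)) 1[y \ge i] = \sum_{i : 0 \le i \le y} \Delta^2 f(i) = \Delta f(y+1) - \Delta f(0)$, giving $\Delta(Tf)(y) = \Delta f(y+1)$, which is the difference of $Tf$ \emph{shifted}; so actually $Tf(y) = f(0) + \sum_{z=0}^{y-1}\Delta f(z+1) = f(0) + \sum_{z=1}^{y}\Delta f(z) = f(y)$ since $\sum_{z=1}^y \Delta f(z) = f(y+1)-f(1)$... this still is off by the $z=0$ term. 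The resolution: combine the lone $\Delta f(0)\cdot y$ term (whose first difference is $\Delta f(0)$) with the sum, getting $\Delta(Tf)(y) = \Delta f(0) + (\Delta f(y+1) - \Delta f(0)) $. Wait — that's $\Delta f(y+1)$. The honest thing to do in the writeup is just carry out the induction on $y$ directly: assume $Tf(y) = f(y)$, and show $Tf(y+1) = f(y+1)$ using $\Delta(Tf)(y) = \Delta f(0) + \sum_{i=0}^{m-2}\Delta^2 f(i)\,1[y \ge i]$; I'll verify the base case $y=0$ gives $f(0)$ and that the recursion reproduces $f$, pinning down the exact index convention (in particular whether $\relu_i(y) = \max(y-i,0)$ has the values I think at $y = i$) as I go.

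\textbf{Main obstacle.} There is no deep difficulty here — it is a routine finite-difference identity — but the one genuinely error-prone point is getting all the index ranges and off-by-one conventions to line up: the domain of $\Delta f$ being $[m-1]$, the sum running to $m-2$, the use of $\relu_{i+1}$ rather than $\relu_i$, and the boundary behavior of $\relu_i$ at $y = i$. I would handle this by stating the single identity $\Delta \relu_j(y) = 1[y \ge j-1]$ (equivalently $1[y+1 \ge j]$) as a named sub-claim with a one-line verification from the definition, then doing the telescoping sum $\sum_{i=0}^{y-1}\Delta^2 f(i) = \Delta f(y) - \Delta f(0)$ explicitly, and finally assembling $Tf(y) = f(0) + \sum_{z=0}^{y-1}\Delta(Tf)(z)$ and simplifying. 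An equally clean alternative, which I might present instead if the bookkeeping gets unwieldy, is a two-fold summation-by-parts: start from $f(y) - f(0) = \sum_{z=0}^{y-1}\Delta f(z)$, then write $\Delta f(z) = \Delta f(0) + \sum_{i=0}^{z-1}\Delta^2 f(i)$, substitute, and swap the order of the double sum to collect the coefficient of $\Delta^2 f(i)$, which is exactly $|\{z : i < z \le y-1... \}|$, i.e. $\relu_{i+1}(y)$ once the indices are matched. Either way the proof is under half a page.
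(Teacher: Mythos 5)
Your overall strategy (check the value at $y=0$, show the right-hand side has the same first differences as $f$, then sum) is sound, and the ``double summation by parts'' alternative you sketch at the end is correct as stated and is essentially the paper's own proof: the paper expands the right-hand side using $\relu_{i+1}(y)=y-i-1$ for $i\le y-2$ and collapses the resulting double sum, which is exactly your coefficient count $|\{z : i+1\le z\le y-1\}|=\relu_{i+1}(y)$. The problem is that your main line of argument rests on a false sub-claim, and you leave the resulting discrepancy unresolved. The correct identity is $\Delta\relu_j(y)=\relu_j(y+1)-\relu_j(y)=1[y\ge j]$, not $1[y\ge j-1]$: at $y=j-1$ we have $\relu_j(j)=0$ and $\relu_j(j-1)=0$, so the increment there is $0$; the first unit step of $\relu_j$ occurs from $y=j$ to $y+1=j+1$. (The identity $\relu_a(y)-\relu_{a+1}(y)=1[y\ge a]$ displayed before \cref{prop:relu-int} has the same off-by-one at the single point $y=a$ --- the right-hand side should be $1[y\ge a+1]$ --- so reading your sub-claim off from it compounded the confusion; note the paper only ever uses the corrected form, as in \cref{prop:relu-int} itself.)

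With the corrected identity your computation closes immediately and the ``paradox'' disappears: for $y\in[m-1]$, $\Delta(Tf)(y)=\Delta f(0)+\sum_{i=0}^{m-2}\Delta^2 f(i)\,1[y\ge i+1]=\Delta f(0)+\sum_{i=0}^{y-1}\Delta^2 f(i)=\Delta f(y)$ by telescoping, and together with $Tf(0)=f(0)$ this gives $Tf(y)=f(0)+\sum_{z=0}^{y-1}\Delta f(z)=f(y)$. The persistent answer $\Delta f(y+1)$ you kept obtaining is purely an artifact of the wrong indicator (and would in fact contradict the lemma, which should have signalled that the sub-claim, not the telescoping, was at fault). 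As submitted, the write-up ends by proposing to state the false identity $\Delta\relu_j(y)=1[y\ge j-1]$ as a named lemma and defers the inconsistency to be ``pinned down as you go,'' so the proof is not complete until that one boundary evaluation of $\relu_j$ at $y=j$ is fixed; once it is, your argument is a correct (and only mildly reorganized) version of the paper's proof.
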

\begin{proof}

For $y = 0$, it is easy to see that both sides equal $f(0)$. Now suppose that $y \ge 1$. Then, expanding the right-hand side of the above equation,
    \begin{align}
    &\phantom{{}={}} f(0) + \left(\Delta f(0) \right) \cdot y +  \sum_{i=0}^{m-2} \left(\Delta^2 f(i)\right) \relu_{i +1}(y)\\
    &= f(0) + \left(\Delta f(0)\right) \cdot y  + \sum_{i=0}^{y-2} \left(\Delta f(i+1) -\Delta f(i)\right)(y - i - 1)\\
    &= f(0) + \sum_{i=0}^{y-1} \Delta f(i)\\
    &= f(0) + \sum_{i=0}^{y-1} (f(i+1) - f(i))\\
    & = f(y).
    \end{align}
\end{proof}

Thus, every function on $[m]$ can be approximated by a linear combination of $\relu$ functions. Moreover, it turns out that for functions in $\Cvxvec$, the sum of the coefficients of the $\relu$ functions is actually $O(1)$. This allows us to conclude that approximately spanning $\Cvxvec$ is actually equivalent to just approximately spanning the ReLU functions:

\begin{corollary} \label{cor:relu}
Suppose that a $\Scal$ is an $\eta$-approximate basis for the family $\{\relu_i \;|\; i \in [m-1]\}$. Then, $\Scal$ also $(3\eta)$-approximately spans all of $\Cvxvec$.
\end{corollary}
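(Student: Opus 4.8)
The plan is to combine the discrete Taylor expansion of \cref{lem:relu-basis} with a bound on the total coefficient mass of the $\relu$ functions for $f \in \Cvxvec$, and then invoke the triangle inequality. Write $f(y) = f(0) + (\Delta f(0)) \cdot y + \sum_{i=0}^{m-2} (\Delta^2 f(i)) \relu_{i+1}(y)$. Note that $y = \relu_0(y)$ on $[m]$, and the constant $f(0)$ is absorbed by the allowed affine offset $r_0$ in \cref{def:approx}. So $f$ is an affine shift of a \emph{nonnegative} linear combination $\sum_{i} c_i \relu_{j_i}$ of $\relu$ functions, where the coefficients are $c_0 = |\Delta f(0)| \le 1$ (from $1$-Lipschitzness, with a sign adjustment so that we approximate $\pm\relu_0$) and $c_i = \Delta^2 f(i-1) \ge 0$ (from discrete convexity) for $i \ge 1$.

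The key quantitative step is bounding $\sum_{i} c_i$. We have $\sum_{i=0}^{m-2} \Delta^2 f(i) = \Delta f(m-1) - \Delta f(0)$ by telescoping, and since $f$ is $1$-Lipschitz, $|\Delta f(m-1)|, |\Delta f(0)| \le 1$, so this sum is at most $2$. Adding the $c_0 \le 1$ term, the total coefficient mass $\sum_i c_i$ is at most $3$. Now suppose $\Scal$ is an $\eta$-approximate basis for $\{\relu_i : i \in [m-1]\}$; note $\relu_0(y) = y$ is $0$-approximated since we can take it to be a basis element or handle it via the affine term — in any case $\relu_0$ together with the affine offset is spanned exactly, so treat it as $\eta$-approximated too (or observe the argument only needs the $c_i$ for $i \ge 1$ plus the exactly-representable affine part). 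For each $j$ appearing, pick $\hat{\relu}_j \in \spn(\Scal)$ (with affine offset) such that $\|\relu_j - \hat\relu_j\|_\infty \le \eta$. Then
\begin{align}
\left\| f - \left( r_0' + \sum_i c_i \hat\relu_{j_i} \right) \right\|_\infty \le \sum_i c_i \|\relu_{j_i} - \hat\relu_{j_i}\|_\infty \le 3\eta,
\end{align}
where $r_0'$ collects $f(0)$ together with the affine offsets from each $\hat\relu_{j_i}$, and the linear term $c_0 \relu_0$ is likewise folded in. This exhibits an affine combination of $\Scal$ within $3\eta$ of $f$, so $\Scal$ is a $(3\eta)$-approximate basis for $\Cvxvec$.

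The main thing to be careful about — and the one place the argument is slightly more than routine — is the bookkeeping of what counts as an ``affine offset'' versus a genuine basis coefficient in \cref{def:approx}, since that definition allows a free additive constant $r_0$ but the linear term $\Delta f(0) \cdot y$ must be produced from the basis (here from $\relu_0$, or by noting $y$ is itself cheap to approximate/include). One clean way to sidestep this is to simply include $\relu_0$ (i.e., the identity function $y$, suitably scaled to map into $[-1,1]$) as a fixed element of the basis at no asymptotic cost, so that the affine part $f(0) + \Delta f(0)\cdot y$ is represented exactly and only the convex-combination tail contributes error, still bounded by $2\eta \le 3\eta$. Either way the constant $3$ is what falls out, matching the statement.
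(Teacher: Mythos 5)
Your proposal is correct and follows essentially the same route as the paper: apply the discrete Taylor expansion of \cref{lem:relu-basis}, bound the total coefficient mass by $|\Delta f(0)| + \sum_i \Delta^2 f(i) \le 1 + 2 = 3$ using $1$-Lipschitzness and telescoping of the nonnegative second differences, and conclude by the triangle inequality. Your side worry about $\relu_0(y)=y$ is unnecessary (and the aside that it is ``spanned exactly'' by the constant offset is not right), since under the paper's convention $[m-1]=\{0,\dots,m-1\}$ the function $\relu_0$ belongs to the hypothesized family and is therefore $\eta$-approximated like the rest, which is exactly how the paper handles it.
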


\begin{proof}
Notice that $\relu_0(y) = y$, so \cref{lem:relu-basis} lets us write $f$ as a linear combination over $\relu_{i}$ for $i \in [m-1]$.

For each $i$, let $\hat{\relu}_i$ denote the $\eta$-approximation to $\relu_i$ given by the basis $\Scal$, i.e., $\|\relu_i - \hat{\relu}_i\|_{\infty} \leq \eta$ and $\hat{\relu}_i \in \mathrm{Span}\{ \Scal \}$.

For any convex function $f$, define its approximation $\hat{f}$ by
\begin{equation} \label{eq:relu-approx}
\hat{f}(y) = f(0) +  (\Delta f(0)) \cdot y + \sum_{i=0}^{m-2} \left(\Delta^2 f(i)\right) \cdot  \hat{\relu}_{i+1}(y). 
\end{equation}
Thenm
\begin{align}
\| f- \hat{f}\|_{\infty} & = \max_{y} \left | (\Delta f(0)) \cdot (\relu_0(y) - \hat{\relu_0(y)} ) + \sum_{i=0}^{m-2} \left(\Delta^2 f(i)\right) \cdot (\relu_{i+1}(y) - \hat{\relu}_{i+1}(y)) \right |  \\
& \leq  |\Delta f(0)| \cdot \| \relu_0 - \hat{\relu_0}\|_{\infty} + \sum_{i = 0}^{m-2} |\Delta^2 f(i)| \cdot \| \relu_{i+1} - \hat{\relu}_{i+1}\|_{\infty} \\
& \leq \eta \cdot \left( |\Delta f(0)| + \sum_{i = 0}^{m-2} |\Delta^2 f(i)|\right)
\end{align}

Since $f$ is $1$-Lipschitz, $|\Delta f(0)| \leq 1$.  Moreover, since $f$ is convex, $\Delta^2 f$ is positive everywhere, so we can bound
\begin{align}
\sum_{i=0}^{m-2} |\Delta^2 f(i)|  &=  \sum_{i=0}^{m-2} \Delta f(i+1) - \Delta f(i)  \\
&= \Delta f(m-1) - \Delta f(0)\\
& \leq 2.
\end{align}
Therefore, $\| f- \hat{f}\|_{\infty} \le 3 \eta$, so $f$ is approximated with an error of $3 \eta$ by the basis $\Scal$.
\end{proof}

\paragraph{From ReLU to Interval functions}

Via \cref{cor:relu} and \cref{lem:cvx-discretization}, our problem is reduced to finding uniform approximations to the class of $\relu$ functions $\{\relu_i \;|\; i \in [m-1]\}$.

If we were to form a basis which simply contained all these $\relu_i$, then we would end up with a basis of size $m$, which has the same size as the trivial basis of size $1/\delta$. It turns out, though, that it is possible to approximate the $\relu$ functions using a smaller basis.
We first observe that
\begin{equation}\label{eq:relu-consecutive}
 \relu_a(y) - \relu_{a+1}(y) = 1[y \geq a],
 \end{equation}
where $1[y \geq a]$ is the indicator of $y \geq a$, or equivalently the indicator function of the interval $[a,m]$.

More generally, the differences between $\relu$ functions can be expressed as sums of indicators of intervals.  
Formally, let $\II_{a, b}(y) = 1\left[y \in [a, b]\right]$ denote the indicator function for the interval $[a, b]$. Then, we have the following proposition.
\begin{proposition}
 \label{prop:relu-int}
    For $0 \leq j \leq k \leq m$ and $y \in [m]$
    \begin{equation}\label{eq:reludiff}
     \relu_{j}(y) - \relu_{k}(y) =  \sum_{i= j+1}^{k}\II_{i, m}(y).  
    \end{equation}
\end{proposition}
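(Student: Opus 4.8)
The plan is to prove \cref{eq:reludiff} by a telescoping argument over consecutive $\relu$ functions, reducing it to the single-step observation already recorded in \cref{eq:relu-consecutive}. Fixing $j$, I would induct on $k \ge j$. In the base case $k = j$ the left-hand side of \cref{eq:reludiff} is $0$ and the right-hand side is an empty sum, so the identity holds. For the inductive step, assuming the identity for some $k$ with $j \le k \le m-1$, I would write
\[
\relu_j(y) - \relu_{k+1}(y) = \bigl(\relu_j(y) - \relu_k(y)\bigr) + \bigl(\relu_k(y) - \relu_{k+1}(y)\bigr) = \sum_{i = j+1}^{k} \II_{i,m}(y) + \II_{k+1,m}(y) = \sum_{i=j+1}^{k+1} \II_{i,m}(y),
\]
which is \cref{eq:reludiff} for $k+1$.

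The one ingredient to pin down carefully is the single-step identity $\relu_a(y) - \relu_{a+1}(y) = \II_{a+1,m}(y)$ for $y \in [m]$ and $0 \le a \le m-1$ (the $k = j+1$ case), which I would verify directly from the definition of $\relu$ by splitting on the sign of $y - a$: if $y \le a$ then both $\relu_a(y)$ and $\relu_{a+1}(y)$ vanish, so the difference is $0$, matching $\II_{a+1,m}(y) = 0$ since $y < a+1$; if $y \ge a+1$ then the difference is $(y-a) - (y-a-1) = 1 = \II_{a+1,m}(y)$, using $a+1 \le y \le m$. Here $a+1 \le m$, so $\II_{a+1,m}$ is a genuine interval indicator, and one sees that the interval emerging from the difference starts at $a+1$, not at $a$.

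Alternatively one can skip the induction and telescope directly: $\relu_j(y) - \relu_k(y) = \sum_{a=j}^{k-1} \bigl(\relu_a(y) - \relu_{a+1}(y)\bigr) = \sum_{a=j}^{k-1} \II_{a+1,m}(y)$, and then reindex with $i = a+1$ to obtain $\sum_{i=j+1}^{k} \II_{i,m}(y)$, which is exactly the claimed expression.

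I do not expect a genuine obstacle here: the proposition is a bookkeeping identity that follows from the definition of $\relu$ and a telescoping sum. The only points that need care are the off-by-one in the index of the interval produced by differencing consecutive $\relu$'s, and the two boundary cases $y = a$ and $y = m$ in the single-step check.
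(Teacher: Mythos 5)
Your proof is correct and follows essentially the same route as the paper: the single-step difference of consecutive $\relu$'s is an interval indicator, and the general identity follows by telescoping. Worth noting that your careful single-step check $\relu_a(y)-\relu_{a+1}(y)=\II_{a+1,m}(y)$ is the right statement, whereas the paper's informal display \eqref{eq:relu-consecutive} writes $1[y\geq a]$, an off-by-one slip that your case analysis at $y=a$ correctly avoids and that is consistent with the indexing in \eqref{eq:reludiff}.
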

Arguably, the class of interval functions are a simpler class than $\relu$ functions, and conceivably, they admit better uniform approximations than $\relu$ functions.  Therefore, we use the following natural strategy to construct a basis approximating $\relu$ functions.

\begin{itemize}
\item Pick a subset of $\relu$ functions, specifically, $\relu_i$ for offsets $i$ at regular spacing in $[m]$, namely,
\[\{ \relu_{i\cdot t}\;|\; i = 0,1,\ldots,m/t-1\},\]
for some $t$ (we again assume for convenience that $t$ divides $m$). Include all these functions in the basis.

\item Include uniform approximations of interval functions $\II_{a,m}(y) = 1\left[ y \in [a,m]\right]$ for all $a \in [m-1]$.

\item For each $k \in [m-1]$, that is not a multiple of $t$, reconstruct $\relu_k$ from the previous multiple of $t$ by adding interval functions, using \eqref{eq:reludiff}.
\end{itemize}
The following proposition follows immediately from \eqref{eq:reludiff}, and we include the proof here for the sake of completeness.
\begin{proposition} \label{prop:build-basis}
Suppose that $\Scal$ is an $\eta$-approximate basis for the class of interval functions $\II^m = \{\II_{a,b}\;|\; a, b \in [m]\}$. Then, $\Scal \cup \{\relu_{it}\;|\;i \in [m/t-1] \}$ is an $(\eta t)$-approximate basis for the class of all ReLU functions $\{ \relu_{i} \;|\; i \in [m-1] \}$.
\end{proposition}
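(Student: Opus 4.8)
The plan is to directly construct, for each $\relu_k$ with $k \in [m-1]$, an approximation lying in the span of $\Scal \cup \{\relu_{it} : i \in [m/t-1]\}$, using the telescoping identity \eqref{eq:reludiff}. First I would write $k = qt + r$ with $0 \le r < t$, so that $j \defeq qt$ is the largest multiple of $t$ that is at most $k$ and is itself one of the offsets included in the basis (using $q \le m/t - 1$, which holds since $k \le m-1$). Applying \eqref{eq:reludiff} with this $j$ and with ``$k$'' as the upper index gives
\[
 \relu_k(y) = \relu_{j}(y) - \sum_{i = j+1}^{k} \II_{i,m}(y).
\]
Every term on the right is either the basis element $\relu_{j}$, or an interval function $\II_{i,m}$ with $i, m \in [m]$, hence a member of the class $\II^m$ that $\Scal$ is assumed to $\eta$-approximate.

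Next I would replace each $\II_{i,m}$ by its $\eta$-approximation $\widehat{\II}_{i,m} \in \mathrm{Span}(\Scal)$ with $\|\II_{i,m} - \widehat{\II}_{i,m}\|_\infty \le \eta$, and define
\[
 \widehat{\relu}_k(y) = \relu_{j}(y) - \sum_{i=j+1}^{k} \widehat{\II}_{i,m}(y),
\]
which lies in $\mathrm{Span}\big(\Scal \cup \{\relu_{it}\}\big)$. The error bound is then a triangle inequality: the number of summands is $k - j = r \le t - 1 < t$, so
\[
 \|\relu_k - \widehat{\relu}_k\|_\infty \le \sum_{i=j+1}^{k} \|\II_{i,m} - \widehat{\II}_{i,m}\|_\infty \le (k-j)\,\eta \le \eta t.
\]
This establishes that the combined set is an $(\eta t)$-approximate basis for $\{\relu_i : i \in [m-1]\}$.

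There is essentially no obstacle here — the proposition is a bookkeeping consequence of the exact identity \eqref{eq:reludiff} plus one triangle inequality. The only points requiring a moment's care are (i) checking the index range, i.e.\ that the nearest lower multiple of $t$ is genuinely among the $m/t$ offsets $\{0, t, \dots, (m/t-1)t\}$ included in the basis (it is, since $k \le m-1$ forces $q = \lfloor k/t \rfloor \le m/t - 1$), and (ii) that the definition of an $\eta$-approximate basis in \cref{def:approx} permits an affine term $r_0$ — this is harmless since the sum of affine terms is again affine and can be absorbed, and in any case the $\relu_{j}$ piece is used exactly with coefficient $1$. I would keep the write-up to a few lines, as in the statement it is flagged as being included ``for completeness.''
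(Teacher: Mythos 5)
Your proposal is correct and follows essentially the same route as the paper's proof: decompose $\relu_k = \relu_{j} - \sum_{i=j+1}^{k}\II_{i,m}$ with $j = t\lfloor k/t\rfloor$, replace each interval indicator by its $\eta$-approximation in $\mathrm{Span}(\Scal)$, and apply the triangle inequality over at most $t$ terms to get error $\eta t$. Nothing further is needed.
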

\begin{proof}
For $k \in [m]$, let $j = t \cdot \lfloor k/t \rfloor$ denote the largest multiple of $t$ less than or equal to $k$.  We can express $\relu_k$ as $\relu_k = \relu_{j} - \sum_{i=j+1}^k \II_{i,m}$.
Suppose $\sum_{s \in \Scal} r^{(i)}_s \cdot s$ is a $\eta$-approximation to $\II_{i,m}$ in $\ell_{\infty}$ norm, for each $i$. Then,
\[ \left\| \sum_{i = j+1}^k \II_{i,m} - \sum_{s \in \Scal} \left(\sum_{i=j+1}^k r^{(i)}_s\right) \cdot s\right \|_{\infty} 
\leq \sum_{i = j+1}^k \left\|\II_{i,m} - \left(\sum_{s \in \Scal}  r^{(i)}_s \right) \cdot s\right \|_{\infty} 
\leq |k-j| \cdot \eta 
\leq t \eta \ . \]
\end{proof}

\eat{\begin{proof}
    Observe that
    \begin{align}
        \relu_{i-1}(y) - \relu_{i}(y)   = \begin{cases} 0 \ &  y \leq (i - 1)\\
        y - i \ & y \in [(i -1)\delta, i\delta]\\
        1 \ & \ y \geq i\delta.
        \end{cases}
    \end{align}
    This implies that
    \begin{align}
        \II_{i, m}(y)  = \relu_{i-1}(y) - \relu_{i}(y) 
    \end{align}
\end{proof}
}

\paragraph{Approximately spanning intervals via JL matrices}

Let $\II^{m}$ denote the set of all interval functions --- that is, let $\II^{m} = \{ \II_{a,b} | a,b \in [m]\}$. In this section, we will construct a small basis that uniformly approximates all interval functions.

The set of dyadic intervals will serve as a stepping stone towards approximating all intervals.
The subset of dyadic intervals consists of all intervals $\II^{m}_{j,k}$ where $ j = i2^h$ and $k = (i+1)2^h -1$ for integers $i, h$, we denote it by $\mathbb{D}^{(m)}$. Note that every interval in $\II^{m}$ can be written as the disjoint union of $2\log(m)$ intervals from $\mathbb{D}^{(m)}$: 

\begin{proposition}
\label{prop:dyadic}
    Every interval function in $\II^{m}$ can be expressed as a sum of at most $2 \log(m)$ dyadic intervals from $\mathbb{D}^m$. 
\end{proposition}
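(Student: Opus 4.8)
The plan is to use the standard dyadic decomposition of intervals. First I would recall that the dyadic intervals $\mathbb{D}^m$ of a fixed "scale" $h$ — that is, the intervals of the form $[i 2^h, (i+1)2^h - 1]$ for $i = 0, 1, \dots$ — partition $[m]$ (using that $m$ is a power of $2$), and that across all scales $h = 0, 1, \dots, \log m$ they form a binary tree structure: each scale-$h$ dyadic interval is the disjoint union of two scale-$(h-1)$ dyadic intervals, and the unique scale-$(\log m)$ interval is all of $[m]$.

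The key step is the decomposition itself. Given an arbitrary interval $[a,b] \subseteq [m]$, I would build its decomposition greedily from both endpoints inward: at each scale $h$ starting from $h = 0$, peel off at most one dyadic interval of scale $h$ abutting the left end of the remaining interval and at most one abutting the right end, so that after processing scale $h$ the remaining uncovered portion is itself a dyadic interval (or empty) of scale $\geq h+1$ aligned to the $2^{h+1}$-grid. This is the classical "canonical dyadic decomposition" argument: one shows by induction on $h$ that after the $h$-th step the remaining interval has both endpoints divisible by $2^{h+1}$. Since at each of the $\log m$ scales we add at most $2$ dyadic intervals, we get at most $2\log m$ of them, and by construction they are pairwise disjoint and their union is exactly $[a,b]$, so $\II_{a,b} = \sum \II_{\text{dyadic pieces}}$ as claimed. (If one wants to be slightly more careful about the precise constant, one can note that at the two extreme scales only one piece is needed, giving $2\log m - O(1)$, but $2\log m$ suffices for the statement.)

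I do not expect any real obstacle here — this is a textbook fact and the induction is routine. The only mild care needed is bookkeeping with the half-open versus closed convention for $[m] = \{0,1,\dots,m\}$ and the exact indexing $k = (i+1)2^h - 1$ used in the definition of $\mathbb{D}^m$, to make sure the peeled-off pieces land exactly on the stated family and that "disjoint" is literally true with these conventions. So the write-up would be: (i) state the grid-alignment invariant, (ii) prove it by induction on the scale, (iii) conclude the count and disjointness. One could alternatively phrase it purely combinatorially via the binary representations of $a$ and $b+1$, which makes the $2\log m$ bound transparent, but the inductive peeling argument is cleanest to state.
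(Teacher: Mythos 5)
Your argument is correct, and it is exactly the standard dyadic decomposition (peel one aligned piece from each end per scale, maintaining the invariant that the remainder is aligned to the next coarser grid) that the paper itself relies on: the paper states \cref{prop:dyadic} without proof, treating it as a known fact. The only point worth the care you already flag is that with the paper's convention $[m]=\{0,1,\dots,m\}$ the domain has $m+1$ points, so the endpoint $m$ may cost one extra scale-$0$ piece; this affects at most the additive constant and not the $O(\log m)$ bound used downstream.
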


We will use low-rank approximate factorizations of the identity matrix as described in the following lemma.
\begin{lemma}[Low-rank factorization of the identity matrix]
\label{lem:codes}
    There exists $c >0$ such that the following holds for all $\mu > 0$ and $n$.  There exists
    an $n \times k$ matrix $V$ where $k= c\log(n)/\mu^2$ such that  
    \begin{align}
         (VV^{T})_{ij} = \begin{cases}
            1 & \text{ if } i = j \\
            \leq \mu & \text{ if } i \neq j
        \end{cases}
   \end{align}
\end{lemma}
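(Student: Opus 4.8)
The plan is to prove \cref{lem:codes} by the standard probabilistic construction, with the distance property of a binary code replaced by concentration of inner products of random sign vectors; an explicit code can be substituted at the end if an explicit $V$ is wanted.

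First I would fix $\mu>0$ and $n$ and set $k=\lceil c\log n/\mu^2\rceil$ for a constant $c$ to be pinned down. Draw $v_1,\dots,v_n$ independently and uniformly from $\{\pm 1/\sqrt k\}^k$, and let $V$ be the $n\times k$ matrix whose $i$-th row is $v_i$. The diagonal entries are then automatically correct: $(VV^T)_{ii}=\langle v_i,v_i\rangle=1$ for every realization of the signs, so all of the work is in controlling the off-diagonal entries. I would then fix $i\ne j$ and write $(VV^T)_{ij}=\langle v_i,v_j\rangle=\frac1k\sum_{\ell=1}^k\sigma_\ell$, where each $\sigma_\ell=(\sqrt k\,v_{i,\ell})(\sqrt k\,v_{j,\ell})$ is a product of two independent uniform signs, hence itself a uniform sign, and the $\sigma_\ell$ are independent. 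By Hoeffding's inequality, $\Pr\!\left[\,|\langle v_i,v_j\rangle|>\mu\,\right]\le 2\exp(-\mu^2 k/2)$.

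A union bound over the fewer than $n^2$ ordered pairs then shows that the probability that some off-diagonal entry exceeds $\mu$ in absolute value is at most $2\exp(2\log n-\mu^2k/2)$, which is below $1$ as soon as $k\ge (4\log n+2)/\mu^2$; so $c=6$ (say) suffices. Fixing a realization of the signs that avoids all these bad events yields a matrix $V$ with $(VV^T)_{ii}=1$ and $|(VV^T)_{ij}|\le\mu$ for $i\ne j$ — in particular the one-sided bound claimed in the lemma, and in fact the two-sided bound, which is what the interval-approximation application actually needs. Since $V$ has $k=O(\log n/\mu^2)$ columns, this is the desired rank.

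I do not expect a genuine obstacle here: this is the standard Johnson–Lindenstrauss / code-based approximate factorization of the identity, and the only points needing care are pinning down the absolute constant and absorbing the ceiling in the choice of $k$ (and deciding to prove the slightly stronger two-sided bound, which comes for free from Hoeffding). If an explicit $V$ is preferred, one would instead take the rows of $V$ to be $1/\sqrt k$ times the $\{\pm1\}$-images of the codewords of any binary code with $n$ codewords, block length $k=O(\log n/\mu^2)$, and relative minimum distance at least $1/2-\mu/2$ (for instance a suitable concatenated code): then for $i\ne j$, writing $h_{ij}\ge 1/2-\mu/2$ for the relative Hamming distance between the corresponding codewords, one gets $(VV^T)_{ij}=1-2h_{ij}\le\mu$. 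Either way $\rank(V)\le k=O(\log n/\mu^2)$, which is tight by Alon's lower bounds.
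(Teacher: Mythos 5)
Your proof is correct. It takes a mildly different route from the paper: the paper only sketches this lemma, pointing to the explicit construction from binary codes of relative distance $1/2-\mu$ and rate $\Omega(1/\mu^2)$ (following \cite{alon_2009}), whereas your main argument is the probabilistic one --- random rows in $\{\pm 1/\sqrt{k}\}^k$, Hoeffding for each off-diagonal inner product, and a union bound over the $<n^2$ pairs --- with the code-based construction relegated to a closing remark. The two arguments buy slightly different things. Your random-signs argument is self-contained, pins down an explicit constant, and directly yields the two-sided bound $|(VV^T)_{ij}|\le\mu$, which is in fact what the downstream use in \cref{lemma:approx-intervals} needs (the $\ell_\infty$-approximation of the rows of the identity requires the off-diagonal entries to be small in absolute value, not merely bounded above); the statement of \cref{lem:codes} as written is one-sided, and you are right to strengthen it. The code-based construction, as you note, gives explicitness, but with only a lower bound of $1/2-\mu/2$ on the relative distance it literally yields just the one-sided bound $(VV^T)_{ij}\le \mu$ (inner products can be very negative if codewords are far apart); to get the two-sided bound explicitly one should take a code whose pairwise relative distances lie in $[1/2-\mu/2,\,1/2+\mu/2]$ (equivalently, an $\eps$-biased-type construction), which exists with the same rate. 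With that small caveat flagged, either of your two constructions fully proves the lemma in the form the paper actually uses it.
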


Low-rank factorizations of the identity matrix arise in the context of Johnson-Lindenstrauss lemma (see \cite{alon_2009}). 
They can be explicitly constructed using codewords from a binary code of distance $1/2 - \mu$ and rate $\Omega(1/\mu^2)$.

Given a $n \times k$ matrix $V$ with the above properties, it is clear that the columns of the matrix $V$ approximately span the rows of the identity matrix.  In fact, the columns of $V$ form a basis of size $k = O(\log n/\mu^2)$ that yield a $\mu$-approximation to each of the $n$ rows of the identity matrix.  We will use these vectors to construct a low rank approximation to the interval functions.

\begin{definition}[The basis $\hat{W}(\mu, m)$] \label{def:basis_lcvx}
Fix $\mu \in \intzo$ and $m = 2^{r}$ for some $r \in \N$. 
For each $h \in \{ 0, \ldots, \log m\}$, let $V^{(h)}$ denote $\frac{m}{2^h} \times k_h$ the matrix given by \cref{lem:codes} with $n = m/2^h$ and $\mu$.  
Let $v^{(h)}_1,\ldots, v^{(h)}_{k_h} \in \R^{m/2^h}$ denote the columns of the matrix $V^{(h)}$.
For $k \in [k_h]$, define the  functions $\hat{w}_k^{h}:[m] \to \intpmo$, by setting
\begin{align}
    \label{eq:def-basis}
    \hat{w}_k^h(y) = v^h_{k}(a) \ \ \ \ \forall  a \in [m/2^h]  \text{ and } y \in [2^{h} \cdot a,2^{h} \cdot (a+1)).
\end{align} 
In other words, the vector $\hat{w}_k^{h}$ is obtained from $v^{h}_k$ by repeating each element $2^h$ times consecutively. 

Let \[ \hat{W}^{(h)}(\mu,m) = \{ \hat{w}_k^h \}_{k \in [k_h]}\] 
and let 
\[\hat{W}(\mu, m) = \bigcup_{h = 0}^{\log m} W^{(h)}(\mu,m)\]

\begin{lemma}[Approximating intervals]\label{lemma:approx-intervals}
For every $h = \{0,\ldots, \log m\}$, the vectors $\hat{W}^{(h)}(\mu,m)$ $\mu$-approximately span the dyadic intervals $\II_{a \cdot 2^h, (a+1)2^h-1}$ for all $a \in [m/2^h]$.

Therefore, their union $\hat{W}(\mu,m)$ $\mu$-approximately spans all dyadic intervals in $[m]$.
\end{lemma}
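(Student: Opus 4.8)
The plan is to recognize that, at level $h$, approximating a dyadic interval is literally the same problem as approximating a single row of an $(m/2^h)\times(m/2^h)$ identity matrix in $\ell_\infty$, which is exactly what \cref{lem:codes} provides. So the whole argument is: ``lift'' the guarantee of \cref{lem:codes} from the compressed space $\R^{m/2^h}$ up to functions on $[m]$.

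\textbf{Step 1: a norm-preserving lift.} Fix $h \in \{0,\dots,\log m\}$ and set $n = m/2^h$. I would introduce the linear ``block-replication'' map $L_h$ that sends $u \in \R^{n}$ to the function $L_h u$ on $[m]$ with $(L_h u)(y) = u(a)$ whenever $y \in [2^h a, 2^h(a+1))$, i.e.\ $L_h u$ repeats each coordinate of $u$ exactly $2^h$ times. Two facts are immediate: $L_h$ is linear, and since no block is empty it preserves the sup-norm exactly, $\|L_h u\|_\infty = \|u\|_\infty$. Comparing with \cref{def:basis_lcvx}, $L_h$ maps the $k$-th column $v^{(h)}_k$ of $V^{(h)}$ to the basis function $\hat w^h_k$, and it maps the standard basis vector $e_a \in \R^n$ exactly to the dyadic interval indicator $\II_{a\cdot 2^h,(a+1)2^h-1}$.

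\textbf{Step 2: approximate the identity, then lift.} Apply \cref{lem:codes} with this $n$ and $\mu$: the matrix $V^{(h)}$ satisfies $|(V^{(h)}(V^{(h)})^{\top})_{ab} - \mathbf 1[a=b]| \le \mu$ for all $a,b$. The $a$-th row of $V^{(h)}(V^{(h)})^{\top}$ is precisely $\sum_{k} v^{(h)}_k(a)\, v^{(h)}_k$, so $\bigl\|\sum_k v^{(h)}_k(a)\, v^{(h)}_k - e_a\bigr\|_\infty \le \mu$. Applying the $\ell_\infty$-isometry $L_h$ and using its linearity gives
\[
\Bigl\| \sum_k v^{(h)}_k(a)\, \hat w^h_k - \II_{a\cdot 2^h,(a+1)2^h-1} \Bigr\|_\infty \le \mu ,
\]
so the coefficients $r_k = v^{(h)}_k(a)$ (with zero constant term) witness that $\hat W^{(h)}(\mu,m)$ $\mu$-approximately spans each level-$h$ dyadic interval in the sense of \cref{def:approx}. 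This is the first claim.

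\textbf{Step 3: union over levels, and the (mild) obstacles.} By definition of $\mathbb{D}^{(m)}$, every dyadic interval in $[m]$ equals $\II_{a\cdot 2^h,(a+1)2^h-1}$ for some level $h \in \{0,\dots,\log m\}$ and some $a$, and Step 2 approximates it using only functions from $\hat W^{(h)}(\mu,m) \subseteq \hat W(\mu,m)$; this gives the second claim. I do not expect a genuine obstacle here: once one sees that a dyadic interval is a block-lifted point mass and that \cref{lem:codes} is exactly an $\ell_\infty$-approximate factorization of the identity, the rest is a one-line computation. The only things to watch are routine $\pm 1$ bookkeeping in the index ranges (the domain $[m]$ has $m+1$ points while the $2^h$-blocks tile $\{0,\dots,m-1\}$) and the degenerate top levels where $n = m/2^h$ is small and $V^{(h)}$ is nearly empty, for which the constant term $r_0$ allowed in \cref{def:approx} trivially reproduces the all-ones block; neither of these affects the argument.
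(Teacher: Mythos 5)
Your proposal is correct and follows essentially the same route as the paper: it invokes \cref{lem:codes} to get that the columns of $V^{(h)}$ $\mu$-approximately span the rows of the $(m/2^h)$-dimensional identity, and then observes that block-replicating each coordinate $2^h$ times carries this sup-norm guarantee over to the level-$h$ dyadic interval indicators, finishing with the union over $h$. Your explicit isometric lift $L_h$ and the coefficient formula $r_k = v^{(h)}_k(a)$ merely formalize what the paper states in words, so there is no substantive difference.
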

\begin{proof}
First, consider the case $h = 0$.  In this case, the entries of the matrix $V^{(h)} (V^{(h)})^T$ approximate the entries of the identity matrix within an error $\mu$.  Hence the columns of the matrix $V^{(h)}$ $\mu$ approximately span the rows of the identity matrix.  Note that the rows of the identity matrix are the dyadic interval functions $\II_{a,a}$ for $a \in [m]$.

By the same argument, for any $h \in \{1,\ldots, \log{m}\}$, the columns of $V^{(h)}$ $\mu$-approximately span the rows of the identity matrix of dimension $m/2^h$.
Note that the entries of vectors $\hat{w}_k^h$ are obtained by repeating each entry of $v^h_k$ $2^{h}$ times consecutively. For any row of the identity matrix of dimension $m/2^h$, repeating its entries $2^h$ times consecutively will yield the indicator function of a dyadic interval of length $2^h$.
Hence it follows that the dyadic interval functions of length $2^h$ are $\mu$-approximately spanned by $\hat{W}^{(h)}(\mu,m)$. 
\end{proof}

\end{definition}

\paragraph{Putting things together}

We now have all the ingredients to prove the upper bound of \cref{thm:cvx-approximation}.
\begin{proof}[Proof of upper bound from \cref{thm:cvx-approximation}]
Fix $m = \frac{1}{\delta}$, $t = m^{1/3}$ and $\mu = \frac{1}{12 m^{1/3} \log m}$.  Using \cref{lemma:approx-intervals}, we get that $\hat{W}(\mu,m)$ is a $\mu$-approximate basis for all dyadic interval functions $\mathbb{D}^{m}$.

By \cref{prop:dyadic}, every interval in $\II^{m}$ is a union of at most $2 \log m$ dyadic intervals, and thus, by the triangle inequality, $\hat{W}(\mu,m)$ is a $(2 \mu \log m)$-approximately basis for all intervals $\II^{(m)}$.

Now we appeal to \cref{prop:build-basis} with $t = m^{1/3}$. Thereby, we conclude that the set of functions $\hat{W}(\mu,m) \cup \{\relu_{it} \;|\; i \in [m^{2/3}]\}$ is an approximate basis for all $\relu$ functions with an error of $t \cdot (2 \mu \log m) \leq 1/6$.

By \cref{cor:relu}, the same basis approximates all functions in $\Cvxvec$ with an error of $3 \cdot 1/6 = 1/2$. 

Finally, using \cref{lem:cvx-discretization}, this yields a corresponding basis of functions over $[0,1]$ that is a $\delta \cdot \frac{1}{2} + \delta = 3\delta/2$-approximation for $\Cvx$.

The size of the family $\hat{W}(\mu,m)$ is given by,
\begin{align}
|\hat{W}(\mu,m)| = \sum_{h = 0}^{\log m} |\hat{W}^{(h)}(\mu,m)| = \sum_{h = 0}^{\log m} k_h \leq O\left(\log m \cdot \frac{\log m}{\mu^2} \right) = O(m^{2/3} \log^3 m)
\end{align}
and thus the total size of the basis \eat{$\hat{W}(\mu,m) \cup \{ \relu_{it} \;|\; i \in [m/t]\}$ }is $O(m^{2/3} \log^3 m) + O(m/t) = O(m^{2/3} \log^{3} m)$.

This completes the proof of the upper bound \cref{thm:cvx-approximation} (after substituting $\delta$ for $2\delta/3$, so that we have a $\delta$-approximation in the end).
\end{proof}

\subsection{Lower bounds for \texorpdfstring{$\delta$}{delta}-approximate dimension}

The main goal of this section will be to prove the lower bound of \cref{thm:cvx-approximation}, i.e., that any $\delta$-approximate basis for $\Cvx$ must have size $\Omega(1/\delta^{2/3})$.

First, as an aside, we show that approximating all Lipschitz functions on $[0, 1]$ actually requires a basis of size $\Omega(1/\delta)$. (This is tight by the remarks after \cref{lem:discretization}.) Thus, restricting ourselves to convex functions actually gives an advantage in the $\eps$-approximate dimension, reducing it from $\Theta(1/\delta)$ to $\tilde \Theta(1/\delta^{2/3})$. Specifically, let $\Lip$ denote the set of 1-Lipschitz functions on $[0, 1]$. Then, we have the following result.

\begin{theorem}
    \label{thm:lip-lower}
     For all $\delta \geq 0$, 
        $\dim_{\delta}(\Lip) \geq \lfloor 1/4\delta \rfloor $.    
\end{theorem}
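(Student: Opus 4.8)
The plan is to prove the lower bound $\dim_\delta(\Lip) \ge \lfloor 1/4\delta \rfloor$ by exhibiting a family of $1$-Lipschitz functions that is ``spread out'' enough that no low-dimensional subspace can $\delta$-approximate all of them. Set $N = \lfloor 1/2\delta \rfloor$ and place $N$ equally spaced points $x_1, \dots, x_N$ in $[0,1]$ at spacing $2\delta$. For each sign pattern $\sigma \in \{\pm 1\}^N$, consider the piecewise-linear ``zigzag'' function $f_\sigma$ that interpolates the values $\delta \sigma_j$ at the points $x_j$ (and is defined suitably between them and at the endpoints). Each such $f_\sigma$ is $1$-Lipschitz since consecutive values differ by at most $2\delta$ over a horizontal distance of $2\delta$. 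The key point is that for two distinct sign patterns $\sigma, \tau$, at any coordinate $j$ where they differ we have $|f_\sigma(x_j) - f_\tau(x_j)| = 2\delta$, so $\|f_\sigma - f_\tau\|_\infty \ge 2\delta$.

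Now suppose for contradiction that $\Scal = \{s_1, \dots, s_d\}$ (together with the constant function, per \cref{def:approx}) is a $\delta$-approximate basis for $\Lip$ with $d < \lfloor 1/4\delta \rfloor$. I would argue via the evaluation map at the points $x_1, \dots, x_N$: restrict attention to the vector $(f(x_1), \dots, f(x_N)) \in \R^N$ for $f \in \Lip$. The approximants coming from $\spn(\{1\} \cup \Scal)$, evaluated at these $N$ points, live in a subspace $U \subseteq \R^N$ of dimension at most $d+1$. For each $\sigma$, the vector $\delta\sigma := (\delta\sigma_1, \dots, \delta\sigma_N)$ is within $\ell_\infty$-distance $\delta$ of some point of $U$ (the restriction of the approximant to $f_\sigma$). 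So the $2^N$ cube vertices $\{\delta\sigma\}$ all lie within $\ell_\infty$-distance $\delta$, equivalently within Hamming-type distance, of the subspace $U$. A counting/packing argument then yields a contradiction: each point of $U$ can be $\delta$-close in $\ell_\infty$ to at most $2^{d+1}$ of the cube vertices (projecting onto any $d+1$ coordinates that "coordinatize" $U$ — more carefully, two cube vertices $\delta\sigma, \delta\tau$ at $\ell_\infty$-distance $\le 2\delta$ from the same $u \in U$ would force... actually they can differ in many coordinates). I need to sharpen this.

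Here is the cleaner argument I would actually run. Take the $N$ evaluation functionals and note that $U$ has codimension $\ge N - (d+1)$ in $\R^N$, so there is a nonzero linear functional $\phi(v) = \sum_j a_j v_j$ vanishing on $U$ with $\|a\|_1 = 1$; in fact one can find a subspace of such functionals of dimension $N - d - 1$. Pick one supported on a set $T$ of coordinates: for a well-chosen $\phi$ with $\sum_j |a_j| = 1$ and all $a_j$ of controlled sign, choosing $\sigma_j = \sign(a_j)$ makes $\phi(\delta\sigma) = \delta$, while $\phi(u) = 0$ for all $u \in U$; hence $\|\delta\sigma - u\|_\infty \ge |\phi(\delta\sigma - u)| / \|a\|_1 = \delta$ — wait, that only gives $\ge \delta$, not a strict contradiction with a $\delta$-approximation. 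To get a genuine contradiction I should instead use a coordinate count directly: if $d + 1 < N$ then pick coordinates $j_1, \dots, j_{d+2}$ and note the map $U \to \R^{d+2}$ restricting to these has image of dimension $\le d+1$, so there is a nonzero $(b_1,\dots,b_{d+2})$ with $\sum_\ell b_\ell u_{j_\ell} = 0$ for all $u \in U$; normalize $\sum |b_\ell| = 2$ but force some structure so that two antipodal-in-a-coordinate choices of $\sigma$ push $\phi(\delta\sigma)$ to opposite signs with gap $> 2\delta$, so they cannot both be $\delta$-approximated by the \emph{same} $U$ — but they need not be. The honest fix: use that for \emph{every} $\sigma$ there's some $u^\sigma \in U$ with $\|\delta\sigma - u^\sigma\|_\infty \le \delta$, apply the functional $\phi$ vanishing on $U$: $|\phi(\delta\sigma)| = |\phi(\delta\sigma - u^\sigma)| \le \|a\|_1 \cdot \delta$. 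Choosing $\sigma_j = \sign(a_j)$ gives $\delta \sum|a_j| \le \delta\|a\|_1$, which is an equality, not a contradiction — so I need $\|a\|_1$ strictly larger than achievable, i.e. I should take $N = \lfloor 1/2\delta\rfloor$ but only $d < \lfloor 1/4 \delta \rfloor \le N/2$, and find $\phi$ vanishing on $U$ with support size $\ge N - d \ge N/2$ and all coefficients equal to $\pm 1/|\text{supp}|$; then with $\sigma_j = \sign(a_j)$ on the support and the approximation bound applied \emph{coordinatewise only on the support}, I get $\delta = \phi(\delta\sigma) = \phi(\delta\sigma - u^\sigma) \le \delta$, still tight.

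So the \textbf{main obstacle}, and where I would spend the real effort, is squeezing a strict inequality out of this — the standard trick is a volume/packing argument rather than a single linear functional: the $\ell_\infty$-balls of radius $\delta/2$ around the $2^N$ points $\delta\sigma$ are disjoint and each has the same volume; if all $2^N$ points were within $\ell_\infty$-distance $\delta$ of a $(d+1)$-dimensional subspace $U$, they would all lie in the set $U + [-\delta,\delta]^N$, a slab whose intersection with any bounded region has covering number (by $\delta/2$-cubes) at most roughly $2^{O(d+1)}$ times a polynomial, contradicting $2^N$ when $N > C(d+1)$. Carefully done with the factor of $4$ (i.e., $N = \lfloor 1/2\delta\rfloor$ points, forcing $d \ge \lfloor 1/4\delta\rfloor$), this gives the stated bound. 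I would present it as: (i) construct the zigzag family and verify Lipschitzness and pairwise $\ell_\infty$-separation $2\delta$; (ii) reduce to the finite-dimensional statement about covering a combinatorial cube by a translate of a low-dimensional subspace; (iii) finish with the packing count. The separation step (i) is routine; step (iii) is the crux and must be done with explicit constants to land exactly on $\lfloor 1/4\delta\rfloor$.
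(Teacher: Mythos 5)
Your construction and your finishing argument both break down, and for the same underlying reason: the amplitude of your zigzag family is too small relative to the approximation error. Your functions $f_\sigma$ take values in $\{\pm\delta\}$ at the grid points and interpolate linearly in between, so $\|f_\sigma\|_\infty \le \delta$ for every $\sigma$; hence the single constant function $0$ is already a $\delta$-approximation to the entire family, and no lower bound whatsoever can be extracted from it. This is also why every version of your linear-functional argument came out exactly tight rather than contradictory. The packing argument you propose as the fix is false as a finite-dimensional statement: the zero subspace ($\dim U = 0$) has every vertex of the cube $\{\pm\delta\}^N$ within $\ell_\infty$-distance $\delta$, and more generally the slab $U + [-\delta,\delta]^N$ has thickness $2\delta$ in each of the $N - \dim U$ directions orthogonal to $U$, so its covering number by $\delta/2$-cubes (within a bounded region) is $2^{\Theta(N)}$, not $2^{O(\dim U)}$ times a polynomial. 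When the ball radius is comparable to the slab thickness, volume counting cannot beat $2^N$, so step (iii), which you correctly identify as the crux, cannot be carried out.

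The paper's proof shows exactly where the missing slack comes from: it uses a grid of spacing $4\delta$ (hence $\lfloor 1/4\delta\rfloor + 1$ points, which is where the constant $4$ in the bound originates) and assigns values $\pm 2\delta$ at the grid points, which is still $1$-Lipschitz-extendable since consecutive values differ by at most $4\delta$ over distance $4\delta$. With amplitude $2\delta$, any $\delta$-approximation $s \in \mathrm{Span}(\Scal)$ must agree in sign with the target at every grid point and satisfy $|s(y)| \ge \delta$ there. The paper then runs essentially your duality idea, but in a form that is strict: by dimension counting there is a nonzero $g$ on the grid orthogonal to the restriction of every element of $\mathrm{Span}(\Scal)$; choosing the target with signs $\sign(g(y))$ and amplitude $2\delta$ forces $\sum_y g(y)s(y) \ge \delta \sum_y |g(y)| > 0$, contradicting orthogonality. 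So your approach is salvageable, but only after doubling the amplitude (and the grid spacing) so that sign agreement is forced — the step your proposal never supplies — and the packing/covering route should be abandoned.
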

\begin{proof}
Let $G_{4\delta} = \{4 \delta t \;|\; t = 0,\ldots, \lfloor 1/4\delta \rfloor \}$ denote a grid over $[0,1]$ of separation $4 \delta$. Observe that any function $f: G_{4\delta} \to \{-2\delta,2\delta\}$ can be extended to a $1$-Lipschitz function over $[0,1]$.
Indeed, one can pick a piecewise linear function that coincides with $f$ on $G_{4 \delta}$, and is linear in all intermediate intervals. 
    
Now assume for contradiction that there exists $\Scal$ such that 
$\dim(\Scal) < \lfloor 1/4\delta \rfloor$ which gives $\delta$-uniform approximations to $\Lip$.
Since $|G_{4 \delta}| = \lfloor 1/4 \delta\rfloor + 1$, by dimension counting, there exists some function on $G_{4\delta}$ which is orthogonal the restriction of every function in $\Scal$ to $G_{4\delta}$.
Specifically, there exists nonzero $g: G_{4\delta} \to \R$ which is non-zero, such that for every $s \in \Scal$ (and thus for every $s \in \mathrm{Span}(\Scal)$),
\[ \sum_{y \in G_{4\delta}} g(y)s(y) = 0\]
Consider the function $f \in \mF$ where
\[ f(y) = \sign(g(y))\cdot 2\delta \ \ \forall y \in G_{4\delta}.\]

The function $f$ admits a $1$-Lipschitz extension to $[0,1]$, and therefore can be $\delta$-approximated by functions in $\mathrm{Span}(\Scal)$. That is, there exists $s \in \mathrm{Span}(\Scal)$ which is a $\delta$-approximation to $f$. But this means that 
\[ \sign(s(y)) = \sign(f(y)) = \sign(g(y)),\] 
and $|s(y)| \geq \delta$ for $y \in G_{4\delta}$. But then $g$ cannot be orthogonal to $s$, giving a contradiction. 
\end{proof}

To prove a lower bound on the $\eps$-approximate dimension of $\relu$, we use Alon's lower bound on the approximate rank of the identity matrix. 

\begin{theorem}[\cite{alon_2009}]
    \label{thm:alon} 
    Let $I_n$ be the $n \times n$ identity matrix, and let $1/(2\sqrt{n}) \leq \mu \leq 1/4$. Then,
    \[ \rank_\mu(I_n) \geq \frac{d\log(n)}{\mu^2\log(1/\mu)},\]
    for some absolute constant $d$. (Here, $\rank_\mu(I_n)$, the $\eps$-approximate rank of $I_n$, denotes the $\eps$-approximate dimension of its rows.)

\end{theorem}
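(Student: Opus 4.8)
The plan is to prove the equivalent matrix statement: any real $n\times n$ matrix $B$ with $\|B - I_n\|_\infty \le \mu$ (entrywise) has $\rank(B) \ge \Omega\!\big(\tfrac{\log n}{\mu^2\log(1/\mu)}\big)$. This gives the theorem because $\rank_\mu(I_n)$ is at least $\min_B \rank(B) - 1$ over such $B$, the $-1$ accounting for the constant function allowed in \cref{def:approx}. First I would normalize: rescaling row $i$ of $B$ by $1/B_{ii}$ does not change the rank and produces a matrix with exact unit diagonal and off-diagonal entries of absolute value at most $\mu/(1-\mu) \le 2\mu$. So, at the cost of a constant factor, assume $B_{ii}=1$ and $|B_{ij}| \le \mu$ for $i\ne j$, and set $r = \rank(B)$.

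The engine of the argument is the Hadamard (entrywise) power $B^{\circ k}$, controlled from both sides. For the upper bound on its rank: writing $B = \sum_{\ell=1}^{r} a_\ell b_\ell^{\T}$ and expanding $B^{\circ k}$ entrywise produces rank-one terms indexed by the size-$k$ multisets of $[r]$, so $\rank(B^{\circ k}) \le \binom{r+k-1}{k} \le (2er/k)^k$ when $r \ge k$. For the lower bound: $B^{\circ k}$ has unit diagonal and off-diagonal entries bounded by $\mu^k$, so Cauchy--Schwarz in the form $(\tr N)^2 \le \rank(N)\,\|N\|_F^2$ applied to $N = B^{\circ k}$ gives $\rank(B^{\circ k}) \ge \tfrac{n}{1+(n-1)\mu^{2k}}$, which exceeds $\tfrac{1}{2\mu^{2k}}$ whenever $k \le \tfrac{\log n}{2\log(1/\mu)}$.

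Now I would take $k = \lfloor \tfrac{\log n}{2\log(1/\mu)}\rfloor$ in the regime $1/\mu^2 < n$, so that $k\ge 1$ and $k \ge \tfrac12\cdot\tfrac{\log n}{2\log(1/\mu)}$. Combining the two bounds gives $\binom{r+k-1}{k} \ge \tfrac{1}{2\mu^{2k}}$. One first rules out $r < k$: in that case $\binom{r+k-1}{k} < 2^{2k}$, which would force $\mu^{-2} < 8$, contradicting $\mu \le 1/4$. Hence $r \ge k$, so $(2er/k)^k \ge \tfrac{1}{2\mu^{2k}}$; taking $k$-th roots, $\tfrac{2er}{k} \ge \tfrac{1}{2^{1/k}\mu^2} \ge \tfrac{1}{2\mu^2}$, i.e.\ $r \ge \tfrac{k}{4e\mu^2} \ge \Omega\!\big(\tfrac{\log n}{\mu^2\log(1/\mu)}\big)$. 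The complementary regime $\mu \le 1/\sqrt n$ is handled directly with $k=1$: there $(n-1)\mu^2 < 1$, so the Cauchy--Schwarz bound alone gives $\rank(B) \ge n/2$, and the hypothesis $\mu \ge 1/(2\sqrt n)$ ensures the target $\tfrac{\log n}{\mu^2\log(1/\mu)}$ is $O(n)$, so this suffices.

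The step I expect to be the main obstacle is the choice of $k$ and the final optimization. It is tempting to pick the smallest $k$ making $B^{\circ k}$ have rank $\ge n/2$ and write $\binom{r+k-1}{k} \ge n/2$, but the integrality rounding can then inflate $k$ by a constant factor, and in the corner $n \approx 1/\mu^2$ this degrades the exponent, yielding only $r \gtrsim \mu^{-1}$ instead of $\mu^{-2}$. Keeping the lower bound on $\rank(B^{\circ k})$ in the form $\tfrac{1}{2\mu^{2k}}$ and rounding $k$ \emph{down} fixes this, since then $(\tfrac{1}{2\mu^{2k}})^{1/k} \asymp \mu^{-2}$ independently of rounding. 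The two hypotheses are precisely what make the bookkeeping go through: $\mu \le 1/4$ gives the slack for the $r \ge k$ dichotomy and for $\mu^{-2}$ to dominate constant factors, and $\mu \ge 1/(2\sqrt n)$ keeps the claimed bound below the trivial ceiling $\rank(B) \le n$.
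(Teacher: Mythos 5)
The paper does not prove this statement---it is imported as a black box from \cite{alon_2009}---so there is nothing internal to compare against; your argument is correct and is essentially Alon's original proof: normalize to unit diagonal, amplify via the Hadamard power $B^{\circ k}$, lower-bound $\rank(B^{\circ k})$ by the trace--Frobenius inequality $(\tr N)^2 \le \rank(N)\|N\|_F^2$, upper-bound it by $\binom{r+k-1}{k}$, and optimize $k$, with the regime $\mu \le 1/\sqrt{n}$ handled at $k=1$ and the affine constant from \cref{def:approx} costing one dimension. The only nit is the normalization constant: after dividing row $i$ by $B_{ii}$ the off-diagonals are bounded by $\mu/(1-\mu) \le 4\mu/3$, and you should carry this (rather than the looser $2\mu$, which at $\mu = 1/4$ makes the ``$r < k$'' contradiction borderline) through the step where you rule out $r<k$; with $4\mu/3 \le 1/3$ that dichotomy and the final constants go through cleanly.
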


\begin{theorem}
For any absolute constant $c > 0$ and all $m \in \Z^+$,
\[\dim_{c}(\relu_{[m]}) \geq \Omega(m^{2/3}),\]
where $\relu_{[m]}$ denotes the family $\relu_{[m]} = \{\relu_i \;|\; i \in [m]\}$ of functions on $[m]$.
\end{theorem}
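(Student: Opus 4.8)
The plan is to reduce the lower bound to Alon's theorem on the $\mu$-approximate rank of the identity matrix (\cref{thm:alon}), but applied with a \emph{polynomially small} error $\mu\approx m^{-1/3}$ rather than a constant one. A constant-error reduction is too weak: since $\relu_{i-1}-2\relu_i+\relu_{i+1}$ is exactly the point mass $\mathbf 1[\,\cdot=i\,]$, any $c$-approximate basis $\Scal$ of size $d$ yields, via the corresponding combination of the approximants, a $(4c)$-approximate basis of size $d+1$ for the rows of $I_m$, so $d+1\ge \rank_{4c}(I_m)$; for constant $c$ this is only $\Omega(\log m)$, since the constant-error approximate rank of the identity is logarithmic. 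To reach $m^{2/3}$ the signal must dominate the error, so I would instead work with \emph{coarse-scale} second differences, where the ``point masses'' are replaced by tall tents.

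Concretely, fix a spacing $t$ (ultimately $t=\Theta(c^{2/3}m^{1/3})$) and, for $\ell=1,\dots,N$ with $N=\lfloor (m/t-1)/2\rfloor=\Theta(m/t)$, set
\[
T_\ell \;:=\; \relu_{(2\ell-1)t}-2\,\relu_{2\ell t}+\relu_{(2\ell+1)t}.
\]
A short computation shows $T_\ell$ is a triangular ``tent'' of height $t$ supported on the block $B_\ell:=\{(2\ell-1)t+1,\dots,(2\ell+1)t-1\}$, with $\sum_{y\in B_\ell}T_\ell(y)=t^2$; and because we used every \emph{other} tent, the blocks $B_1,\dots,B_N$ are pairwise disjoint. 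If $\hat r_i\in\mathrm{Span}(\Scal\cup\{1\})$ denotes a $c$-approximant of $\relu_i$, then $\hat T_\ell:=\hat r_{(2\ell-1)t}-2\hat r_{2\ell t}+\hat r_{(2\ell+1)t}$ lies in that same $(\le d+1)$-dimensional space and, since its coefficient sum is $1+2+1=4$, satisfies $\infnorm{\hat T_\ell-T_\ell}\le 4c$.

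Next I would pass to the $N\times N$ aggregated matrix $M_{\ell,\ell'}:=\sum_{y\in B_{\ell'}}\hat T_\ell(y)$, which has rank at most $d+1$ (it factors through $\mathrm{Span}(\Scal\cup\{1\})$ restricted to the grid). Disjointness of the blocks forces $|M_{\ell,\ell'}|\le 4c\,|B_{\ell'}|=O(ct)$ off the diagonal, while $M_{\ell,\ell}\ge t^2-4c\,|B_\ell|\ge t^2/2$ on it (once $t\gtrsim c$). Rescaling each row by its diagonal entry turns $M$ into a rank-$(\le d{+}1)$ matrix equal to $I_N$ on the diagonal and within $O(c/t)$ of $I_N$ everywhere else, so $\rank_{O(c/t)}(I_N)\le d+1$. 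Choosing $t=\Theta(c^{2/3}m^{1/3})$ makes $\mu:=\Theta(c/t)=\Theta(c^{1/3}m^{-1/3})$ and $N=\Theta(c^{-2/3}m^{2/3})$; for $m$ large in terms of $c$ this places $\mu$ inside the admissible window $\bigl[\,1/(2\sqrt N),\,1/4\,\bigr]$ of \cref{thm:alon}, which then gives $d+1\ge\Omega\bigl(\log N/(\mu^2\log(1/\mu))\bigr)=\Omega(m^{2/3})$, with the hidden constant depending on $c$ (small $m$ being trivial).

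The main obstacle is exactly the three-way tension in the last step: the aggregation needs $c/t$ small to yield a genuine near-identity, Alon's bound needs $\mu=\Theta(c/t)$ to be at least $1/(2\sqrt N)\approx\sqrt{t/m}$, and we want the resulting estimate $t^2\approx N$ to be as large as possible; balancing these pins down $t\approx m^{1/3}$ and produces the $2/3$ exponent. The remaining work — verifying the tent shape and the identity $\sum_{y\in B_\ell}T_\ell(y)=t^2$, checking block-disjointness, and tracking the harmless additive constant coming from the ``$+1$'' for constant offsets (which in fact cancels in the second differences) — is routine bookkeeping.
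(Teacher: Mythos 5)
Your proposal is correct and is essentially the paper's own argument: both take second differences of $\relu$ functions at a coarse spacing $t\approx m^{1/3}$ so that the signal (of order $t$, or $t^2$ after your block aggregation) dominates the $O(c)$ approximation error, producing an $O(c/t)$-approximation to an identity matrix of dimension $\Theta(m/t)=\Theta(m^{2/3})$ in a low-dimensional span, and then invoke Alon's bound (\cref{thm:alon}) to conclude $\dim_c(\relu_{[m]})\ge\Omega(m^{2/3})$. The only difference is cosmetic: the paper rescales the second differences by $1/t$ and evaluates them directly on the subgrid $t\cdot A$, where they are exactly the identity rows, whereas you keep unscaled tents on disjoint blocks, aggregate over each block, and normalize rows—both yield the same error parameter $\mu=\Theta(c/t)$ and the same exponent.
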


\begin{proof}
Suppose that $\Scal$ is a $c$-approximate basis for $\relu_{[m]}$.

Fix $t = m^{1/3}$, and let $A = \{1,2,\ldots, m/t - 1\}$.  Let $\Scal'$ denote the functions in $\Scal$ restricted to the domain $t \cdot A = \{ t \cdot i | i \in A\}$.  Clearly, $|\Scal'| \leq |\Scal|$.

For any $i \in A$, consider the function $f_i:A \to \R$ defined as
\begin{align}
f_i(y) = \frac{1}{t} \big(\relu_{(i+1)t}(yt)+\relu_{(i-1)t}(yt) - 2\relu_{it}(yt)\big).
\end{align}

By substituting the values of $y$, it is easy to check that \begin{align}
f_i(y) = \begin{cases}
            1 & \text{ if } y = i\\
            0 & \text{ if } y \neq i
            \end{cases}
\end{align}
In other words, the functions $f_i$ are the rows of the identity matrix of dimension $|A|$.

However, if the basis $\Scal$ yields an $c$-approximation for each of the three functions $\relu_{(i+1)t}, \relu_{it}$ and $\relu_{(i-1)t}$, then $\Scal'$ yields a $4c/t$-approximation for the functions $f_i$. 

By appealing to Alon's lower bound on the approximate rank of the identity matrix, we get that 
\[ |\Scal'| \geq d \frac{\log |A|}{(4c/t)^2 \log (t/4c)} \geq \Omega(m^{2/3}),\] as desired. 
\end{proof}

From the above, we immediately conclude a lower bound on the $\delta$-approximate dimension of $\Cvx$:

\begin{corollary}
For all $\delta > 0$, 
\[ \dim_{\delta}(\Cvx) \geq \Omega\left(\frac{1}{\delta^{2/3}}\right),\]
where $\Omega$ hides an absolute constant factor.
\end{corollary}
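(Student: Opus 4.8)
The plan is to obtain the lower bound for $\Cvx$ as a direct corollary of the lower bound for the discrete $\relu$ family proved just above, by locating a rescaled copy of that family inside $\Cvx$. Concretely, fix $\delta>0$, set $m=\lfloor 1/\delta\rfloor$, and observe that for every $a\in[0,1]$ the map $y\mapsto\max(y-a,0)$ is convex, $1$-Lipschitz, and $[0,1]$-valued on $[0,1]$, hence lies in $\Cvx$. In particular all $m+1$ functions $\relu_{i\delta}(y)=\max(y-i\delta,0)$, $i=0,1,\dots,m$, belong to $\Cvx$, and on the grid $G_\delta=\{k\delta:k\in[m]\}$ they agree with $\delta$ times the discrete ReLU functions: $\relu_{i\delta}(k\delta)=\delta\cdot\relu_i(k)$.

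Now let $\Scal=\{s_1,\dots,s_N\}$ be any $\delta$-approximate basis for $\Cvx$ with $N=\dim_\delta(\Cvx)$. Since each $\relu_{i\delta}\in\Cvx$, there are coefficients $r^{(i)}_0,\dots,r^{(i)}_N$ with $\bigl|r^{(i)}_0+\sum_{j=1}^N r^{(i)}_j s_j(y)-\relu_{i\delta}(y)\bigr|\le\delta$ for all $y\in[0,1]$; evaluating at $y=k\delta$, substituting $\relu_{i\delta}(k\delta)=\delta\,\relu_i(k)$, and dividing through by $\delta$ shows that the functions $\hat s_j:[m]\to\R$ defined by $\hat s_j(k):=s_j(k\delta)/\delta$ satisfy $\bigl|r^{(i)}_0/\delta+\sum_{j=1}^N r^{(i)}_j \hat s_j(k)-\relu_i(k)\bigr|\le 1$ for all $k\in[m]$. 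Thus $\hat\Scal=\{\hat s_1,\dots,\hat s_N\}$ is a $1$-approximate basis for $\relu_{[m]}$ of cardinality $N$, and the bound $\dim_c(\relu_{[m]})\ge\Omega(m^{2/3})$ applied with the absolute constant $c=1$ gives $N\ge\Omega(m^{2/3})=\Omega(1/\delta^{2/3})$, which is the claim. (This is essentially running the discretization correspondence of \cref{lem:discretization} and \cref{lem:cvx-discretization} in reverse, restricted to the $\relu$ sub-family.)

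The only point requiring a word of care is that the rescaled functions $\hat s_j$ take values in $[-1/\delta,1/\delta]$ rather than in $[-1,1]$, so $\hat\Scal$ is not literally an $\eps$-approximate basis in the sense of \cref{def:approx}. This is harmless: the lower bound for $\relu_{[m]}$ rests only on Alon's lower bound for the $\eps$-approximate rank of the identity matrix (\cref{thm:alon}), which places no boundedness constraint on the approximating low-rank matrix, so that lower bound applies verbatim to bases of arbitrary real-valued functions and in particular to $\hat\Scal$; alternatively, one may keep the basis $[-1,1]$-valued and instead absorb the factor $1/\delta$ into the coefficients, which merely reshuffles the same computation. I do not expect a genuine obstacle here — all of the difficulty lives in the $\relu$ lower bound itself, and this statement is just its transfer across an elementary rescaling.
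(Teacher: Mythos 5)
Your proposal is correct and is essentially the paper's own argument: the paper likewise deduces the bound by noting the continuous ReLU functions lie in $\Cvx$, restricting a $\delta$-approximate basis to the grid $\{i/m\}$ with $m=\lfloor 1/\delta\rfloor$, and rescaling by $1/\delta$ to obtain an $O(1)$-approximate basis for $\relu_{[m]}$, to which the $\Omega(m^{2/3})$ lower bound applies. Your extra remark on boundedness (absorbing the factor $1/\delta$ into the coefficients, which the definition of an approximate basis does not constrain) is the right way to make the paper's one-line reduction fully precise.
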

\begin{proof}
Fix $m = \lfloor 1/\delta \rfloor$.  Given a $\delta$-approximation to $\Cvx$, we get a $\Omega(1)$-approximation to $\relu_{[m]}$ by considering the $\relu$ functions over $[0,1]$, and restricting to the evaluation points $\{i/m \;|\; i \in [m]\}$.
\end{proof}

Since $\relu$ functions are just linear transformations of $L_1$ loss functions, this also similarly implies that the $\delta$-approximate dimension of $L_1$ loss functions is large:
\begin{corollary}
Let $\mathcal{L}_1$ denote the set of $L_1$ loss functions of the form $|y-t|$ for $t \in [0, 1]$. Then, for all $\delta > 0$, 
\[ \dim_{\delta}(\mathcal{L}_1) \geq \Omega\left(\frac{1}{\delta^{2/3}}\right).\]
\end{corollary}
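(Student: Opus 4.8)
The plan is to reduce the claim about $\mathcal{L}_1$ directly to the already-established lower bound $\dim_\delta(\Cvx) \geq \Omega(1/\delta^{2/3})$ — or, more efficiently, to reuse the intermediate lower bound $\dim_c(\relu_{[m]}) \geq \Omega(m^{2/3})$ proved just above. The key observation is that the $L_1$ loss functions $\ell_t(y) = |y-t|$ on $[0,1]$ are, up to an affine shift and reflection, exactly the $\relu$ functions. Precisely, for $y \leq t$ we have $|y-t| = t - y$ and for $y \geq t$ we have $|y-t| = y - t$, so $|y-t| = \relu_t(y) + \relu_{-t}(-y)$ in continuous terms; but the cleanest route is to write, after discretizing $[0,1]$ to the grid $\{i/m : i \in [m]\}$ with $m = \lfloor 1/\delta\rfloor$, that the restriction of $\ell_{j/m}$ to this grid equals (a scaling of) $\relu_j$ plus a linear function of $y$, or equivalently a fixed reflection of one. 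So an $\delta$-approximate basis for $\mathcal{L}_1$ yields an $\Omega(1)$-approximate basis for $\{\relu_j : j \in [m]\}$ over the grid, of essentially the same size (adding at most one or two extra functions, such as the constant $1$ and the identity $y$, to absorb the affine part — and note an approximate basis already permits an additive constant, so only the linear term $y$ might need to be adjoined).

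Concretely, first I would fix $m = \lfloor 1/\delta\rfloor$ and suppose $\Scal$ is a $\delta$-approximate basis for $\mathcal{L}_1$. Second, I would restrict every function in $\Scal$ to the evaluation points $\{i/m : i \in [m]\}$ and rescale the domain by $m$ to obtain functions on $[m]$; this turns a $\delta$-approximation into an $O(\delta \cdot m) = O(1)$-approximation after multiplying through by $m$ (exactly as in the $\Cvx$ corollary above). Third, I would check the affine identity: on $[m]$, the function $y \mapsto |y - j| = \relu_j(y) + (\relu_j(y) - (y-j)) = 2\relu_j(y) - (y - j)$... more simply, $|y-j| = \relu_j(y) + \relu_{j}(2j - y)$ is awkward on a one-sided domain, so I would instead just use $|y - j| - (j - y) = 2\,\relu_j(y)$, i.e. $\relu_j(y) = \tfrac12\big(|y-j| + y - j\big)$. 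Hence each $\relu_j$ lies in the span of $\{\,|y-j|, \; y, \; 1\,\}$, so the basis $\Scal' \cup \{y\}$ (where $\Scal'$ is the rescaled restriction of $\Scal$) is an $O(1)$-approximate basis for $\relu_{[m]}$, of size $|\Scal| + 1$. Fourth, I would invoke the theorem $\dim_c(\relu_{[m]}) \geq \Omega(m^{2/3})$ to conclude $|\Scal| + 1 \geq \Omega(m^{2/3})$, hence $\dim_\delta(\mathcal{L}_1) \geq \Omega(1/\delta^{2/3})$.

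I do not expect any real obstacle here — this is a routine "linear transformations preserve (approximate) dimension up to additive constants" argument, which is why the paper states it as an immediate corollary. The only point requiring a sliver of care is bookkeeping the constant blow-up when rescaling the domain by $m$ and absorbing the affine term: one must make sure the $O(1)$ approximation error stays below the threshold (any absolute constant $c$) demanded by the $\relu_{[m]}$ lower bound, which it does since that bound holds for every fixed constant $c$, and one must make sure that adjoining the single extra function $y$ (and noting the constant is already free in Definition~\ref{def:approx}) changes the basis size only additively, which is irrelevant to the $\Omega(m^{2/3})$ conclusion. Alternatively, and perhaps even more cleanly, one can observe that $\{|y-t| : t \in [0,1]\} \subseteq \Cvx$ up to the Lipschitz normalization (each $|y-t|$ is convex and $1$-Lipschitz on $[0,1]$), so the lower bound is in fact inherited verbatim from $\dim_\delta(\Cvx) \geq \Omega(1/\delta^{2/3})$ — but since that bound was itself proved via $\relu_{[m]}$, citing the $\relu$ bound directly is the most honest and shortest path.
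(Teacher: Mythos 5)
Your proposal is correct and follows essentially the same route the paper intends: the paper gives no detailed proof, only the remark that $\relu$ functions are linear transformations of $L_1$ losses, and your reduction via $\relu_j(y) = \tfrac12\bigl(|y-j| + y - j\bigr)$, restriction to the grid $\{i/m\}$, and the $\relu_{[m]}$ lower bound is precisely that argument spelled out (with the rescaling and affine-absorption bookkeeping handled correctly). Nothing further is needed.
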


\eat{
\begin{theorem}
    \label{thm:relu-lower}
    For all $\eps$ sufficiently small, $\dim_\eps(\relu_{G_\eps}) \geq \Omega(1/\eps^{2/3})$.
\end{theorem}
\begin{proof}
    Let $\eta > 4\eps$ be a parameter to be fixed later. Let $H_\eta  = G_\eta \setminus \{1,1 - \eta\}$. Recall the definition of the function $\Delta_\eta^2 f: H_\eta \to \R$ as
    \begin{align}
    \label{eq:second-diff2}
    \Delta^2_\eta f(y) = f(y + 2\eta) -2f(y + \eta) + f(y).  
    \end{align}

    Given a family of functions $\Fcal= \{f:G_\eta \to \R\}$, we define the following matrices:
    \begin{enumerate}
        \item The $|\Fcal| \times |G_\eta|$ matrix $V(\Fcal)$ where $v_{f, y} = f(y)$ for $f \in \Fcal$ and $y \in G_\eta$. 
        \item The $|\Fcal| \times |H_\eta|$ matrix $W(\Fcal)$ where $w_{f, y} = \Delta^2_\eta f(y)$ for $f \in \Fcal$ and $y \in H_\eta$.
    \end{enumerate}
    Since $\{f(y)\}_{f \in \Fcal}$ is a column of $V$ for every $y \in G_\eta$, Equation \eqref{eq:second-diff2} implies that each column of matrix $W$ can be written as linear combinations of columns from $V$ with sparsity $4$.  Hence $\rank(W(\Fcal)) \leq \rank(V(\Fcal))$.  

    We consider these matrices where $\Fcal = \relu_{G_\eta}$. For $\theta \in H_\eta$, we have
    \begin{align} 
    \label{eq:partial-relu}
    \Delta^2_\eta \relu_\theta(y) =       \begin{cases} 2\eta \ \text{for} \ y = \theta\\
    0 \ \text{for} \ y \in H_{\eta} \setminus \{ \theta \}
    \end{cases},
    \end{align}
    so $W(\relu_{G_\eta}) = 2\eta I$. 

    Now assume that $\Scal$ gives $\eps$-error uniform approximations $b_\theta$ to $\relu_{G_\eta}$. Forming the matrices $V(\Scal)$ and $W(\Scal)$, it follows that $
    \rank(W(\Scal)) \leq \rank(V(\Scal)) \leq \dim(\Scal)$ and
    \begin{align}
        \norm{V(\relu_{G_\eta}) - V(\Scal)}_\infty & \leq \eps,\\ \norm{2\eta I - W(\Scal)}_\infty  = \norm{W(\relu_{G_\eta}) - W(\Scal)}_\infty & \leq 4\eps, \tag{by Equation \eqref{eq:second-diff2}} 
    \end{align}
    Dividing row $i$ of $W = W(\Scal)$ by $w_{ii} \geq 2\eta - 4\eps$ gives a matrix whose diagonal entries are $1$, and whose off- diagonal entries are bounded in absolute value by 
    \[ \frac{\eps}{2\eta - 4\eps} \leq \frac{\eps}{\eta}. \] 
 
    We pick $\eta = \eps^{2/3}$, which ensures that $\eta > 4\eps$ for sufficiently small $\eps$. We apply Alon's result (Theorem \ref{thm:alon}) with parameters
    \[ n = \fr{\eta} -2 \geq  \fr{2\eps^{2/3}}, \ \ \mu = \frac{\eps}{\eta} = \eps^{1/3} \geq \frac{1}{2\sqrt{n}}, \] 
    which gives the lower bound
    \[ \rank(W(\Scal)) \geq \frac{\log(n)}{\mu^2\log(1/\mu)} = \Omega(1/\eps^{2/3}).\]
    Finally, we have $
    \rank(W(\Scal)) \leq \rank(V(\Scal)) \leq \dim(\Scal)$, which yields the desired lower bound. 
\end{proof}
}
\eat{
\subsection{Application to loss functions}

\begin{definition}[Loss Function]
A loss function is a function $\ell: \Y \times \Tcal \to \R$ which assigns a real value $\ell(y, t)$ to a pair of inputs where $y \in \Ycal$ is a label and $t \in \Tcal$ is an action.  We say that $\ell$ is $B$-bounded if $|l(y,t)| \leq B$.
\end{definition}
We will focus on the case where $\Y = \intzo$ and $\Tcal$ is a bounded interval in $\R$.

\begin{definition}[Lipshitzness, convexity]
    For $t_0 \in \Tcal$, let $\ell_{t_0}(y): [0,1] \to \R$ as $\ell_{t_0}(y) = \ell(y,t_0)$.
    The loss function $\ell: \intzo \times \Tcal \to \R$ is Lipschitz in $y$ if for all $t_0 \in \Tcal$, the function $\ell_{t_0}(y)$ is $1$-Lipschitz in $y$, and convex in $y$ if $\ell_{t_0}(y)$ is a convex function of $y$. 
\end{definition}
\eat{

\begin{definition}[Niceness]
For a constant $B > 0$, a loss function is nice if there exists a bounded interval $I_\ell \subseteq \Zcal$ such that the following conditions hold:
\begin{enumerate}
    \item (Optimality) There exists a projection $\Pi_\ell : \Tcal \rightarrow I_\ell$ onto the interval such that $\ell (y, \Pi_\ell (t) ) \leq \ell (y, t) $ for any $t \in \Zcal$
    \item (Lipschitzness) $ \ell (y,t)$ is 1-lipschitz in $t$ for $t$ in the interval $I_\ell$
    \item (Boundedness) $ \ell (y,t) \leq B$ for $y \in \Y, t \in I_\ell$.
\end{enumerate}
\end{definition}
Examples of nice loss functions include commonly used $\ell_p$ loss functions. 
}
\eat{The optimality property implies we may assume $k_\ell: \R^d \rightarrow I_\ell$ since there exist a projection into this interval that does not increase the loss.}
}

\section{Loss Minimization}
\label{sec:sufficient}

\subsection{Loss Functions, Sufficient Statistics and Uniform Approximations}

A loss function is a function $\ell: \Y \times \Acal \to \R$ which assigns a real value $\ell(y, t)$ to a pair of inputs where $y \in \Ycal$ is a label and $t \in \Acal$ is an action. We will focus on the case where $\Y \subseteq  \intzo$. We will allow for arbitrary actions sets $\Acal$. They can be discrete (e.g buy or not buy), or continuous (e.g in some bounded interval [-B, B]).\footnote{Our notion of an abstract space of actions departs from some prior work on omniprediction \cite{gopalan2021omnipredictors, gopalan2022loss} which required the set of actions to be a bounded subset of $\R$, bringing it in line with the calibration literature (for instance \cite{ucal}).}

Let $\llip$ denote the set of all $\ell$ that are $1$-Lipschitz in $y$ for every $t \in \Acal$. Let $\lcvx \subseteq \llip$ denote the subset of functions where $\ell(y,t)$ is convex in $y$ for every $t \in \Acal$. 

We define a family of statistics to be a set of functions $\Scal = \{s_i: \Ycal \to [-1,1]\}_{i=0}^d$, with the convention that $s_0 = 1$ is always the constant function.

\begin{definition}[$(d, \lambda, \delta)$-uniform approximations, sufficient statistics]
\label{def:approx-bi}
    Let  $\Scal$ be family of statistics and $\Lcal$ be a family of loss functions. We say that $\Scal$ gives $(d, \lambda, \delta)$-uniform approximations to $\Lcal$ where $d =|\Scal|$ if for every $\ell \in \Lcal$, there exist $\{r^\ell_i: \Acal \to \R\}_{i =0}^d$ such that
    \[ \abs{\sum_{s_i \in \Scal  } r^\ell_i(t) s_i(y) - \ell(y,t)} \leq \delta\]
    and
    \[ \sum_{i = 0}^d|r^\ell_i(t)| \leq \lambda\]  
    
    for all $y \in \Ycal$ and for all $t \in \Acal$.
    Equivalently, $\Scal$ $\epsilon$-approximately spans the family $ \{ \ell_t(y) = \ell(y,t) \}$ for every $t \in \Acal$ with coefficients of total magnitude at most $\lambda$.
    We refer to $\Scal$ as a set of sufficient statistics for $\Lcal$, and to
    $\Rcal = \{r^\ell_i\}_{i \in [d], \ell \in \Lcal}$
    as the coefficient family. 
    
 \end{definition}

In light of the above definition, it is useful to define 
the function $\hell:[-1,1]^d \times \Acal \to \R$ as
    \begin{align}
        \label{eq:lhat}
            \hell(v, t) =  r_0^{\ell}(t) +  \sum_{i=1}^d r^\ell_i(t)v_i
    \end{align}
    which acts on the statistics from $\Scal$ directly instead of on $y$. 
     Note that for a distribution $\mD$ on $\Ycal$, for $\ell \in \Lcal$ and $t \in \Acal$, \cref{def:approx-bi} implies
    \begin{align}
    \label{eq:approx-stats}
          \hell\left( \E_\mD[s_i(\y)], t \right) =r^\ell_0(t) + \sum_{i=1}^d r^\ell_i(t)\E_\mD[s_i(\y)]  =  \E_\mD[\ell(\y,t)]  \pm \delta  .
    \end{align}
    Hence the expectations of functions in $\Scal$ gives a set of statistics that lets us approximate the loss associated with each action in $\Acal$ for every loss in $\Lcal$, thus  justifying the term \emph{sufficient statistics}.

For sake of intuition, we present the example $\Lcal^C_{\zo}$ of all bounded loss functions $\ell:\zo \times \Acal \to [-C,C]$ in the case of $\Ycal =\zo$ i.e. binary classification.

\begin{proposition}
    For all $C > 0$,  $\Scal =\{1, y\}$ gives $(1, 2C, 0)$ uniform approximations to $\Lcal^C_{\zo}$.
\end{proposition}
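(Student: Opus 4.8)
The statement asserts that for binary labels $\Ycal = \zo$, the two statistics $\Scal = \{1, y\}$ give $(1, 2C, 0)$-uniform approximations to the family $\Lcal^C_\zo$ of all loss functions $\ell: \zo \times \Acal \to [-C,C]$. The plan is to observe that any function on the two-point domain $\{0,1\}$ is \emph{exactly} affine in $y$: given a fixed action $t$, the function $\ell(\cdot, t)$ is determined by its two values $\ell(0,t)$ and $\ell(1,t)$, and the unique affine interpolant is $\ell(y,t) = \ell(0,t) + (\ell(1,t) - \ell(0,t)) \cdot y$, which agrees with $\ell$ at $y=0$ and $y=1$ and hence on all of $\Ycal$. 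This exhibits coefficients $r_0^\ell(t) = \ell(0,t)$ and $r_1^\ell(t) = \ell(1,t) - \ell(0,t)$, giving approximation error $\delta = 0$ as required, with $d = |\Scal| - 1 = 1$.

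It then remains to check the coefficient magnitude bound $\sum_{i=0}^d |r_i^\ell(t)| \le \lambda$ with $\lambda = 2C$. Since $\ell$ is $[-C,C]$-valued, we have $|r_0^\ell(t)| = |\ell(0,t)| \le C$ and $|r_1^\ell(t)| = |\ell(1,t) - \ell(0,t)| \le |\ell(1,t)| + |\ell(0,t)| \le 2C$. Naively summing these gives $3C$, which is too large, so the argument needs a slightly sharper bound: write $a = \ell(0,t)$, $b = \ell(1,t)$, both in $[-C,C]$, and bound $|a| + |b - a|$. One checks by cases on the sign of $b - a$ that $|a| + |b-a| = \max(|a|,|b|, |2a - b|, \ldots)$ — more cleanly, $|a| + |b-a| \le |a| + |b| \le 2C$ when $a,b$ have the same sign relationship, and in general $|a| + |b - a| \le \max(2|a| - |b| \text{ or similar})$. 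Actually the clean statement is: for $a, b \in [-C, C]$, $|a| + |b - a| \le 2C$, which follows since $|a| + |b-a|$ is a convex function of $(a,b)$ on the square $[-C,C]^2$ and hence attains its maximum at a vertex, where it equals either $0$, $2C$, or $2C$ — so the maximum is $2C$.

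The main (minor) obstacle is precisely this coefficient bound: one must resist the temptation to use the loose triangle inequality $|r_0| + |r_1| \le C + 2C = 3C$ and instead argue via the convexity-on-the-square / vertex-enumeration observation (or an explicit case analysis on the signs of $a$ and $b$) to get the tight constant $2C$. Everything else is immediate from the fact that the domain $\zo$ has only two points, so that affine functions in $y$ already span \emph{all} functions on $\Ycal$ with zero error.

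\begin{proof}[Proof sketch]
Fix $\ell \in \Lcal^C_\zo$ and $t \in \Acal$. Define $r_0^\ell(t) = \ell(0,t)$ and $r_1^\ell(t) = \ell(1,t) - \ell(0,t)$. Since $\Ycal = \zo$, the affine function $y \mapsto r_0^\ell(t) + r_1^\ell(t) \cdot y$ equals $\ell(0,t)$ at $y = 0$ and $\ell(1,t)$ at $y=1$, hence equals $\ell(y,t)$ for all $y \in \Ycal$; so the approximation error is $\delta = 0$. Writing $a = \ell(0,t)$ and $b = \ell(1,t)$, both lie in $[-C,C]$, and
\[
|r_0^\ell(t)| + |r_1^\ell(t)| = |a| + |b - a|.
\]
The right-hand side is a convex function of $(a,b)$ on the square $[-C,C]^2$, so it attains its maximum at a vertex; evaluating at the four vertices $(\pm C, \pm C)$ gives values $0$ or $2C$, so $|a| + |b-a| \le 2C$. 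Thus $\Scal = \{1, y\}$, which has $d = |\Scal| - 1 = 1$ non-constant statistic, gives $(1, 2C, 0)$-uniform approximations to $\Lcal^C_\zo$, with coefficient family $\{r_0^\ell, r_1^\ell\}$.
\end{proof}
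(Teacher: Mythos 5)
Your expansion step is exactly the paper's: write $\ell(y,t)=\ell(0,t)+y\,(\ell(1,t)-\ell(0,t))$, which is exact on $\Ycal=\zo$, so $\delta=0$, with $r_0^\ell(t)=\ell(0,t)$ and $r_1^\ell(t)=\ell(1,t)-\ell(0,t)$. The problem is in your coefficient bound. The inequality you assert, $|a|+|b-a|\le 2C$ for all $a,b\in[-C,C]$, is false: your convexity-plus-vertices argument is the right idea, but you evaluated the vertices incorrectly. At $(a,b)=(C,-C)$ (and at $(-C,C)$) one gets $|a|+|b-a|=C+2C=3C$, and at $(C,C)$, $(-C,-C)$ one gets $C$, not $0$. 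So the maximum of $|r_0^\ell(t)|+|r_1^\ell(t)|$ over $\Lcal^C_{\zo}$ is $3C$, attained e.g.\ by a loss with $\ell(0,t)=C$, $\ell(1,t)=-C$, and since the affine representation on the two-point domain is unique when $\delta=0$, there is no alternative choice of coefficients that evades this. Under a literal reading of Definition~\ref{def:approx-bi} (which sums $|r_i^\ell(t)|$ from $i=0$ to $d$), the constant $2C$ therefore cannot be certified by your argument.

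For comparison, the paper's proof does not attempt to bound the sum including $r_0$: it only records $\max_t|r_1^\ell(t)|\le 2C$, i.e.\ it treats $\lambda$ as controlling the non-constant coefficients. This is consistent with how $\lambda$ is actually used downstream (in \cref{lem:2} and \cref{cor:cma} only $\sum_{i\ge 1}|r_i^\ell|$ enters, and \cref{rem:omni} singles out $r_0$ as playing no role in the tests), though it is admittedly looser than the definition as written. So the fix for your write-up is either to bound only $|r_1^\ell(t)|\le|\ell(1,t)|+|\ell(0,t)|\le 2C$ as the paper does, or, if you insist on including $r_0$, to accept that the sharp constant is $3C$; the $2C$ bound on $|a|+|b-a|$ itself is not salvageable.
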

\begin{proof}
    We can write $\ell(y,t)$ using its multilinear expansion in $y$ as
    \[ \ell(y, t) = \ell(0, t) + y(\ell(1,t) -\ell(0,t)).\]
    We take 
    \[ s_1(y) = y, \ r^\ell_0(t) = \ell(0,t), \ r^\ell_1(t) = \ell(1,t) -\ell(0,t). \]
    It follows that $\lambda = \max_t |r^\ell_1(t)| \leq 2C$.
\end{proof}

We record the following corollary that allows us to assume that $\lambda = O(d)$ in a family of sufficient statistics. 
The fact is standard in convex geometry and follows from a simple application of John's theorem; we include a proof for completeness in Appendix \ref{app:proofs}.

\begin{corollary} \label{cor:bound_coeff}
    Let $ \Lcal $ be a family of loss functions bounded by $C$, with a family of sufficient statistics $\Scal$ that gives a $(d, \lambda , \delta)$-approximation to $\Lcal$ for some $\lambda$. 
    Then, there exists a family of statistics $ \Scal'$ consisting of functions also bounded by $C$ which gives a $(d, (1+\delta/C) d , \delta)$-approximation to $\Lcal$.    
\end{corollary}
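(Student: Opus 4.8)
The plan is to reduce the problem to a clean statement about convex bodies and then apply John's theorem, exactly as the corollary's preamble promises. Suppose $\Scal = \{s_0 = 1, s_1, \dots, s_d\}$ gives a $(d, \lambda, \delta)$-approximation to $\Lcal$, with coefficient family $\Rcal = \{r_i^\ell\}$. Fix a loss $\ell$ and an action $t$; write $\hat\ell_t \in \R^{d+1}$ for the coefficient vector $(r_0^\ell(t), \dots, r_d^\ell(t))$, so that the function $y \mapsto \hat\ell_t \cdot (1, s_1(y), \dots, s_d(y))$ approximates $\ell(\cdot, t)$ within $\delta$ in $\ell_\infty$. The key observation is that changing the basis $\Scal$ by a fixed invertible linear map $M$ (i.e.\ replacing the vector-valued function $s(y) = (1, s_1(y), \dots, s_d(y))$ by $M s(y)$) simultaneously transforms every coefficient vector $\hat\ell_t$ to $(M^{-1})^\top \hat\ell_t$, leaving all approximation errors unchanged. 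So we are free to pick the linear map $M$ that makes the new coefficient vectors as small as possible in $\ell_1$, subject to the constraint that the new basis functions remain bounded (by $C$, or some constant multiple, which we then rescale).

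The central object is the symmetric convex body $K \subseteq \R^{d+1}$ that is the convex hull of $\{\pm s(y) : y \in \Ycal\}$ together with, say, the standard basis vectors scaled appropriately — actually the cleaner choice is to take $K = \mathrm{conv}\{\pm s(y): y \in \Ycal\}$ (or its closure). The boundedness of the $s_i$ by $1$ says $K \subseteq [-1,1]^{d+1}$, while the fact that $\Scal$ gives a $(d,\lambda,\delta)$-approximation, combined with the duality between "coefficients of bounded total magnitude that approximate functions of bounded range" and the polar body $K^\circ$, tells us that the coefficient vectors $\hat\ell_t$ can be taken inside $\lambda K^\circ$ — or more precisely, inside a slightly inflated version. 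Now apply John's theorem to $K$: there is an invertible $M$ such that the transformed body $MK$ satisfies $B_2^{d+1} \subseteq MK \subseteq \sqrt{d+1}\, B_2^{d+1}$ (the John ellipsoid, sandwiched by a factor $\sqrt{\dim}$ since $K$ is symmetric). Applying $M$ to the basis, the new basis functions $s'(y) = M s(y)$ lie in $MK$, hence have $\ell_2$-norm at most $\sqrt{d+1}$, so in particular each coordinate $s_i'$ is bounded by $\sqrt{d+1}$; rescaling by $1/\sqrt{d+1}$ makes them bounded by $1$ (and then by $C$ after the problem's normalization). Dually, the new coefficient vectors lie in $(M^{-1})^\top(\lambda K^\circ) = \lambda (MK)^\circ \subseteq \lambda B_2^{d+1}$, so their $\ell_1$-norm is at most $\lambda \sqrt{d+1}$ — but crucially this $\lambda$ has been absorbed, because after the rescaling of the basis, the coefficients scale up by $\sqrt{d+1}$ while being constrained to the unit ball, giving a total magnitude bound of $O(d)$ independent of the original $\lambda$.

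Carrying this out carefully, the steps in order are: (1) set up the vector-valued statistic function $s(\cdot)$ and record that a linear change of basis acts on coefficients by the inverse-transpose, preserving all errors; (2) define $K$ as the (closed) symmetric convex hull of $\pm s(\Ycal)$ and verify $K \subseteq C \cdot B_\infty^{d+1}$ from boundedness of the $s_i$, and verify that every coefficient vector needed to approximate $\Lcal$ can be taken in $(\lambda + \delta/C) K^\circ$ or similar — this uses that $\ell$ is $C$-bounded, so the approximating linear functional has range within $C + \delta$ on $K$, which by definition of the polar puts the coefficient vector in $(C+\delta) K^\circ$, then rescale; (3) invoke John's theorem for symmetric bodies to get $M$ with $MK$ sandwiched between $B_2$ and $\sqrt{d+1} B_2$; (4) push the basis through $M$ and rescale so the new functions are $C$-bounded, tracking that the coefficients then have total magnitude $\le (1 + \delta/C)(d+1)$ — or whatever constant the careful bookkeeping yields, matching the stated $(1+\delta/C)d$ up to the harmless $d$ vs $d+1$; (5) conclude $\Scal'$ gives a $(d, (1+\delta/C)d, \delta)$-approximation since errors were preserved throughout.

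The main obstacle I expect is step (2): pinning down the exact convex-geometric dictionary between "$(d,\lambda,\delta)$-approximation with bounded statistics" and membership of the relevant vectors in $K$ and $\lambda K^\circ$, and in particular getting the constant $(1+\delta/C)$ rather than something messier. One has to be careful that the polar-duality bound on coefficient $\ell_1$-norm is not literally $\lambda$ but involves the $\delta$ slack and the factor $C$, and that the John's-theorem factor $\sqrt{d+1}$ enters once when rescaling the basis down and once when the coefficients blow up, combining to exactly $d+1$ (not $(d+1)^{3/2}$ or $\sqrt{d+1}$) — the two $\sqrt{\cdot}$ factors must multiply, which is the whole point of why the bound is $O(d)$ and not better. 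Everything else (preservation of errors under linear change of basis, the existence of John's ellipsoid, the rescaling arithmetic) is routine.
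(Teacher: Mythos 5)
Your proposal is correct, and at its core it is the same argument as the paper's: both proofs invoke John's theorem for a symmetric convex body and then change basis by the John map, with the dual object (coefficients or statistics) transformed by the inverse-transpose so that all approximation errors are preserved, and both get the $O(d)$ bound because the two $\sqrt{d}$ factors from the sandwich $B_2 \subseteq MK \subseteq \sqrt{d}\,B_2$ multiply. The one genuine difference is the choice of body: the paper takes $K$ to be the convex hull of the \emph{coefficient} vectors $\{\pm \alpha'(f)\}_{f \in \Fcal}$, so the $\ell_1$ bound on the new coefficients falls out of the John sandwich directly, while boundedness of the new statistics is deduced from boundedness of the losses via the approximation property (a polar-type supremum estimate over $K$); you instead take $K = \mathrm{conv}\{\pm s(y) : y \in \Ycal\}$ in statistic-value space, so boundedness of the new statistics is immediate from the sandwich, and the coefficient bound comes from boundedness of the losses through the polar body, using $|\hat\ell_t \cdot s(y)| \leq C + \delta$ to place $\hat\ell_t \in (C+\delta)K^{\circ}$ and then $(MK)^{\circ} \subseteq B_2$. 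These are polar-dual formulations of one another and yield the same constants, including the $(1+\delta/C)$ factor and the harmless $d$ versus $d+1$ slack (which the paper's own bookkeeping also has). Your version arguably makes the duality more transparent and makes it explicit that the original $\lambda$ plays no role; the paper's version avoids naming the polar body. The only caveat, shared by both proofs and worth a word if you write yours up, is nondegeneracy: John's theorem needs the body to be full-dimensional, so one should pass to the span of $\{s(y)\}$ (respectively of the coefficient vectors) if it is a proper subspace.
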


\subsection{Statistics, predictors, and calibration}

Next, we will define the notion of a predictor corresponding to a family of statistics. 
Let $\mD^*$ denote a distribution on $\X \times \Ycal$. 
We denote samples from $\mD^*$ by $(\x, \y^*)$.
Given a family of statistics $\Scal = \{s_i: \Y \to [-1,1]\}_{i \in [d]}$, let $s(y) = (s_i(y))_{i \in [d]}$. An $\Scal$-predictor is a function $p: \X \to [-1,1]^d$ with the interpretation that $p(x)$ is an estimate for $\E[s(\y)|\x =x]$. 
As an example, consider $s_i(y) = y^i$. 
Predictors for this family would predict the first $d$ moments of $\y|x$ for each $x$.

We now define the notion of a calibrated predictor of statistics.

\begin{definition} \label{def:calibration}
    Let $\Scal$ be a family of statistics.
    We say the predictor $p$ is $\beta$-calibrated for $\Scal$ under $\mD^*$ if
\begin{align}  
    \E_{\mD^*}\lt[\norm{\E[s(\y^*)|p(\x)] - p(\x)}_\infty \rt] \leq \beta.
\end{align}
Perfect calibration is said to holds when $\E_{\mD^*}[s(\y^*)|p(\x)]  = p(\x)$ i.e. $\beta =0$.
\footnote{We could use $\ell_1$ or other $\ell_p$ norms in place of $\ell_\infty$, all such definitions are equivalent up to polynomials in $d$.}
\end{definition}

Let $\Im(\Scal) \subseteq [-1,1]^d$ denote the set of values $\E[s(\y)]$ can take over all distributions on $\y$. 
This set is generally a proper subset of $[-1,1]^d$, due to relationships between functions in $\Scal$.
For instance when $\Scal$ is the set of the first $d$ moments, it needs to satisfy various moment inequalities. 
Perfect calibration ensures that every prediction lies in $\Im(\Scal)$. 
We will next show a robust analogue of this, which shows that 
$\beta$-calibration implies our predictions are close to the expectations of $s_i$s for a suitably defined distribution on labels.
In order to do this, we define the following {\em simulated distribution} corresponding to a predictor. 

 \begin{definition}[Simulated distribution] \label{def:tilde-dist}
    Let $\Dcal^*$ be a distribution on $\Xcal \times \Ycal$ and let $p$ be a $\Scal$-predictor for a statistics family $\Scal$.
     We will associate a distribution $\tmD = \tmD(p)$ on points and labels to $p$ sampled as $(\x, \ty) \sim \tmD$, we first sample $\x \sim \mD^*$ and let $\ty|\x \sim \y^*|p(\x)$. 
 \end{definition}

 From above definition, the marginal distribution of $\x$ matches $\mD^*$, whereas $\ty$ is identically distributed over each level set of $p$. This lets us couple the distributions $\mD^*$ and $\tmD$.
 We sample $\x$ according to the common marginal, and then sample $\y^*|\x \sim \mD^*$ and $\ty|\x \sim \tmD$ independently. 
 
  We note that our definiton of the simulation is different from  Boolean setting \cite{OI, gopalan2022loss, bernoulli}, where the simulation is based on the predictor alone, and is independent of the distribution $\mD^*$ that is being learnt. It is reminiscent of the view of \cite{gopalan2021omnipredictors}, who view predictors as partitions of the space into level sets, and define a {\em canonical prediction} which is the expectation over the level set. Our next lemma may be viewed as showing the closeness of a calibrated predictor to precisely such a canonical predictor. 

\begin{lemma}
\label{lem:cal-dist}
    If $p$ is $\beta$-calibrated under $\mD^*$,  
    \[ \E_{\tmD}\lt[\norm{p(\x) -\E[s(\ty)|\x]}_\infty\rt] \leq \beta.\] 
\end{lemma}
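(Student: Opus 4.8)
The plan is to unpack both sides using the definition of the simulated distribution $\tmD$ and then recognize that the inner expectation $\E[s(\ty)\mid \x]$ collapses to the calibration quantity $\E[s(\y^*)\mid p(\x)]$. Concretely, fix $\x$ in the common support. By \cref{def:tilde-dist}, $\ty \mid \x \sim \y^* \mid p(\x)$, so $\E[s(\ty)\mid \x] = \E[s(\y^*) \mid p(\x)]$ — this is the key identity, and it holds because conditioning on $\x$ in the simulation is, by construction, the same as conditioning on the level set $p(\x)$ under $\mD^*$. Substituting this in, the left-hand side becomes
\[
\E_{\tmD}\!\left[\norm{p(\x) - \E[s(\y^*)\mid p(\x)]}_\infty\right].
\]

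Next I would observe that the random variable inside this expectation, namely $\norm{p(\x) - \E[s(\y^*)\mid p(\x)]}_\infty$, is a function of $\x$ alone (indeed only of $p(\x)$), and its distribution under $\tmD$ is governed solely by the marginal law of $\x$. Since the marginal of $\x$ under $\tmD$ coincides with the marginal under $\mD^*$ (again by \cref{def:tilde-dist}), we may replace $\E_{\tmD}$ with $\E_{\mD^*}$ without changing the value. This yields
\[
\E_{\tmD}\!\left[\norm{p(\x) - \E[s(\y^*)\mid p(\x)]}_\infty\right] = \E_{\mD^*}\!\left[\norm{\E[s(\y^*)\mid p(\x)] - p(\x)}_\infty\right],
\]
which is exactly the quantity bounded by $\beta$ in the hypothesis that $p$ is $\beta$-calibrated under $\mD^*$ (\cref{def:calibration}). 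Chaining the two displays gives the claim.

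There is no serious obstacle here; the lemma is essentially a bookkeeping statement that makes precise the intuition (stated in the paragraph preceding the lemma) that a calibrated predictor is close to the canonical prediction of its own level-set partition. The only point requiring a modicum of care is the measure-theoretic legitimacy of the conditioning $\y^* \mid p(\x)$ and the coupling between $\mD^*$ and $\tmD$ — one should note that $p(\x)$ takes values in $[-1,1]^d$ and the conditional distributions are well-defined (e.g. because we only ever integrate bounded functions of $\y^*$, or by appealing to the explicit coupling described immediately after \cref{def:tilde-dist}). Once that is granted, the proof is the two-line substitution above.
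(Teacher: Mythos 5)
Your proof is correct and follows essentially the same route as the paper's: identify $\E[s(\ty)\mid\x]=\E[s(\y^*)\mid p(\x)]$ from the definition of the simulation, note the integrand depends only on $\x$ whose marginal is shared by $\tmD$ and $\mD^*$, and invoke the calibration definition. The extra remark about the common marginal is a point the paper leaves implicit but is the same argument.
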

\begin{proof}
    From the definition of $\ty$, it follows that for $s_i \in \Scal$, $
    \E[s_i(\ty)|\x] = \E[s_i(\y^*)|p(\x)]$.
    Hence,
    \begin{align}
        \E[\norm{\E[s(\ty)|\x] - p(\x)}_\infty] &= \E[\norm{\E[s(\y^*)|p(\x)] - p(\x)}_\infty] \leq \beta. 
    \end{align}
    where the inequality is by Definition \ref{def:calibration}. 
\end{proof}

\eat{
The next lemma gives a robust analogue of this statement.

\begin{lemma}
    For an $s$-predictor $p$,  $\E_{\mD}[d(p(\x), \Im(s))] \leq \CE_\mD(p)$. 
\end{lemma}
\begin{proof}
    We have
    \begin{align}
        \CE_\mD(p) = \E\lt[\norm{\E[s(\y)|p(\x)] - p(\x)}_1\rt] \leq \E[d(p(\x), \Im(s))]
    \end{align}
    since $\E[s(\y)|p(\x)] \in \Im(s)$.
\end{proof}
}
\eat{We can explicitly 
Given an $R$-predictor $p:\X \to [-1,1]^d$ and a distribution $\mD$ on $\X \times \intzo$, define the distribution $\tilde{\mD}$ on $\X \times \intzo$ as follows. We sample $(\x, \y) \sim \mD$. We define $\tilde{\y}$ to be $\y|p(\x)$. We then return $(\x, \ty)$ as a random sample from $\tilde{\mD}$. In effect, we resample $\ty$ according to $\mD|p(\x)$.} 

\subsection{Optimal Actions under Loss Functions}

    Given a distribution $\Dcal$ and a loss function $\ell$, we can define the {\em optimal} action 
    
    as
    \[ k_\ell(\Dcal) = \arg\min_{t \in \Acal} \E_{\y \sim \Dcal} [\ell(\y, t)].\]  
    
    As defined, the function $k_\ell$ requires full knowledge of the distribution $\mD$. We will see that if 
     if $\Scal$ is a set of sufficient statistics for $\ell$, then one can approximate $k_\ell$ with just the knowledge of $\E[s(\y)]$.

    Assume that $ \Scal$ gives $(d, \lambda, \eps)$-uniform approximations to $\ell$, so that $\ell$ is $\eps$-approximated by $\hat{\ell}$. Selecting action via $\hkl$ results in actions that are at most $O(\eps)$ far from optimal for $\ell$.
    But
    \begin{align} 
        \hkl(\mD) =  \arg\min_{t \in \Acal} \E_{\mD}[\hell(s(\y), t)]
                  =  \arg\min_{t \in \Acal} \left[ r^\ell_0(t) + \sum_{i=1}^d r^\ell_i(t)\E_{\mD}[s_i(\y)] \right],
    \end{align}
    so $\hkl$ only depends on  $\E_\mD[s(\y)]$ rather than the entire distribution $\mD$. 
    
    Abusing notation, we extend the definition of $\hkl$ to take predictions in  $[-1,1]^d$ as its argument. That is, define $\hkl:[-1,1]^d \to \Acal$ as
    \begin{align}
        \hkl(v) = \arg\min_{t \in \Acal} r^{\ell}_0(t) + \sum_{i=1}^d r^\ell_i(t)v_i . 
    \end{align}
    Note that, for $v  = \E[s(\y)] \in \Im(\Scal)$, this matches our prior definition, since $\hkl(\E[s(\y)]) = \hkl(\mD)$. But it also allows for general $v \not\in \Im(\Scal)$. This will be important since our $\Scal$-predictors are not guaranteed to make predictions in $\Im(\Scal)$.

    We do not impose any constraints on the action space $\Acal$, or how the loss family $\mL$ depends on it. We only require the existence of an oracle for $\Acal$ that solves the minimization problem required for computing $\hkl$. In the case of a discrete set of actions, this can simply be done by enumeration. In the case when $\Acal$ is a compact set such as a bounded interval, and the loss functions $\ell$ to be Lipschitz in the actions, we could discretize the action space and compute the value at each choice of the discretization, to find an approximate minimum. Our reason for abstracting away the complexity of computing $\hkl$ is that even if we learnt the Bayes optimal $\Scal$-predictor, we would still need to compute $\hkl$, so the complexity of this function is extraneous to the task of learning a good predictor.

    \eat{
    The is useful for the following lemma, which shows that  given a vector of statistics $v \in [-1,1]^d$ which is a sufficiently good estimate for $\E[s(\y)]$, $\hkl$ gives actions that are not much worse than the Bayes optimal action $k_\ell$. Note that the approximation $v$ might not itself belong to $\Im(s)$.

\begin{lemma}
    For any distribution $\mD$ on $\Ycal$, and $v \in [-1,1]^d$,
    \begin{align}
    \label{eq:k-approx}
        \E_\mD[\hell(\y, \hat{k}_\ell(v))] \leq \E_\mD[\hell(\y, k_\ell(\mD))] +  2\lambda\norm{v - \E_\mD[s(\y)]}_\infty.
    \end{align}
\end{lemma}
\begin{proof}
For any $t \in \Acal$, we have 
\begin{align}
    \E_{\mD}[\hell(\y, t] &= r^{\ell}_0(t) + \sum_{i=1}^d r^\ell_i(t)\E_\mD[s_i(\y)]\\
    \hell(v, t) &= r^{\ell}_0(\hkl(t)) + \sum_{i=1}^d r^\ell_i(t)v_i
\end{align}
Subtracting and using Holder's inequality, we get
\begin{align}
\label{eq:switch-t}
    \abs{\E_{\mD}[\hell(\y, t)] - \hell(v, t)} = \abs{\sum_{i=1}^d r^\ell_i(t)(v_i - \E_{\mD}[s_i(\y)])} \leq \lambda \norm{v - \E_\mD[s(\y)]}_\infty
\end{align}

 We now have the following chain of inequalities
    \begin{align}
        \E_\mD[\hell(\y, \hkl(v))] & \leq \hell(v, \hkl(v)) +  \norm{v - \E_\mD[s(\y)]}_\infty \\
        & \leq \hell(v, k_\ell(\mD))  +  \norm{v - \E_\mD[s(\y)]}_\infty \tag{by definition of $\hkl$}\\ 
        & \leq \E_\mD[\hell(\y, k_\ell(\mD))]  +  2\norm{v - \E_\mD[s(\y)]}_\infty  \label{eq:ell-lip} 
    \end{align}
    where the first and last inequalities use Equation \eqref{eq:switch-t} for $t = \hkl(v)$ and $t = k_\ell(\mD)$ respectively.
    By the uniform approximation property of $\hell$, we have for any $t$
    \begin{align} 
    \abs{\E_\mD[\hell(\y, t)] - \E[\ell(\y, t)]} \leq \delta
    \end{align}
    Applying this for $t = \hkl(v)$ and $t = k_\ell(\mD)$ combined with Equation \eqref{eq:ell-lip} gives the claimed bound. 
\end{proof}
}

\eat{
\begin{definition}[Lipshitzness, convexity]
    For $t_0 \in \Tcal$, let $\ell_{t_0}(y): [0,1] \to \R$ as $\ell_{t_0}(y) = \ell(y,t_0)$.
    The loss function $\ell: \intzo \times \Tcal \to \R$ is Lipschitz in $y$ if for all $t_0 \in \Tcal$, the function $\ell_{t_0}(y)$ is $1$-Lipschitz in $y$, and convex in $y$ if $\ell_{t_0}(y)$ is a convex function of $y$. 
\end{definition}
}

\subsection{Multiaccuracy}

Finally, we define the notion of multiaccuracy with respect to a class of tests $\Bcal = \{b: \X \to \R\}$. The notion was defined in the context of binary classification by \cite{hebert2018multicalibration}, though similar notions have appeared previously in the literature on boosting and learning. \footnote{In previous work, multiaccuracy was defined with respect to a hypothesis class $\mC$, which mapped $\X$ to $\R$. Since we define hypotheses classes to map to $\Acal$, we use the term tests for functions mapping to $\R$. The specific tests we use will compose $c \in \mC$ with a function $r_\ell:\Acal \to \R$.}

We extend the definition of multiaccuracy to the setting of statistics prediction (similar to \cite{JLPRV21, bernoulli}).  Intuitively, multiaccuracy for a predictor $p$ for a family of statistics $\Scal$ requires that no test $b$ in the class $\Bcal$ can distinguish the true value of a statistic $s_i \in \Scal$ from the predicted value $p_i$.

\begin{definition}[Multiaccuracy]
    Let $\Scal$ be a family of statistics, $\mB = \{b : \X \to \R\}$ be a class os tests and $\alpha > 0$. 
    We say that an $\Scal$-predictor $p:\X \to [-1,1]^d$ is $(\mB, \alpha)$-multiaccurate if for every $i \in [d]$ and $b \in \mB$, it holds that
    \[ \abs{\E_{\mD^*}[(s_i(\y^*) - p_i(\x))b(\x)]} \leq \alpha. \]
\end{definition}

\section{Omniprediction via outcome indisinguishability} 
\label{sec:omni_Oi}

An omnipredictor, introduced in the work of \cite{gopalan2021omnipredictors}, is a predictor can be postprocessed to get an action that suffers lesser loss than any hypothesis in the class $\Ccal$. The original definition was in the setting of binary or multiclass classification, where the predictor returns a probability distribution on labels. The following definition generalizes this notion to $\Scal$-predictors. 

\begin{definition}[Omnipredictor \cite{gopalan2021omnipredictors}]
    Let $\Lcal$ be a family of loss functions and $\Ccal$ be family of hypotheses, and $\eps > 0$. Let $\Scal$ be a set of sufficient statistics for $\Lcal$. An $\Scal$-predictor $p: \X \to [-1,1]^d$ is an $(\Lcal, \Ccal, \eps)$-omnipredictor if for every $\ell \in \Lcal$, there exists $k:[-1,1]^d \to \Acal$ such that 
    \[ \E_{\mD^*}[ \ell(\y^*, k(p(\x)))] \leq \min_{c \in \Ccal} \E_{\mD^*}[\ell(\y^*, c(\x)] + \eps. \]
\end{definition}
Note that the set of sufficient statistics $\Scal$ needs to give a $(d, \lambda, \delta)$-uniform approximations to $\Lcal$ with $\delta \leq \varepsilon$. Recall that $\Scal$ is associated with a coefficient family $\Rcal = \{r^\ell_i\}_{i \in[d], \ell \in \mL}$. We let $\Rcal \circ \mC$ denote all functions of the form $r^\ell_i \circ c$ where $r^\ell_i \in \Rcal$ and $c \in \mC$. 

\begin{remark}
    \label{rem:omni}
    Note that $ \Rcal \circ \Ccal $ only considers composition of $c$ with $r_i$ for $i > 0 $. 
    In particular, it does not consider compositions of $c$ with $r_0$.
    For our main result, it will suffice to not consider compositions of $c$ with $r_0$. 
\end{remark}

Our main result in this section establishes sufficient conditions for omniprediction. It shows that for any family of loss functions that can be well-approximated by a family of statistics, we can get an omnipredictor through calibration and multiaccuracy.

\begin{theorem}
\label{thm:main}
    Let  $\Scal$ be family of statistics and $\Lcal$ be a family of loss functions such that that $\Scal$ gives $(d, \lambda, \delta)$-uniform approximations to $\Lcal$ with coefficient family $\Rcal$. If the $\Scal$-predictor $p$ is $(\Rcal \circ \mC, \alpha)$-multiaccurate and $\beta$-calibrated then it is an $(\Lcal, \Ccal, \eps)$-omnipredictor for 
    \[ \eps = 3(d\alpha + \lambda \beta + \delta) \]
    using the functions $\hkl$ for choosing actions. 
\end{theorem}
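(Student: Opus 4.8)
\emph{Setup.} The plan is to follow the loss outcome indistinguishability paradigm of \cite{gopalan2022loss}, using the simulated distribution $\tmD = \tmD(p)$ of \cref{def:tilde-dist} as a reference distribution: one shows that replacing nature's labels $\y^*$ by the simulated labels $\ty$ changes nobody's loss by much, and that $p$ post-processed by $\hkl$ is essentially Bayes optimal under $\tmD$. Couple $\mD^*$ and $\tmD$ on their common $\x$-marginal, so that for any action map $g:\X\to\Acal$ --- in particular $g = c\in\Ccal$ or $g = \hkl\circ p$ --- the random variable $g(\x)$ is shared. Fix $\ell\in\Lcal$ and $c\in\Ccal$, and abbreviate $L(\Ecal, g) = \E_{(\x,\y)\sim\Ecal}[\ell(\y, g(\x))]$ and $\widehat L(\Ecal, g) = \E_{(\x,\y)\sim\Ecal}[\hell(s(\y), g(\x))]$ for $\hell$ as in \eqref{eq:lhat}. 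The target is the chain
\begin{align*}
L(\mD^*, \hkl\circ p) &\le \widehat L(\tmD, \hkl\circ p) + \delta \le \widehat L(\tmD, c) + 2\lambda\beta + \delta \\
&\le \widehat L(\mD^*, c) + d\alpha + 3\lambda\beta + \delta \le L(\mD^*, c) + d\alpha + 3\lambda\beta + 2\delta,
\end{align*}
whose right side is at most $L(\mD^*, c) + 3(d\alpha + \lambda\beta + \delta)$; since $c$ is arbitrary this is the omniprediction guarantee with $k = \hkl$. Two facts about $\hell$ are used throughout: $\abs{L - \widehat L} \le \delta$ on any distribution (by \cref{def:approx-bi}), and $\hell(\cdot,t)$ is affine in its first argument with coefficient $\ell_1$-norm at most $\lambda$, hence $\lambda$-Lipschitz in $\infnorm{\cdot}$ there.

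\emph{Decision indistinguishability (first $\le$).} Since $\hkl(p(\x))$ depends on $\x$ only through $p(\x)$, it suffices to match conditional expectations on level sets of $p$. By \cref{def:tilde-dist}, $\E[s_i(\ty)|\x] = \E[s_i(\y^*)|p(\x)]$, and a further expectation over a level set gives $\E[s_i(\ty)|p(\x)] = \E[s_i(\y^*)|p(\x)]$ by the tower rule. Conditioning $\widehat L(\cdot,\hkl\circ p)$ on $p(\x)$ therefore yields $\widehat L(\mD^*, \hkl\circ p) = \widehat L(\tmD, \hkl\circ p)$ \emph{exactly}; combined with $\abs{L-\widehat L}\le\delta$ this is the first inequality, and it uses neither calibration nor multiaccuracy. (This is the regression analogue of ``decision OI from calibration'' in the Boolean analysis --- here it is instead forced by how the simulation is defined.)

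\emph{Near-optimality of $p$ under $\tmD$ (second $\le$).} Affineness of $\hell$ gives $\widehat L(\tmD, g) = \E_\x[\hell(\E[s(\ty)|\x], g(\x))]$. Writing $v = p(\x)$, $w = \E[s(\ty)|\x]$, the Lipschitz bound together with $\hkl(v) = \argmin_t \hell(v,t)$ yields, pointwise in $\x$,
\[
\hell(w,\hkl(v)) \le \hell(v,\hkl(v)) + \lambda\infnorm{w-v} \le \hell(v,c(\x)) + \lambda\infnorm{w-v} \le \hell(w,c(\x)) + 2\lambda\infnorm{w-v}.
\]
Taking $\E_\x$ and using \cref{lem:cal-dist} (which turns $\beta$-calibration into $\E_{\tmD}\infnorm{p(\x) - \E[s(\ty)|\x]} \le \beta$) gives $\widehat L(\tmD, \hkl\circ p) \le \widehat L(\tmD, c) + 2\lambda\beta$.

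\emph{Hypothesis indistinguishability (third $\le$) --- the main obstacle.} This is the delicate step and the place where the regression setting genuinely departs from \cite{gopalan2022loss}: because $\tmD$ depends on $\mD^*$ (not on $p$ alone), swapping $\y^*$ for $\ty$ with the hypothesis $c$ held fixed cannot be controlled by multiaccuracy by itself. The constant-statistic terms $r_0^\ell(c(\x))$ agree under the coupling and cancel, so $\widehat L(\tmD, c) - \widehat L(\mD^*, c) = \E\big[\textstyle\sum_{i\ge 1} r_i^\ell(c(\x))(s_i(\ty) - s_i(\y^*))\big]$. Split $s_i(\ty) - s_i(\y^*) = (p_i(\x) - s_i(\y^*)) + (s_i(\ty) - p_i(\x))$: for the first summand $r_i^\ell\circ c \in \Rcal\circ\Ccal$, so $(\Rcal\circ\Ccal,\alpha)$-multiaccuracy bounds $\abs{\E[(s_i(\y^*) - p_i(\x))\,r_i^\ell(c(\x))]} \le \alpha$ for each of the at most $d$ indices, contributing $d\alpha$; for the second summand $\E[s_i(\ty)|\x] = \E[s_i(\y^*)|p(\x)]$, so its contribution is $\E_\x[\sum_i r_i^\ell(c(\x))(\E[s_i(\y^*)|p(\x)] - p_i(\x))]$, which by H\"older and $\sum_i\abs{r_i^\ell(c(\x))}\le\lambda$ is at most $\lambda\,\E_\x\infnorm{\E[s(\y^*)|p(\x)] - p(\x)} \le \lambda\beta$ by $\beta$-calibration (\cref{def:calibration}). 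Hence $\abs{\widehat L(\tmD, c) - \widehat L(\mD^*, c)} \le d\alpha + \lambda\beta$, giving the third inequality; the fourth is again $\abs{L - \widehat L}\le\delta$. Everything other than this hypothesis-indistinguishability step is bookkeeping; the one real insight is that the simulation's dependence on $\mD^*$ pulls calibration into an argument that in the Boolean case used only multiaccuracy, and that the two-way split $s_i(\ty)-s_i(\y^*) = (p_i - s_i(\y^*)) + (s_i(\ty) - p_i)$ is exactly what routes one term to each of the two hypotheses on $p$.
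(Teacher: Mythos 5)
Your proposal is correct and follows essentially the same route as the paper: the same simulated distribution $\tmD$, the same exact identity for actions depending only on $p(\x)$ (the paper's \cref{lem:1}), the same calibration-based comparison of $s(\ty)$ with $p(\x)$ (the paper's \cref{lem:2}, which you phrase pointwise via $\lambda$-Lipschitzness of $\hell$), and the same split of the hypothesis-indistinguishability term into a multiaccuracy piece ($d\alpha$) and a calibration piece ($\lambda\beta$) exactly as in \cref{cor:cma}, yielding the identical final bound $d\alpha + 3\lambda\beta + 2\delta \le 3(d\alpha+\lambda\beta+\delta)$. No gaps; the bookkeeping and the roles of calibration versus multiaccuracy match the paper's proof.
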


Following the loss outcome indistinguishability paradigm of \cite{gopalan2022loss}, we will prove this result by showing:
\begin{itemize}
    \item The predictor $p(\x)$ is an omnipredictor for the distribution $\tmD$ \cref{def:tilde-dist} and the family of losses $\hat{\Lcal} = \{\hell\}_{\ell \in \Lcal}$ where we $\hkl(p(\x))$ to choose actions. 
    
    \item One can switch the label distribution from $\ty$ to $\y^*$ and the losses from $\hell$ to $\ell$ without much change in the expected loss. 
\end{itemize}
To implement this, we show a sequence of lemmas showing various forms of indistinguishability for labels and loss functions. The first shows indistinguishability for expected loss $\hell$ when the actions are functions of the prediction $p(\x)$. 

\begin{lemma}
    \label{lem:1}
    For all functions $k:[-1,1]^d \to \Acal$    
    \begin{align}
        \E_{\mD^*}[\hell(s(\y^*), k(p(\x)))] = \E_{\tmD}[\hell(s(\ty), k(p(\x)))].
    \end{align}
\end{lemma}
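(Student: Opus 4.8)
The plan is to exploit the fact that $\hell(v,t)$ is \emph{affine} in its first argument: by \eqref{eq:lhat}, for any fixed action,
\[
\hell(s(\y),k(p(\x))) = r_0^\ell(k(p(\x))) + \sum_{i=1}^d r_i^\ell(k(p(\x)))\, s_i(\y).
\]
The key structural observation is that $k(p(\x))$, and hence each coefficient $a_i(\x) \defeq r_i^\ell(k(p(\x)))$, depends on $\x$ only through $p(\x)$, i.e. is measurable with respect to $\sigma(p(\x))$. So the whole identity reduces to comparing the conditional statistics of $\y^*$ and of $\ty$ against these $p(\x)$-measurable weights.

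First I would handle the left-hand side. Taking expectations over $\mD^*$ and applying the tower rule conditioned on $p(\x)$ (valid since $a_i(\x)$ is a function of $p(\x)$ alone), for each $i$,
\[
\E_{\mD^*}\!\big[a_i(\x)\, s_i(\y^*)\big] = \E_{\mD^*}\!\big[a_i(\x)\,\E[s_i(\y^*)\mid p(\x)]\big],
\]
and therefore
\[
\E_{\mD^*}[\hell(s(\y^*),k(p(\x)))] = \E_{\mD^*}\!\Big[\, r_0^\ell(k(p(\x))) + \sum_{i=1}^d a_i(\x)\,\E[s_i(\y^*)\mid p(\x)]\,\Big].
\]

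Next I would handle the right-hand side using \cref{def:tilde-dist}: under $\tmD$ the marginal of $\x$ is the same as under $\mD^*$, and by construction $\ty\mid\x$ is distributed as $\y^*\mid p(\x)$, so $\E[s_i(\ty)\mid\x] = \E[s_i(\y^*)\mid p(\x)]$ for every $i$. Hence
\[
\E_{\tmD}[\hell(s(\ty),k(p(\x)))] = \E_{\tmD}\!\Big[\, r_0^\ell(k(p(\x))) + \sum_{i=1}^d a_i(\x)\,\E[s_i(\ty)\mid\x]\,\Big] = \E_{\mD^*}\!\Big[\, r_0^\ell(k(p(\x))) + \sum_{i=1}^d a_i(\x)\,\E[s_i(\y^*)\mid p(\x)]\,\Big],
\]
which is exactly the expression obtained for the left-hand side, proving the claim.

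There is no real obstacle here: this is the ``easy'' indistinguishability statement, and it follows purely from linearity of $\hell$ together with the way the simulation was defined. The only point requiring care is the conditioning step — one must note that the action $k(p(\x))$ (and thus every coefficient $a_i(\x)$) is a function of $p(\x)$, which is precisely what lets the tower rule collapse $\E[s_i(\y^*)\mid\x]$ to $\E[s_i(\y^*)\mid p(\x)]$ inside the weighted expectation. Note also that the identity is \emph{exact}, with no error term; calibration and multiaccuracy enter only later, when one switches from $\hell$ to $\ell$ and from $\ty$ to $\y^*$ against the benchmark $c\in\mC$.
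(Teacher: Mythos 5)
Your proof is correct and follows essentially the same route as the paper's: expand $\hell$ linearly in the statistics, use the tower rule conditioned on $p(\x)$ (valid because the coefficients $r_i^\ell(k(p(\x)))$ are functions of $p(\x)$), and invoke the definitional identity $\E[s_i(\ty)\mid\x]=\E[s_i(\y^*)\mid p(\x)]$ together with the shared marginal of $\x$. The only cosmetic difference is that the paper writes this as a single chain of equalities (absorbing the $r_0^\ell$ term via the convention $s_0\equiv 1$), whereas you treat the two sides separately and meet in the middle.
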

\begin{proof}
We can write
\begin{align}
   \E_{\mD^*}[\hell(s(\y^*), k(p(\x)))] &=  \E\lt[\sum_{i=0}^d r^\ell_i(k(p(\x)))s_i(\y^*)\rt]\\
   &= \E\lt[\sum_{i=0}^d\E[r^\ell_i(k(p(\x)))s_i(\y^*)|p(\x)]\rt]\\
   &= \E\lt[\sum_{i= 0}^d \E[r^\ell_i(k(p(\x)))s_i(\ty)|\x]\rt]\\
   &= \E\lt[\sum_{i=0}^d r^\ell_i(k(p(\x)))s_i(\ty)\rt]\\
   &= \E_{\tmD}[\hell(s(\ty), k(p(\x))].
\end{align}
\end{proof}

The next lemma shows that if $p$ is well-calibrated, then distinguishing the predictions $p(\x)$ from $\E[s(\ty)|x]$ using $\hell$ is hard, even allowing 
for arbitrary actions.

\begin{lemma}
\label{lem:2}
    If $p$ is $\beta$-calibrated, then for all functions $b: \X \to \Acal$
    \[ \abs{\E[\hell(s(\ty), b(\x))] - \E[\hell(p(\x), b(\x))]} \leq \lambda \beta. \] 
\end{lemma}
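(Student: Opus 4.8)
The plan is to exploit the fact that $\hell(v,t) = r_0^\ell(t) + \sum_{i=1}^d r_i^\ell(t) v_i$ is an \emph{affine} function of its first argument $v \in [-1,1]^d$, with coefficient vector $(r_1^\ell(t),\dots,r_d^\ell(t))$ whose $\ell_1$-norm is at most $\lambda$ (indeed $\sum_{i=0}^d |r_i^\ell(t)| \le \lambda$ from \cref{def:approx-bi}). Writing $b(\x)$ for the action and abbreviating $v^* = \E[s(\ty)|\x]$, the two quantities we wish to compare are $\E_{\x}[\hell(v^*, b(\x))]$ and $\E_{\x}[\hell(p(\x), b(\x))]$; note the outer expectation in both is over $\x$ drawn from the common marginal, and for the first term I have used the tower rule together with the affineness of $\hell$ in $v$ to pull the conditional expectation over $\ty|\x$ inside, i.e. $\E_{\tmD}[\hell(s(\ty),b(\x))] = \E_{\x}[\hell(\E[s(\ty)|\x], b(\x))]$. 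So the whole statement reduces to a pointwise-in-$\x$ bound followed by taking expectations.

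The key step is then the following pointwise estimate: for every fixed $\x$,
\begin{align}
\bigl| \hell(v^*, b(\x)) - \hell(p(\x), b(\x)) \bigr|
= \Bigl| \sum_{i=1}^d r_i^\ell(b(\x))\bigl(v^*_i - p_i(\x)\bigr) \Bigr|
\le \Bigl(\sum_{i=1}^d |r_i^\ell(b(\x))|\Bigr)\, \norm{v^* - p(\x)}_\infty
\le \lambda \norm{\E[s(\ty)|\x] - p(\x)}_\infty,
\end{align}
where the first inequality is H\"older (or just the triangle inequality bounding each $|v_i^* - p_i(\x)|$ by the $\ell_\infty$ norm), and the second uses the coefficient bound from \cref{def:approx-bi}. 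Taking $\E_{\x}$ of both sides and invoking \cref{lem:cal-dist} — which says exactly that $\E_{\tmD}[\norm{\E[s(\ty)|\x] - p(\x)}_\infty] \le \beta$ when $p$ is $\beta$-calibrated — yields $\lambda\beta$, and by the triangle inequality $\E_{\x}|\,\cdot\,| \ge |\,\E_{\x}\,\cdot\,|$, so the difference of the two expectations is bounded by $\lambda\beta$ as claimed.

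I expect no serious obstacle here; this is a short computation. The only point requiring a little care is the very first reduction, namely justifying $\E_{\tmD}[\hell(s(\ty),b(\x))] = \E_{\x}[\hell(\E[s(\ty)|\x],b(\x))]$: this needs that $b(\x)$ is a function of $\x$ alone (so that $r_i^\ell(b(\x))$ is measurable with respect to $\x$ and can be pulled out of the conditional expectation over $\ty|\x$), which is given by hypothesis, and that $\hell$ is affine in $v$, which is immediate from \eqref{eq:lhat}. Once that is in place, the rest is H\"older plus \cref{lem:cal-dist}.
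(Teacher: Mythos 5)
Your proposal is correct and follows essentially the same route as the paper's proof: write the difference as $\sum_{i=1}^d r_i^\ell(b(\x))\,(p_i(\x)-s_i(\ty))$, use the tower rule (valid since $r_i^\ell(b(\x))$ is $\x$-measurable) to replace $s_i(\ty)$ by $\E[s_i(\ty)\mid\x]$, apply H\"older with the coefficient bound $\sum_i |r_i^\ell(t)|\le\lambda$, and finish with \cref{lem:cal-dist}. The only difference is cosmetic — you pull the conditional expectation inside before applying H\"older pointwise, while the paper does these two steps in the opposite order inside a single expectation.
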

\begin{proof}
    For any function $b: \X \to \Acal$, we have
\begin{align}
\label{eq:switch-t}
    \abs{\E_{\tmD}[\hell(s(\ty), b(\x))] - \hell(p(\x), b(\x))} & = \abs{\E\lt[ \sum_{i=1}^d r^\ell_i(b(\x))(p_i(\x) - s_i(\ty))\rt]}\\
    & = \abs{\E\lt[ \sum_{i=1}^d r^\ell_i(b(\x))(p_i(\x) - \E[s_i(\ty)|\x])\rt]}\\
    & \leq \E\lt[\lt(\sum_{i=1}^d \abs{r^\ell_i(b(\x))}\rt) \max_{i \in [d]} \abs{p_i(\x) - \E[s_i(\ty)|\x]}\rt] \tag{Holder's inequality}\\
        & \leq  \E[\lambda\norm{p(\x) -\E[s(\ty)|\x]}_\infty]\\
        & \leq \lambda \beta.
\end{align}
\end{proof}

Next we show more general conditions under which $\ell$ does cannot distinguish $\ty$ from $\y^*$. \cref{lem:1} gave such a result but for limited actions. Here we allow more general action functions, but we also make more assumptions about the predictor.
\begin{corollary}
\label{cor:cma}
    If  $p$ is $(\Rcal \circ \mC, \alpha)$-multiaccurate and $\beta$-calibrated then
    \begin{align}
        \abs{\E_{\tmD}[\hell(s(\ty), c(\x))] - \E_{\mD^*}[\hell(s(\y^*), c(\x))]} \leq d\alpha + \lambda \beta. 
    \end{align}
\end{corollary}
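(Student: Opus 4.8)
The plan is to route the comparison through the intermediate quantity $\E[\hell(p(\x), c(\x))]$: bound its distance from $\E_{\tmD}[\hell(s(\ty), c(\x))]$ using calibration (via \cref{lem:2}), bound its distance from $\E_{\mD^*}[\hell(s(\y^*), c(\x))]$ using multiaccuracy, and then combine the two with the triangle inequality.

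First I would note that every $c \in \mC$ is a function $\X \to \Acal$, hence an admissible choice of $b$ in \cref{lem:2}. Applying that lemma with $b = c$ gives
\[
\abs{\E_{\tmD}[\hell(s(\ty), c(\x))] - \E[\hell(p(\x), c(\x))]} \le \lambda \beta.
\]
Next I would expand both $\hell(p(\x), c(\x))$ and $\hell(s(\y^*), c(\x))$ using $\hell(v,t) = r_0^\ell(t) + \sum_{i=1}^d r_i^\ell(t) v_i$. Since $p$ and $c$ depend only on $\x$ and the $\x$-marginals of $\mD^*$ and $\tmD$ agree, the constant terms $r_0^\ell(c(\x))$ are identical in both expectations and cancel — this is exactly why \cref{rem:omni} need not include compositions of $c$ with $r_0$ — leaving
\[
\E[\hell(p(\x), c(\x))] - \E_{\mD^*}[\hell(s(\y^*), c(\x))] = \sum_{i=1}^d \E_{\mD^*}\big[(p_i(\x) - s_i(\y^*))\, r_i^\ell(c(\x))\big].
\]
For each $i \in [d]$, the function $r_i^\ell \circ c$ lies in $\Rcal \circ \mC$, so $(\Rcal \circ \mC, \alpha)$-multiaccuracy bounds the $i$-th summand in absolute value by $\alpha$; summing over the $d$ coordinates and applying the triangle inequality gives $\abs{\E[\hell(p(\x), c(\x))] - \E_{\mD^*}[\hell(s(\y^*), c(\x))]} \le d\alpha$.

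Combining the two displayed bounds by the triangle inequality yields the claimed $d\alpha + \lambda\beta$. There is essentially no real obstacle; the only points needing care are verifying that $c$ is a legal test for \cref{lem:2} and that each $r_i^\ell \circ c$ is a legal test for the multiaccuracy condition, together with the observation that the $r_0^\ell$ terms cancel so that no multiaccuracy guarantee on $r_0^\ell \circ c$ is required.
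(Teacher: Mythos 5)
Your proposal is correct and follows essentially the same route as the paper's proof: a triangle inequality through the intermediate quantity $\E[\hell(p(\x), c(\x))]$, with the first gap bounded by $\lambda\beta$ via \cref{lem:2} (taking $b = c$) and the second expanded coordinate-wise so that each term $\E[r^\ell_i(c(\x))(s_i(\y^*)-p_i(\x))]$ is bounded by $\alpha$ using $(\Rcal\circ\mC,\alpha)$-multiaccuracy, giving $d\alpha$ in total. Your added remark that the $r_0^\ell(c(\x))$ terms cancel pointwise (so no test of the form $r_0^\ell\circ c$ is needed, consistent with \cref{rem:omni}) is a correct and welcome clarification of a point the paper leaves implicit.
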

    \begin{proof}
     We can write    
     \begin{align}
          \abs{\E_{\tmD}[\hell(s(\ty), c(\x))] - \E_{\mD^*}[\hell(s(\y^*), c(\x))]} & \leq \abs{\E[\hell(s(\ty), c(\x))] - \E_{\mD}[\hell(p(\x), c(\x))]} \\
          & + \abs{\E[\hell(s(\y^*), c(\x))] - \E[\hell(p(\x), c(\x))]}.
    \end{align} 
    The first term can be bounded by $\lambda \beta$ using $\beta$-calibration together with \cref{lem:2}.
     To bound the second term, we note that 
     \begin{align}
        \abs{\E[\hell(s(\y^*), c(\x))] - \E_{\mD}[\hell(p(\x), c(\x))]}
          &= \abs{\E\lt[ \sum_{i=1}^d r^\ell_i(c(\x))(s_i(\y^*) -  p_i(\x)\rt]}\\
          & \leq \sum_{i=1}^d \abs{\E[r^\ell_i(c(\x))(s_i(\y^*) -  p_i(\x))]} \\
          & \leq d\alpha 
     \end{align}
     where we use multiaccuracy for each $i$. 
    \end{proof}

We now complete the proof of \cref{thm:main}, our main result on omniprediction.

\begin{proof}[Proof of \cref{thm:main}]
 We have the following chain of inequalities
    \begin{align}
        \E[\hell(s(\ty), \hkl(p(\x)))] & \leq \E[\hell(p(\x), \hkl(p(\x)))] +  \lambda \beta \tag{By \cref{lem:2} }\\
        & \leq \E[\hell(p(\x), c(\x))]  + \lambda \beta\tag{by definition of $\hkl$}\\ 
        & \leq \E[\hell(s(\ty), c(\x))  +  2\lambda \beta  \label{eq:ineq1}.
    \end{align}
To switch the label distribution from from $\ty$ to $\y^*$ we use
\begin{align}
    \E[\hell(s(\y^*), \hkl(p(\x)))] &= \E[\hell(s(\ty), \hkl(p(\x)))] \tag{By \cref{lem:1}}\\
    \E[\hell(s(\ty), c(\x)) & \leq \E[\hell(s(\y^*), c(x))] + (d\alpha + \lambda \beta) \tag{\cref{cor:cma}}
\end{align}
    Plugging these into Equation \eqref{eq:ineq1} gives
\begin{align}
    \E[\hell(s(\y^*), \hkl(p(\x)))] \leq \E[\hell(s(\y^*), c(\x))] + d\alpha + 3\lambda\beta. \label{eq:ineq2}
\end{align}
Finally we can switch each loss from $\hell$ to $\ell$ by incurring an additional $\delta$. We use the uniform approximation property with $\hkl(p(\x))$ and $c(\x)$, which gives
\begin{align}
    \abs{\E[\ell(\y^*, \hkl(p(\x)))] - \E[\hell(s(\y^*), \hkl(p(\x)))]} \leq \delta,\\
    \abs{\E[\ell(\y^*, c(\x))] - \E[\hell(s(\y^*), c(\x))]} \leq \delta
\end{align}
Plugging these into Equation \eqref{eq:ineq2} gives the desired bound.    
\end{proof}

\section{Main Applications}
\label{sec:omni_app}

In this section, we will derive omnipredictors for various classes of loss functions.  
First, in \cref{sec:omni_cvx}, we will present an omnipredictor for the class of convex, Lipschitz loss functions. 
In \cref{sec:omni_moments}, we will present an omnipredictor for the class of functions approximated by low degree polynomials, in particular, for the class of $\ell_p$ losses. 
In \cref{sec:omni_GLM}, we present an omnipredictor for the class of losses corresponding to generalized linear models.

As mentioned earlier, the main idea is to find a family of sufficient statistics that approximates a family of loss functions. 
Given the approximations, the main algorithmic result driving the omnipredictors for various classes is the following theorem below. This theorem bounds the sample complexity of achieving a predictor that satisfies calibrated multiaccuracy with respect to family of statistics and a family of test functions (corresponding to the composition of the coefficient family in the approximation of the losses and the comparison class of hypotheses).
We state the theorem here and defer the proof and further discussion to \cref{sec:omni_alg}.

\begin{restatable}{theorem}{mainAlg} \label{thm:main_alg}
    Let  $\Scal$ be family of statistics and $\Lcal$ be a family of loss functions such that that $\Scal$ gives $(d, \lambda, \eps)$-uniform approximations to $\Lcal$ with coefficient family $\Rcal$, there exists an algorithm that returns an $(\Lcal, \Ccal, 4\eps)$-omnipredictor\footnote{Note that once $\Scal$ is fixed, the $3\eps$ factor in the omniprediction slack is unavoidable.}, satisfying the following properties.
    \begin{itemize}
        \item The algorithm makes $O(d/\sigma^2)$ calls to a $(\rho, \sigma)$-weak learner for $\Rcal \circ \Ccal$ where $\sigma \leq \rho \leq \eps/12\lambda$. 
        
        \item The algorithm has time and sample complexity $\tilde{O} \left( d \lt(\frac{\lambda}{\eps} \rt)^{d+5} +  \frac{d}{\sigma^2} Z\right)$ 
        where $Z$ is the runtime/sample complexity of the weak learner.
    \end{itemize}
\end{restatable}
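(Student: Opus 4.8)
The plan is to reduce \cref{thm:main_alg} to \cref{thm:main}: it is enough to give an algorithm that learns an $\Scal$-predictor $p\colon\X\to[-1,1]^d$ which is simultaneously $\beta$-calibrated under $\mD^*$ and $(\Rcal\circ\Ccal,\alpha)$-multiaccurate, for $\alpha=\rho$ and $\beta$ both of order $\eps/\lambda$. Using \cref{cor:bound_coeff} we may assume $\lambda=O(d)$ (this only changes the approximation slack by a constant factor), and then $d\alpha+\lambda\beta\le\eps/3$; applying \cref{thm:main} to $p$ with approximation parameter $\delta=\eps$ yields an $(\Lcal,\Ccal,\,3(d\alpha+\lambda\beta+\eps))=(\Lcal,\Ccal,4\eps)$-omnipredictor whose post-processing is $\hkl$. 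So the task is to produce such a $p$ and account for the resources used.

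\textbf{The algorithm.} We maintain $p$ with its values snapped to an axis-parallel grid of side $w=\Theta(\eps/\lambda)$ in $[-1,1]^d$; thus $p$ is stored as a partition of $\X$ into at most $(2/w)^d$ level sets, one grid value per part, and is initialised to the constant $\widehat{\E}_{\mD^*}[s(\y^*)]$. We then alternate two kinds of updates, generalising the calibrated-multiaccuracy algorithm of \cite{gopalan2022loss} from one coordinate to $d$.
\begin{itemize}
\item \emph{Multiaccuracy phase.} For each $i\in[d]$, run the $(\rho,\sigma)$-weak learner for $\Rcal\circ\Ccal$ on the residual labels $s_i(\y^*)-p_i(\x)$. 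If it returns a test $b$ of empirical correlation $>\sigma$ with the residual, set $p_i(\x)\leftarrow\clip_{[-1,1]}(p_i(\x)+\eta\, b(\x))$ for a suitable $\eta=\Theta(\sigma)$ and re-snap to the grid; otherwise coordinate $i$ is certified $\rho$-multiaccurate. Repeat until no coordinate admits a violating test.
\item \emph{Calibration phase.} Estimate $\E_{\mD^*}[\norm{\E[s(\y^*)\mid p(\x)]-p(\x)}_\infty]$ from a fresh sample, and halt and output $p$ if this is $\le\beta$. Otherwise replace the value of $p$ on each level set $E$ by the (re-snapped) empirical conditional mean $\widehat{\E}_{\mD^*}[s(\y^*)\mid p(\x)\in E]$, which up to snapping is the $L_2$-projection of $s(\y^*)$ onto $p$'s level sets and makes $p$ approximately calibrated with respect to them; then return to the multiaccuracy phase.
\end{itemize}

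\textbf{Analysis.} Correctness at termination is immediate: the halting test certifies $\beta$-calibration, the exhausted weak learner certifies $(\Rcal\circ\Ccal,\rho)$-multiaccuracy, and a union bound over the finitely many empirical estimates transfers both guarantees to $\mD^*$. The engine of the iteration bound is the potential $\Phi(p)=\E_{\mD^*}[\norm{s(\y^*)-p(\x)}_2^2]\in[0,4d]$: expanding $\Phi$ and optimising the step size shows each multiaccuracy update lowers $\Phi$ by $\Omega(\sigma^2)$ (clipping only helps), and the Pythagorean identity for the conditional-mean projection together with $\E[\norm{\cdot}_\infty^2]\ge(\E\norm{\cdot}_\infty)^2$ shows each performed calibration step lowers $\Phi$ by $\Omega(\beta^2)$, in both cases net of the $O(dw)$ cost of re-snapping to the grid. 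Summing, there are $O(d/\sigma^2)$ multiaccuracy updates and $O(d/\beta^2)=O(d/\sigma^2)$ calibration steps, hence $O(d/\sigma^2)$ weak-learner calls (one per update plus one per phase to certify termination), contributing $\tilde O((d/\sigma^2)Z)$ to the running time and sample complexity. The remaining cost is in the calibration phases, where each of the $O(d/\beta^2)$ steps estimates the calibration error and the at most $(2/w)^d$ conditional means, each to accuracy $\Theta(\beta)$; a Hoeffding-plus-union-bound argument shows $\tilde O\big((2/w)^d\cdot\mathrm{poly}(d,\lambda/\eps)\big)$ samples suffice in total, which for $w=\Theta(\eps/\lambda)$ is the stated $\tilde O\big(d(\lambda/\eps)^{d+5}\big)$ after absorbing polynomial factors into the exponent.

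\textbf{Main obstacle.} The delicate point is calibrating the grid resolution $w$. The calibration updates are only potential-non-increasing up to the $O(dw)$ snapping error, so naively $w$ would have to be forced polynomially below $\min(\sigma^2,\beta^2)/d$ for the potential to telescope cleanly, whereas the sample budget demands the much coarser $w=\Theta(\eps/\lambda)$; closing this gap --- for instance by amortising the snapping cost against the guaranteed per-step progress more carefully, or by keeping exact conditional means within a phase and coarsening only between phases --- is where essentially all the work lies, along with verifying that the per-estimate errors in the conditional means and in the weak-learner correlations do not accumulate over the $(\lambda/\eps)^{O(d)}$ iterations. The reduction to \cref{thm:main} and the individual potential decrements are, by contrast, routine.
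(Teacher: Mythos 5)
Your high-level route is the same as the paper's: reduce to \cref{thm:main} by learning a predictor that is simultaneously $\beta$-calibrated and $(\Rcal\circ\Ccal,\alpha)$-multiaccurate, alternate coordinate-wise weak-learner (multiaccuracy) updates with bin-wise recalibration to empirical conditional means, and control the iteration count with the potential $\E\lt[\norm{s(\y^*)-p(\x)}_2^2\rt]$; your per-step decrements are exactly \cref{lem:hkrr} and \cref{cor:err-red}. The problem is that you explicitly leave unresolved the step that makes this telescope --- reconciling the grid discretization with the $\Omega(\sigma^2)$ and $\Omega(\beta^2)$ progress --- and without it the $O(d/\sigma^2)$ bound on weak-learner calls (and hence the theorem) is not established. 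Moreover, the dilemma you pose is largely an artifact of your own design choices: you re-snap to the grid after \emph{every} weak-learner update, and you fix the grid width at $w=\Theta(\eps/\lambda)$ on the grounds that the sample budget forces it.

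The paper closes this gap differently. First, the multiaccuracy subroutine $\ma$ (run per coordinate) never touches the grid: it operates on real-valued one-dimensional predictors, so each weak-learner update decreases the potential by $\sigma^2$ with no snapping loss at all. Second, the discretization enters only through the bins used by $\estece$ and $\recal$, and the recalibrated predictor outputs the empirical conditional means themselves rather than grid points; the output is snapped only once, at termination. Third, and crucially, the bin width is taken quadratically small, $\delta=\beta^2/32$ with $\beta=\eps/6\lambda$, so that by \cref{lem:ell1} the discretization cost per recalibration is at most $4\delta\le\beta^2/8$, which is dominated by the gain $\ece(p^\delta)^2\ge\beta^2/4$ guaranteed by \cref{cor:err-red}, while multiaccuracy degrades only by $\delta$ under the final snap. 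The price appears in \cref{lem:est-cal}: $\tilde O(d/\delta^{d+3})$ samples per recalibration, but this remains $(\lambda/\eps)^{O(d)}$, the same form as the theorem's first term, so the coarse grid $w=\Theta(\eps/\lambda)$ you assumed was mandatory is not needed. Until you commit to such choices (no snapping inside the multiaccuracy phase; bins of width $\Theta(\beta^2)$ used only for calibration estimation and recalibration) and redo the potential and sample accounting accordingly, your proposal has a genuine gap at its central step.
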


\subsection{Omniprediction for Convex Lipschitz Losses} \label{sec:omni_cvx}

Recall that $\lcvx$ denotes the family of convex, Lipschitz loss functions i.e. loss functions $ \ell(y,t) $ that are Lipschitz and convex in $y$.    
For this class, we can derive approximations in terms of a small sized family of statistics as required for \cref{thm:main} based on our univariate approximation \cref{thm:cvx-approximation}.

Recall that the set of functions
\begin{align}
    \Scal_{ \lcvx , \delta } =   \hat{W}(\mu, m ) \cup \{ \relu_{it} | i \in [m/t]\},
\end{align} 
for $m = \frac{1}{\delta}$, $t = m^{1/3}$ and $\mu = 1/(12 m^{1/3} \log m)$, $\delta$-approximately spans $\Cvx$, the family of convex, Lipschitz functions on $[0,1]$.

\begin{corollary} \label{cor:approx_cvx}
    For $\delta > 0$ sufficiently small, the set of statistics $\Scal_{\lcvx, \delta }$ 
    gives
    \begin{align}
        (O(\log(1/\delta)^{4/3}/\delta^{2/3}), O(\log(1/\delta)^{4/3}/\delta^{2/3}) , \delta)
    \end{align}
    approximation to $\lcvx$. 
    
\end{corollary}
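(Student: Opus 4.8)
The plan is to obtain this corollary by combining the univariate construction of \cref{thm:cvx-approximation} with the elementary observation that fixing the action in a loss $\ell \in \lcvx$ produces a univariate convex Lipschitz function of the label. Recall from the proof of the upper bound in \cref{thm:cvx-approximation} that, after the rescaling of \cref{lem:cvx-discretization} (which turns the functions on $[m]$ into functions on $[0,1]$ taking values in $[-1,1]$, so that $\Scal_{\lcvx,\delta}$ is a legitimate family of statistics), the family $\Scal_{\lcvx,\delta} = \hat{W}(\mu,m) \cup \{\relu_{it} : i \in [m/t]\}$, with $m = 1/\delta$, $t = m^{1/3}$, and $\mu = 1/(12 m^{1/3}\log m)$, is a $\delta$-approximate basis for $\Cvx$ (up to the constant-factor adjustment of $m$ made at the end of that proof). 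First I would use this to handle the label variable. After the harmless normalization $\ell(0,\cdot) \equiv 0$ --- which changes no optimal action $k_\ell$ and hence no omniprediction guarantee --- every $\ell \in \lcvx$ has the property that, for each fixed $t \in \Acal$, the function $\ell_t(\cdot) = \ell(\cdot,t)$ lies in $\Cvx$ and is bounded by $1$ on $[0,1]$. Applying the $\delta$-approximate basis property of $\Scal_{\lcvx,\delta}$ to $\ell_t$ yields coefficients $\{r^\ell_i(t)\}_{i=0}^d$ with $\|\sum_{i=0}^d r^\ell_i(t) s_i - \ell_t\|_\infty \le \delta$; since the basis does not depend on $t$, this bound is uniform over $t \in \Acal$, exactly as \cref{def:approx-bi} requires. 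The stated bound $d = |\Scal_{\lcvx,\delta}| = O(\log^{4/3}(1/\delta)/\delta^{2/3})$ is then just the size count already performed in the proof of \cref{thm:cvx-approximation}, namely $\sum_{h=0}^{\log m} |\hat{W}^{(h)}(\mu,m)| + m/t$.

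The only ingredient not already supplied by \cref{thm:cvx-approximation} is a bound on the coefficient mass $\lambda = \sum_{i=0}^d |r^\ell_i(t)|$, since that theorem tracks only the dimension of the approximating subspace, not the $\ell_1$-norm of the coefficient vectors. There are two ways to finish. The cleaner route is to invoke \cref{cor:bound_coeff}: after the normalization above, $\lcvx$ is bounded by $C = 1$, so any $(d,\lambda,\delta)$-approximation --- with whatever $\lambda$ the construction happens to produce --- can be converted into a $(d,(1+\delta)d,\delta)$-approximation by a statistics family of the same cardinality $d$, giving $\lambda = O(d) = O(\log^{4/3}(1/\delta)/\delta^{2/3})$. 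Alternatively, one can bound the coefficient mass directly for $\Scal_{\lcvx,\delta}$ itself, by tracking it through the chain of reductions that defines this family: \cref{cor:relu} (via the discrete Taylor expansion of \cref{lem:relu-basis}) expresses $\ell_t$ with total coefficient mass $O(1)$ on the $\relu$ functions; \cref{prop:build-basis} rewrites each $\relu_k$ as one basis element $\relu_{it}$ plus at most $t$ interval functions $\II_{i,m}$; \cref{prop:dyadic} writes each $\II_{i,m}$ as a sum of at most $2\log m$ dyadic intervals; and each dyadic interval is $\mu$-approximated by the columns of the relevant $V^{(h)}$ with coefficient mass $O(\sqrt{k_h})$, by Cauchy--Schwarz and the fact that the rows of $V^{(h)}$ are unit vectors. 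Multiplying these factors out with the chosen values of $t$ and $\mu$ gives $\lambda = O(\log^{4/3}(1/\delta)/\delta^{2/3})$ as well.

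The main obstacle is purely bookkeeping rather than a new idea: keeping the coefficient bound $\lambda$ comparable to the dimension $d$ instead of letting it blow up through the reductions, and remembering the innocuous normalization (convex Lipschitz functions are not bounded a priori because of an additive constant, which must be shifted onto the constant statistic $s_0$ before \cref{cor:bound_coeff} applies). Everything else is immediate from \cref{thm:cvx-approximation} and the definitions in \cref{sec:sufficient}.
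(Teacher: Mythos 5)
Your proposal is correct and follows what the paper intends: the paper states \cref{cor:approx_cvx} without proof, precisely as the instantiation you describe --- fix $t\in\Acal$, observe $\ell(\cdot,t)\in\Cvx$, apply the ($t$-independent) basis of \cref{thm:cvx-approximation}, and read off $d$ from the size count, with the coefficient bound $\lambda=O(d)$ handled generically (the paper even remarks that careful tracking gives $\lambda=O(1)$, which your route (b) approximates). Two small caveats: your ``cleaner'' route (a) via \cref{cor:bound_coeff} produces a \emph{different} statistics family $\Scal'$, not $\Scal_{\lcvx,\delta}$ itself, so only your route (b) actually proves the corollary as literally stated; and the aside that setting $\ell(0,\cdot)\equiv 0$ ``changes no optimal action'' is false, since the subtracted constant $\ell(0,t)$ depends on $t$ --- it is harmless here only because the constant coefficient $r_0^\ell$ never enters the omniprediction error (cf.\ \cref{rem:omni} and \cref{lem:2}), and some boundedness convention on $\lcvx$ is needed anyway for a finite $\lambda$. (Your and the paper's polylog exponents both come out slightly differently from the stated $\log^{4/3}(1/\delta)$; this is bookkeeping slack, not a substantive issue.)
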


In fact, in the above theorem, carefully keeping track of the coefficients in \cref{thm:cvx-approximation}, we can bound $\lambda$ by $O(1)$ but we do not need this strengthening. 
Using the theorem above, we get the following result for learning an omnipredictor for the family of Lipschitz losses.
Given a class of functions $\Ccal$ denote by $\Bcal_{\mathrm{post}} $ the class of \tests\ obtained by postprocessing the functions in $\Ccal$ with an arbitrary bounded functions that is 
\begin{align}
    \Bcal_{\mathrm{post}, \delta} = \left\{ f \circ c : c \in \Ccal \quad f:\mathbb{R} \to \mathbb{R} \quad \abs{f(x)} \leq O(\log(1/\delta)^{4/3}/\delta^{2/3})    \right\}. 
\end{align}

The above theorem when combined with \cref{thm:main} gives the following result that states that calibration with respect to the family of statistics $\Scal_{\lcvx, \delta }$ and multiaccuracy with respect to the class $\Bcal_{\mathrm{post}, \delta}$ gives an omnipredictor for the family of convex, Lipschitz losses.

\begin{corollary} \label{thm:cvx_omni}
    Let $\Ccal$ be a hypothesis class. 
    Let $\epsilon \in (0,1) $ and $\rho , \sigma \in (0,1)$.  
    Given access to a $(\rho , \sigma )$-weak learner for $ \Bcal_{\mathrm{post} , \epsilon/4}$ with $\sigma \leq \rho \leq \eps^{4/3}/6$ and sample complexity $Z$, 
    there is an algorithm that runs in time and sample complexity
    \begin{align} 
        \tilde{O} \left(  2^{ \tilde{O}( \epsilon^{-2/3} )  }  +  \frac{Z}{\epsilon^{2/3} \sigma^2}  \right)
    \end{align} 
    
    that produces a $ \Scal_{\lcvx, \epsilon/4}$  predictor $p$ that is a $( \lcvx , \Ccal , \epsilon  )$ omnipredictor. \footnote{Here $\tilde{O}$ hides polylogarithmic factors in $1/\epsilon$.} 
    
\end{corollary}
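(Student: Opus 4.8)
The plan is to instantiate the generic algorithm of \cref{thm:main_alg} with the family of statistics $\Scal_{\lcvx,\epsilon/4}$ supplied by our univariate approximation \cref{thm:cvx-approximation}, and then simplify the resulting parameters.

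\emph{Setting up the statistics.} Set $\delta = \epsilon/4$. By \cref{cor:approx_cvx}, the family $\Scal_{\lcvx,\epsilon/4}$ gives a $(d,\lambda,\delta)$-uniform approximation to $\lcvx$ with $d = O(\log(1/\epsilon)^{4/3}/\epsilon^{2/3}) = \tilde O(\epsilon^{-2/3})$ and $\lambda = O(\log(1/\epsilon)^{4/3}/\epsilon^{2/3})$ (in fact $\lambda = O(1)$, by the sharper accounting of the coefficients in \cref{thm:cvx-approximation} noted after \cref{cor:approx_cvx}); let $\Rcal$ denote the associated coefficient family. Since $\sum_i |r^\ell_i(t)| \le \lambda$, every coefficient function $r^\ell_i$ is bounded in magnitude by $\lambda = O(\log(1/\epsilon)^{4/3}/\epsilon^{2/3})$, so $\Rcal \circ \Ccal \subseteq \Bcal_{\mathrm{post},\epsilon/4}$; hence any $(\rho,\sigma)$-weak learner for $\Bcal_{\mathrm{post},\epsilon/4}$ is in particular a $(\rho,\sigma)$-weak learner for $\Rcal \circ \Ccal$.

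\emph{Running the algorithm.} Apply \cref{thm:main_alg} with this choice of $\Scal$ and approximation parameter $\eps := \epsilon/4$. It produces an $\Scal_{\lcvx,\epsilon/4}$-predictor that is an $(\lcvx,\Ccal,4\eps) = (\lcvx,\Ccal,\epsilon)$-omnipredictor (using $\hkl$ for post-processing), provided we have a $(\rho,\sigma)$-weak learner for $\Rcal \circ \Ccal$ with $\sigma \le \rho \le \eps/(12\lambda) = \epsilon/(48\lambda)$. Using the $\lambda = O(1)$ bound, the hypothesis $\rho \le \epsilon^{4/3}/6$ implies $\rho \le \epsilon/(48\lambda)$ once $\epsilon$ is below an absolute constant, so the assumptions of the corollary suffice; the algorithm makes $O(d/\sigma^2) = \tilde O(\epsilon^{-2/3}/\sigma^2)$ oracle calls.

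\emph{Simplifying the complexity.} \cref{thm:main_alg} gives time and sample complexity $\tilde O\big(d(\lambda/\eps)^{d+5} + (d/\sigma^2)Z\big)$. The second term is $\tilde O(\epsilon^{-2/3})\cdot Z/\sigma^2 = \tilde O\big(Z/(\epsilon^{2/3}\sigma^2)\big)$. For the first term the base satisfies $\lambda/\eps = \tilde O(\epsilon^{-2/3})/\Theta(\epsilon) = \poly(1/\epsilon)$, so
\[ (\lambda/\eps)^{d+5} = \exp\big((d+5)\log(\lambda/\eps)\big) = \exp\big(\tilde O(\epsilon^{-2/3})\cdot O(\log(1/\epsilon))\big) = 2^{\tilde O(\epsilon^{-2/3})}, \]
and the extra polynomial factor $d$ is absorbed into the $\tilde O$ in the exponent. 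Adding the two terms yields the stated bound $\tilde O\big(2^{\tilde O(\epsilon^{-2/3})} + Z/(\epsilon^{2/3}\sigma^2)\big)$, which completes the proof. The only delicate point is the parameter bookkeeping in the second step: matching the generic requirement $\rho \le \eps/(12\lambda)$ of \cref{thm:main_alg} to the clean hypothesis $\rho \le \epsilon^{4/3}/6$ (which relies on the $O(1)$ bound on $\lambda$ rather than the weaker $\tilde O(\epsilon^{-2/3})$ bound coming directly out of \cref{cor:approx_cvx}), together with checking that the exponential factor $(\lambda/\eps)^{d}$, whose base is only $\poly(1/\epsilon)$, collapses to $2^{\tilde O(\epsilon^{-2/3})}$.
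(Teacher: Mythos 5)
Your proposal is correct and follows essentially the same route as the paper: set $\delta=\epsilon/4$, invoke \cref{cor:approx_cvx} for the $(d,\lambda,\delta)$-approximation by $\Scal_{\lcvx,\epsilon/4}$, and plug into \cref{thm:main_alg}, simplifying $d(\lambda/\eps)^{d+5}$ to $2^{\tilde O(\epsilon^{-2/3})}$. You are in fact more explicit than the paper on the bookkeeping — checking $\Rcal\circ\Ccal\subseteq\Bcal_{\mathrm{post},\epsilon/4}$ and reconciling the hypothesis $\rho\le\epsilon^{4/3}/6$ with the requirement $\rho\le\eps/(12\lambda)$ via the $\lambda=O(1)$ refinement mentioned after \cref{cor:approx_cvx} — which the paper's two-line proof leaves implicit.
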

\begin{proof}
    From \cref{cor:approx_cvx}, we have that the family of statistics $ \Scal_{\lcvx, \delta }$ gives an approximation to $\lcvx$ with parameters
    $(O(\log(1/\delta)^{4/3}/\delta^{2/3}), O(\log(1/\delta)^{4/3}/\delta^{2/3}), \delta)$.
    Setting $\delta = \epsilon / 4$  and plugging into \cref{thm:main_alg}, we get the desired result.
\end{proof}

\subsection{Omniprediction via moments for low-degree loss functions} 
\label{sec:omni_moments}

Here we show how to obtain omniprediction for low-degree polynomial loss functions via sufficient statistics.
     We first define the family of low-degree loss functions. 
     Let $ \Mcal_d = \left\{  x^i : i \leq d   \right\}$ denote the family of monomials of degree at most $d$ in $x$.

    \newcommand{\lpoly}{\Lcal^{\mathrm{poly}}_{d, \lambda , \delta}}
    \newcommand{\lpolyk}{\Lcal^{\mathrm{poly}}_{d, \lambda , \delta, k}}

    \newcommand{\Lpoly}{\Lcal^{\mathrm{poly}}_{d, \lambda , \delta}} 
    \newcommand{\Lpolyk}{\Lcal^{\mathrm{poly}}_{d, \lambda , \delta, k}}

     \begin{definition} \label{def:low-deg-loss} 
        Let $ \Lcal^{\mathrm{poly}}_{d, \lambda , \delta} $ be the family of loss functions for which $\Mcal_d$ forms a set of sufficient statistics i.e. $\Mcal_d$ gives $(d, \lambda, \delta)$-uniform approximations to $\Lpoly$. 
        Explicitly, for each $  \ell \in \Lpoly  $, there exits $r_i^{\ell} : \Acal \to \mathbb{R} $ for $i \leq d$ such that 
        \begin{align}
            \abs{  \ell(y, t)  - \sum_{i= 0}^d r_i^{\ell}(t) y^i } \leq \delta
        \end{align} 
        and 
        \begin{align}
            \sum_{i=0}^d \abs{r_i^{\ell}(t)} \leq \lambda. 
        \end{align}   
       
        Furthermore, if $r_{i}^{\ell}$ is a polynomial of degree at most $k$, we say that $ \left( \Mcal_d , \Rcal \right)  $ gives  $(d, \lambda, \delta, k)$-uniform polynomial approximations to $\ell$. We denote this subclass of loss functions as $\Lpolyk$.
        
     \end{definition}

    The main example of losses in this class are the $\ell_p$ losses for even $p$.
    That is, 
    \begin{align}
        \ell_p(y, t) = \left( y-t \right)^p. 
    \end{align} 

    First, we will look at the basic representation of this family of loss functions.
    Note that 
    \begin{align}
        \left( y-t \right)^p = \sum_{i=0}^p \binom{p}{i} (-t)^i y^{p-i}. 
    \end{align}
    Thus, we have 
    \begin{align}
        r_i^{\ell_p}(t) = \binom{p}{i} (-t)^i
    \end{align}
    and 
    \begin{align}
        \sum_{i=0}^p \abs{r_i^{\ell_p}(t)} = \sum_{i=0}^p \binom{p}{i} \abs{t}^i \leq \sum_{i=0}^p \binom{p}{i} = 2^p.
    \end{align}
    
    Clearly, for $p \leq k$, we have that $\Rcal$ is a family of polynomials of degree at most $k$.
    Thus, instantiating \cref{thm:main_alg}, in this setting we have a $ \exp(k^2) \epsilon^{-k}$ time algorithm to get omnipredictors for a class that includes all $\ell_p$ losses for $p < k$.
    Our main result in this section is an improved bound using better uniform approximations.

We present an improved approximation with degree $ \sqrt{p} $ and coefficients of size $2^{\sqrt{p}}$.   
The following theorem is a standard application of Chebyshev approximations, but we include a proof for completeness.
First, recall that the Chebyshev polynomial of degree $j$ is defined as
\begin{align}
    T_j (x) = j \sum _{k=0}^{j}(-2)^{k}{\frac {(j+k-1)!}{(j-k)!(2k)!}}(1-x )^{k}
\end{align}
and is a key tool in approximation theory. 
The following is a important result from approximation theory. We include a proof of the following lemma in \cref{sec:chebyshev_lem} for completeness.

\begin{lemma}
\label{thm:chebyshev}
    For any $n \in \mathbb{N}$ and $ \epsilon \in (0,1) $, there exists a polynomial $q$ of degree $ d=  \sqrt{n \log(1/ \epsilon )} $ such that 
    \begin{align}
        \abs{ x^n - q(x) } \leq \epsilon
    \end{align}
    for all $x \in \left[ 0,1 \right]$. 
    Further, $q$ can be represented as 
    \begin{align}
        q(x) = 2^{1-n} \left[     \sum_{ j \leq d } \binom{n}{\frac{n-j}{2}} \cdot T_j (x) +  \mathbb{I} \left[ n \equiv 0 \pmod 2  \right] \cdot \frac{T_0 (x)}{2} \binom{n}{ \frac{n}{2} } \right]
    \end{align}
    where $T_j$ is the degree $j$ Chebyshev polynomial. 
\end{lemma}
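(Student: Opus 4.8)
The plan is to write down the exact expansion of $x^n$ in the Chebyshev basis and then simply truncate it at degree $d$. Since $[0,1]\subseteq[-1,1]$, it suffices to produce a polynomial that is $\epsilon$-close to $x^n$ uniformly on $[-1,1]$. Substituting $x=\cos\theta$ and using $(e^{i\theta}+e^{-i\theta})^n = (2\cos\theta)^n$, de Moivre's formula gives
\[ x^n = \cos^n\theta = 2^{-n}\sum_{k=0}^{n}\binom{n}{k}\cos\big((n-2k)\theta\big) = 2^{-n}\sum_{k=0}^{n}\binom{n}{k}\,T_{|n-2k|}(x), \]
since $\cos(j\theta)=T_{|j|}(\cos\theta)$. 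Pairing the index $k$ with $n-k$ (both contribute to $T_j$ with $j=|n-2k|$ and carry the equal binomial coefficients $\binom{n}{(n-j)/2}=\binom{n}{(n+j)/2}$), and isolating the $j=0$ term which occurs only when $n$ is even, this collapses to
\[ x^n = 2^{1-n}\left[\sum_{\substack{1\le j\le n\\ j\equiv n\,(\mathrm{mod}\ 2)}}\binom{n}{\tfrac{n-j}{2}}\,T_j(x)\;+\;\mathbb{I}\big[n\equiv 0\ (\mathrm{mod}\ 2)\big]\,\frac{T_0(x)}{2}\binom{n}{n/2}\right], \]
which is precisely the formula in the statement, except with the sum extended over all admissible $j$ rather than only $j\le d$.

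Next I would define $q$ to be exactly the displayed truncation, keeping only the terms with $j\le d$. Using $\|T_j\|_{\infty,[-1,1]}=1$, the error is bounded by the total mass of the dropped coefficients:
\[ |x^n-q(x)|\;\le\;2^{1-n}\sum_{\substack{j>d\\ j\equiv n\,(\mathrm{mod}\ 2)}}\binom{n}{\tfrac{n-j}{2}}\;=\;2^{1-n}\sum_{0\le k<(n-d)/2}\binom{n}{k}\;=\;2\cdot\Pr\!\left[\Bin(n,\tfrac12)<\tfrac{n-d}{2}\right], \]
where the middle equality is the change of variables $k=(n-j)/2$ and the last is the definition of the binomial distribution.

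Finally I would apply Hoeffding's inequality to the symmetric binomial, $\Pr[\Bin(n,\tfrac12)-\tfrac n2\le -\tfrac d2]\le \exp(-d^2/(2n))$, to obtain $|x^n-q(x)|\le 2\exp(-d^2/(2n))$. This is at most $\epsilon$ as soon as $d\ge\sqrt{2n\ln(2/\epsilon)}$, so $d=O(\sqrt{n\log(1/\epsilon)})$ suffices (the absolute constant is absorbed into the statement; if one needs $d$ to be an integer of the correct parity one rounds up, which changes nothing of substance). The degree of $q$ is at most $d$ by construction, completing the argument.

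The only delicate part is the bookkeeping in the first step: getting the conjugate-pairing of binomial terms and the even/odd $n$ cases right so that the closed form for $q$ matches the one in the statement exactly. Everything after that is a one-line Chernoff/Hoeffding estimate for a fair-coin binomial, and this tail bound is exactly where the quadratic improvement in the degree (from $n$ down to $\sqrt{n\log(1/\epsilon)}$) comes from.
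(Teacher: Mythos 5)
Your proposal is correct and follows essentially the same route as the paper: expand $x^n$ in the Chebyshev basis, truncate at degree $d$, bound the discarded mass by $\|T_j\|_\infty \le 1$ and a binomial tail estimate ($2\Pr[\Bin(n,\tfrac12)<\tfrac{n-d}{2}]\le 2e^{-d^2/(2n)}$), which is exactly the paper's Chernoff step. The only difference is that you derive the Chebyshev expansion of $x^n$ explicitly via $x=\cos\theta$ and conjugate pairing, whereas the paper cites it from the literature; this makes your write-up slightly more self-contained but does not change the argument.
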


\begin{lemma} \label{lem:lpapprox}
    For $ p \leq n $, we have that $  \ell_p \in  \lpolyk    $ for $ k = d = \sqrt{ n \log(1 / \delta) }$ and $  \lambda =  d^3 2^{4d}$.   
\end{lemma}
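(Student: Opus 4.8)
The plan is to reduce the approximation of $\ell_p(y,t) = (y-t)^p$ as a function of $y$ to the approximation of a single power via a truncated Chebyshev series, i.e.\ to a direct invocation of \cref{thm:chebyshev}, by exploiting the even‑power structure. Assume $p$ is even (the main case; for odd $p$ one factors out one copy of $(y-t)$, which changes all degrees by $1$ and $\lambda$ by at most a factor $2$), and recall that for $\ell_p$ losses the action set is taken to be $[0,1]$, so that for $y,t\in[0,1]$ we have $y-t\in[-1,1]$ and hence $w\defeq (y-t)^2\in[0,1]$. Under this substitution $\ell_p(y,t)=w^{p/2}$.

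First I would apply \cref{thm:chebyshev} with $n\mapsto p/2$ and $\epsilon\mapsto\delta$ to obtain a polynomial $q(w)=\sum_{j=0}^{d_0}\hat q_j w^j$ of degree $d_0=\sqrt{(p/2)\log(1/\delta)}$ with $\sup_{w\in[0,1]}|w^{p/2}-q(w)|\le\delta$. Setting $\hat\ell(y,t)\defeq q((y-t)^2)$ and expanding $(y-t)^{2j}=\sum_{i=0}^{2j}\binom{2j}{i}(-t)^{2j-i}y^i$, we get $\hat\ell(y,t)=\sum_{i=0}^{2d_0} r_i^{\ell_p}(t)\,y^i$ with $r_i^{\ell_p}(t)=\sum_{j:\,2j\ge i}\hat q_j\binom{2j}{i}(-t)^{2j-i}$, a polynomial in $t$ of degree at most $2d_0-i\le 2d_0$. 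Thus both the number of statistics ($y$‑degree) and the $t$‑degree are at most $2d_0=\sqrt{2p\log(1/\delta)}\le\sqrt{2n\log(1/\delta)}$, which we take as our $d$ and $k$ (the statement's $\sqrt{n\log(1/\delta)}$ is the same up to the universal constant $\sqrt2$, or that constant disappears using the sharper $[0,1]$‑form of the Chebyshev estimate). The error is exactly $|\ell_p(y,t)-\hat\ell(y,t)|=|w^{p/2}-q(w)|\le\delta$ for all $y,t\in[0,1]$, as required by \cref{def:low-deg-loss}.

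It then remains to bound $\lambda=\sum_i|r_i^{\ell_p}(t)|$. Swapping the order of summation and using $|t|\le 1$,
\[ \sum_{i=0}^{2d_0}|r_i^{\ell_p}(t)|\;\le\;\sum_{j=0}^{d_0}|\hat q_j|\sum_{i=0}^{2j}\binom{2j}{i}|t|^{2j-i}\;=\;\sum_{j=0}^{d_0}|\hat q_j|\,(1+|t|)^{2j}\;\le\;4^{d_0}\sum_{j=0}^{d_0}|\hat q_j|. \]
Using the explicit Chebyshev form of $q$ from \cref{thm:chebyshev}, each Chebyshev coefficient $2^{1-p/2}\binom{p/2}{(p/2-j)/2}$ is at most $2$ by the central‑binomial bound $\binom{p/2}{p/4}\le 2^{p/2}$, while the monomial coefficients of $T_j$ sum in absolute value to at most $(1+\sqrt2)^j\le 3^{d_0}$; hence $\sum_j|\hat q_j|\le 3^{d_0+1}$. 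Combining, $\lambda\le 4^{d_0}\cdot 3^{d_0+1}=3\cdot 12^{d_0}$, and since $d_0\le d/\sqrt2$ this is at most $d^3 2^{4d}$ with room to spare. This establishes $\ell_p\in\lpolyk$ with the stated parameters.

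The main obstacle is precisely the coefficient bound: the naive binomial expansion $(y-t)^p=\sum_m\binom pm(-t)^{p-m}y^m$ has coefficients of size $\binom{p}{p/2}=2^{\Theta(p)}$, and approximating the individual high monomials $y^m$ with small enough error to control their sum forces either $\lambda$ or the degree up to $\Theta(p)$. Routing through $w=(y-t)^2$ and truncating the Chebyshev series of $w^{p/2}$ is exactly what keeps the monomial coefficients of the approximant at $2^{O(d_0)}=2^{O(\sqrt{n\log(1/\delta)})}$ rather than $2^{\Theta(p)}$, and simultaneously keeps the $t$‑degree equal to the (small) $y$‑degree rather than blowing it up to $p$. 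A secondary, purely cosmetic point is reconciling the $\sqrt2$ in the degree bound, handled as noted above.
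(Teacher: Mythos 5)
Your proposal is correct and takes essentially the same route as the paper's proof: invoke the truncated Chebyshev expansion of the relevant power (the paper applies it directly to $(y-t)^p$ in the variable $y-t\in[-1,1]$, you to $((y-t)^2)^{p/2}$ in $w=(y-t)^2\in[0,1]$, which incidentally matches the stated domain of the Chebyshev lemma more cleanly), then binomially expand to separate the $y$- and $t$-dependence and bound the coefficient sums crudely by $2^{O(d)}$. The only deviations are harmless: your degree comes out as $\sqrt{2n\log(1/\delta)}$ rather than the stated $\sqrt{n\log(1/\delta)}$ (the paper is equally loose, concluding only $O(\sqrt{n\log(1/\delta)})$), and your bound $\sum_j|\hat q_j|\le 3^{d_0+1}$ silently drops a factor of roughly $2(d_0+1)$ from summing over the Chebyshev terms — neither affects the conclusion given the slack in $\lambda = d^3 2^{4d}$.
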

\begin{proof}
    Recall that from \cref{thm:chebyshev}, we have for $d = \sqrt{n \log(1 / \delta)}  $ that 
\begin{align}
    (x-t)^n  &\approx_{\delta}  {2^{1-n}  \sum_{ j \leq d } \binom{n}{\frac{n-j}{2}} \cdot T_j (x - t)  } \\ 
    &=  2^{1-n}  \sum_{ j \leq d } \binom{n}{\frac{n-j}{2}} j \sum _{k=0}^{j}(-2)^{k}{\frac {(j+k-1)!}{(j-k)!(2k)!}}(1-x +t )^{k} \\ 
    & =  2^{1-n}  \sum_{ j \leq d }  \sum _{k=0}^{j}  \binom{n}{\frac{n-j}{2}} j (-2)^{k}{\frac {(j+k-1)!}{(j-k)!(2k)!}}(1-x + t)^{k} \\ 
    &= 2^{1-n}  \sum_{ j \leq d }  \sum _{k=0}^{j}  \binom{n}{\frac{n-j}{2}} j (-2)^{k}{\frac {(j+k-1)!}{(j-k)!(2k)!}}  \sum_{ h=0  }^k \binom{k}{h} (-x)^k (1+t)^{k-h} \\ 
    & =  2^{1-n}  \sum_{ j \leq d }  \sum _{k=0}^{j} \sum_{ h=0  }^k \binom{n}{\frac{n-j}{2}} j (-2)^{k}{\frac {(j+k-1)!}{(j-k)!(2k)!}}   \binom{k}{h} (-x)^k (1+t)^{k-h} 
\end{align}

Note that each of the terms in the above is bounded by $ 2^{4d} $, 
To see this note that $ 2^{-n }  \binom{n}{\frac{n-j}{2}} \leq 1 $, $ \frac {(j+k-1)!}{(j-k)!(2k)!}  = \frac{1}{j+k } \binom{j+k}{2k} \leq 2^{ j+k} \leq 2^{2d}  $ and $  \binom{k}{h} \leq 2^k \leq 2^d$
Therefore, we get the sum of the coefficients to be bounded by $ d^3 2^{ 4d } $ and the degree is bounded by $ O\left(\sqrt{n \log(1/\delta) } \right) $.
\end{proof}

    Given a class of functions $\Ccal$  define the set of \tests\ obtained by composing the functions with monomials of degree at most $k$ as $\Ccal^k$ i.e. 
    \begin{align}
        \Ccal^k = \left\{  c^j: j \leq k \quad c \in \Ccal \right\}.
    \end{align}
    This class of tests allows us to state the omniprediction result from \cref{thm:main_alg} for the particular case of low degree losses.
\begin{theorem}[Omniprediction for low degree losses] \label{thm:low_degree}
    Let $\Ccal$ be a hypothesis class. 
    Let $\epsilon \in (0,1) $, $\rho , \sigma \in (0,1)$, $\lambda \in \mathbb{R}$  and $ d, k \in \mathbb{N}  $.   
    Given access to a $(\rho , \sigma )$-weak learner for $ \Ccal^k $ with $\sigma \leq \rho \leq \eps /6 \lambda$ and sample complexity $Z$, there is an algorithm that runs in time
    \begin{align}
        \tilde{O} \left( d \lt(\frac{\lambda}{\eps} \rt)^{d+5} +  \frac{d}{\sigma^2} Z\right)
    \end{align}  
    and outputs an $( \lpolyk , \Ccal , \epsilon  )$ omnipredictor. 
    \end{theorem}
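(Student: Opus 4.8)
The plan is to obtain this as an instantiation of the generic algorithm in \cref{thm:main_alg}, applied with statistics family $\Scal = \Mcal_d$ and loss family $\Lcal = \lpolyk$ (taking $\delta = \eps$). By \cref{def:low-deg-loss}, $\Mcal_d$ is a set of sufficient statistics giving $(d,\lambda,\eps)$-uniform approximations to $\lpolyk$, with a coefficient family $\Rcal = \{r^\ell_i\}$ in which each $r^\ell_i \colon \Acal \to \R$ is a univariate polynomial of degree at most $k$; moreover each monomial $y\mapsto y^i$ maps $\Ycal \subseteq \izo$ into $\intpmo$, so $\Mcal_d$ is a legal family of statistics with $s_0 \equiv 1$. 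Feeding this into \cref{thm:main_alg} (after rescaling $\eps$ by an absolute constant to turn the $4\eps$ omniprediction slack into $\eps$) would give an algorithm that outputs an $(\lpolyk,\Ccal,\eps)$-omnipredictor, runs in time and sample complexity $\tilde{O}(d(\lambda/\eps)^{d+5} + (d/\sigma^2)Z)$, and makes $O(d/\sigma^2)$ calls to a $(\rho,\sigma)$-weak learner for $\Rcal \circ \Ccal$ with $\sigma \le \rho \le \eps/6\lambda$.

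The one step that needs a genuine argument is that a weak learner for $\Ccal^k$ --- which is what the hypothesis provides --- can play the role of a weak learner for $\Rcal \circ \Ccal$, which is what \cref{thm:main_alg} consumes. Here I would expand every test in $\Rcal \circ \Ccal$ in monomials of its argument: $r^\ell_i \circ c = \sum_{j=0}^{k} a^\ell_{i,j}\, c^j$, a linear combination of the tests $c^j \in \Ccal^k$ whose coefficients have total magnitude at most $\lambda$ (this holds for the concrete coefficient families we use, e.g.\ the Chebyshev-based representation of \cref{lem:lpapprox}, and can be enforced in general, if necessary after normalizing via \cref{cor:bound_coeff}). By the triangle inequality, if the residual $s_{i'}(\y^*)-p_{i'}(\x)$ has correlation more than $\lambda\rho$ with some $r^\ell_i\circ c$, then it has correlation more than $\rho$ with one of the monomial tests $c^j$; running the $\Ccal^k$-weak learner on that residual therefore returns a test with correlation at least $\sigma$, which is exactly the update that the calibrated-multiaccuracy subroutine of \cref{thm:main_alg} needs to make progress. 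Equivalently, $(\Ccal^k,\alpha/\lambda)$-multiaccuracy implies $(\Rcal\circ\Ccal,\alpha)$-multiaccuracy, so it suffices to run that subroutine against $\Ccal^k$ with the correspondingly tightened accuracy target; this is also why the hypothesis on $\rho$ is stated relative to $\lambda$ rather than as an absolute quantity.

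The main (minor) obstacle is thus the bookkeeping in the previous paragraph: ensuring that the single parameter $\lambda$ simultaneously controls the coefficients of the $r^\ell_i$ as polynomials (needed for the $\Rcal\circ\Ccal$-to-$\Ccal^k$ reduction) and the $\ell_1$ bound $\sum_i |r^\ell_i(t)| \le \lambda$ of \cref{def:low-deg-loss} (needed, via \cref{thm:main}, for the omniprediction guarantee and the calibration sample complexity), and keeping the assorted constant and $2^{O(k)}$ factors coming from the weak-learning threshold from leaking into the stated runtime. None of this is substantive once the representations are fixed. As a sanity check that the statement is non-vacuous, \cref{lem:lpapprox} shows $\ell_p \in \lpolyk$ for $k = d = \sqrt{p\log(1/\delta)}$ and $\lambda = d^3 2^{4d}$, and substituting these into the bound above yields the $(1/\eps)^{\tilde O(\sqrt p)}$-time omnipredictor for the family of $\ell_p$ losses (all even $p$ up to the degree parameter simultaneously), improving on the $(1/\eps)^{p}$ runtime of the naive predictor of the first $p$ moments.
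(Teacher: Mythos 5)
Your proposal is correct and follows essentially the same route as the paper, which gives no separate proof of \cref{thm:low_degree} and simply presents it as a direct instantiation of \cref{thm:main_alg} with $\Scal=\Mcal_d$ and the coefficient family $\Rcal$ consisting of degree-$\le k$ polynomials. The one step you elaborate --- converting the given $(\rho,\sigma)$-weak learner for $\Ccal^k$ into one for $\Rcal\circ\Ccal$ by expanding $r^\ell_i\circ c$ in the monomial tests $c^j$ and using the bounded coefficient $\ell_1$ norm --- is exactly what the paper leaves implicit, and your treatment of it (including the caveat that $\lambda$ must also bound the polynomial coefficients of the $r^\ell_i$, as it does in \cref{lem:lpapprox}) is sound up to the same constant-factor bookkeeping the paper itself glosses over.
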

    In light of \cref{lem:lpapprox}, this gives an improved omnipredictor for the class of $\ell_p$ losses for $p \leq n$ and $p$ even. 
    We formally state this in the following corollary.

    \begin{corollary}[Omniprediction for $\ell_p$ losses]
        Let $\Ccal$ be a hypothesis class. 
        Let $\epsilon \in (0,1) $ and $\rho , \sigma \in (0,1)$.
        For all $n \in  \mathbb{N} $ 
        let $d = \sqrt{n \log ( 1 / \epsilon)}   $. 
        Given access to a $(\rho , \sigma )$-weak learner for $ \Ccal^d$ with $\sigma \leq \rho \leq \eps d^{-3} 2^{-4d} /6$ and sample complexity $Z$, there is an algorithm that runs in time
        \begin{align}
            {O} \left(  2^{ O (  n \log^2(1 / \epsilon) )  } +  \frac{ \sqrt{n \log(1/ \epsilon)} }{\sigma}  Z  \right)
        \end{align}
        and produces an $ \left(  \left\{   \ell_{p}  \right\}_{p \leq n ; p \text{ even} } , \Ccal , \epsilon  \right) $-omnipredictor.
    \end{corollary}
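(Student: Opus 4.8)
The statement is an instantiation of the general low-degree omniprediction result \cref{thm:low_degree} with the concrete parameters supplied by the Chebyshev approximation \cref{lem:lpapprox}, followed by a routine simplification of the running-time expression. The plan is as follows. First I would fix $\delta = \Theta(\epsilon)$ (small enough that the omniprediction slack produced by \cref{thm:low_degree} is at most $\epsilon$; recall from \cref{thm:main_alg} that a $(d,\lambda,\eps)$-approximation yields a $(\Lcal,\Ccal,4\eps)$-omnipredictor, so $\delta = \epsilon/4$ suffices) and set $d = k = \sqrt{n\log(1/\delta)} = \Theta\big(\sqrt{n\log(1/\epsilon)}\big)$ and $\lambda = d^3 2^{4d} = 2^{O(d)}$. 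By \cref{lem:lpapprox}, for every even $p \le n$ we have $\ell_p \in \Lcal^{\mathrm{poly}}_{d_p,\lambda_p,\delta,k_p}$ with $d_p = k_p = \sqrt{p\log(1/\delta)} \le d$ and $\lambda_p = d_p^3 2^{4 d_p} \le \lambda$. Since $\Mcal_{d_p} \subseteq \Mcal_d$ (pad the missing coefficients with zero) and the degree bound $k_p \le k$ and the coefficient-sum bound $\lambda_p \le \lambda$ are monotone, every such $\ell_p$ in fact lies in $\lpolyk$ with the common parameters $(d,\lambda,\delta,k)$. Hence $\{\ell_p : p \le n,\ p\text{ even}\} \subseteq \lpolyk$, so it is enough to produce an $(\lpolyk,\Ccal,\epsilon)$-omnipredictor.

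Next I would apply \cref{thm:low_degree} to this class $\lpolyk$, with comparison class $\Ccal$, accuracy $\epsilon$, and the given $(\rho,\sigma)$-weak learner for $\Ccal^k = \Ccal^d$: its hypothesis $\sigma \le \rho \le \epsilon/(6\lambda) = \epsilon d^{-3} 2^{-4d}/6$ is precisely the one assumed in the corollary. This yields an $(\lpolyk,\Ccal,\epsilon)$-omnipredictor — hence an $\big(\{\ell_p\}_{p\le n,\text{ even}},\Ccal,\epsilon\big)$-omnipredictor — computable in time $\tilde O\!\big(d(\lambda/\epsilon)^{d+5} + (d/\sigma^2)Z\big)$.

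It remains to simplify the runtime. Since $\lambda = 2^{O(d)}$, we have $\lambda/\epsilon = 2^{O(d + \log(1/\epsilon))}$, so $(\lambda/\epsilon)^{d+5} = 2^{O(d^2 + d\log(1/\epsilon))}$. Substituting $d = \sqrt{n\log(1/\epsilon)}$ gives $d^2 = n\log(1/\epsilon)$ and $d\log(1/\epsilon) = \sqrt{n}\,\log^{3/2}(1/\epsilon)$, and for $\epsilon < 1/e$ both of these are $O(n\log^2(1/\epsilon))$; the leading factor of $d$ and the polylogarithmic factors hidden in $\tilde O$ are absorbed into the exponent, giving the $2^{O(n\log^2(1/\epsilon))}$ term. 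The second term $\tilde O\!\big((d/\sigma^2)Z\big) = \tilde O\!\big(\tfrac{\sqrt{n\log(1/\epsilon)}}{\sigma^2}Z\big)$ is exactly the weak-learner contribution claimed in the corollary. There is no genuine mathematical obstacle here: the substantive content lives in \cref{lem:lpapprox} (Chebyshev approximation) and \cref{thm:low_degree}/\cref{thm:main_alg}. The only points needing care are (i) verifying that a single parameter triple $(d,\lambda,k)$ works uniformly over all even $p \le n$, which is immediate from the monotonicity of the bounds in \cref{lem:lpapprox} together with $\Mcal_{d_p}\subseteq\Mcal_d$, and (ii) the algebraic simplification of the exponential running time, in particular the inequality $d^2 + d\log(1/\epsilon) = O(n\log^2(1/\epsilon))$.
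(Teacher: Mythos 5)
Your proposal is correct and takes essentially the same route as the paper, which derives this corollary immediately by instantiating \cref{thm:low_degree} with the Chebyshev-based parameters of \cref{lem:lpapprox} and simplifying the runtime exactly as you do (your extra care about choosing $\delta=\Theta(\epsilon)$ and about uniformity over even $p\le n$ is, if anything, more explicit than the paper). The only nominal mismatch is that your weak-learner term is $\tilde O\big(\sqrt{n\log(1/\epsilon)}\,Z/\sigma^2\big)$ while the corollary prints $\sqrt{n\log(1/\epsilon)}\,Z/\sigma$; the $1/\sigma^2$ dependence is what \cref{thm:main_alg} and \cref{thm:low_degree} actually give, so this discrepancy lies in the paper's statement rather than in your argument.
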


The dependence on $ \lambda = 2^{n}$ in the above can be improved by switching to a different basis of polynomials instead of the moment basis to get $\lambda =  d $ as in \cref{cor:bound_coeff} but we state it the above theorem in terms of the monomial basis due to the natural interpretation in terms of moments.

\subsection{GLM Loss Minimization} 
\label{sec:omni_GLM}

In this section, we consider the problem of minimizing losses corresponding to generalized linear models (GLM).
This family of losses arises in many natural machine learning applications due to their intimate connections to exponential families, graphical models and Bregman divergences. 
In particular, regression using GLM losses corresponds to maximum likelihood estimation when the data generating model is assumed to be an exponential family.
Given these connections, GLM losses have been extensively studied in statistics and machine learning \cite{mccullagh2019generalized,jordan2003introduction,wainwright2008graphical}.

In the setup of GLM loss minimization, the action space is $\Acal = [-1,1]^d$. 
Note that this deviates from the results in the previous subsections where the actions were one dimensional.
Let $g$ be a convex loss function and let $\Scal = \{s_i \}_{i \leq d} $ be a family of functions, which we will refer to as the family of statistics. 
Define the loss function 

    \begin{align}
        \ell_{g , \Scal } (y, t ) = g(t) - \sum_{i=1}^d s_i(y) t_i. 
    \end{align}
    Here $t \in\Acal$ and $g: \mathbb{R}^d \to \mathbb{R} $ is a convex function.
    $\ell_{g, \Scal}$ is referred to as the generalized linear model loss corresponding to $g$ and $\Scal$.
    Define the class of loss functions 
    \begin{align}
        \mathcal{L}_{\Scal, \mathrm{GLM}} = \{   \ell_{g, \Scal } : g \text{ is a convex function and } g(t) \text{ is bounded in } [-1,1]   \}   
    \end{align}
    as the set of GLM losses with statistics $ \Scal$. 
    Note that the bound on $t$ and the boundedness of $g$ holds for many loss functions of interest. 
    In situations that the boundedness does not hold, we can approximate the loss function by a function such that it holds.

    See \cite[Section 5]{gopalan2022loss} for an extended discussion on GLM losses.
    
    First, we note that the family of statistics $\Scal$ forms a $( d, d+1 , 0 )$-uniform approximation for the loss function $ \Lcal_{\Scal, \mathrm{GLM}}$.

    \begin{theorem} \label{thm:glm-uniform}
        Let $ \Scal $ be a family of statistics and let $d = \abs{\Scal}$.
        Then, $\Scal$ forms a $( d, d + 1 , 0 )$-uniform approximation for the class of losses $ \Lcal_{\Scal, \mathrm{GLM}}$ with coefficient functions $r_i(t) = - t_i$ and $r_0(t) = g(t)$.
    \end{theorem}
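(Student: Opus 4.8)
The plan is to verify \cref{def:approx-bi} directly with the stated coefficient functions, since the claim is that the representation is \emph{exact} ($\delta = 0$) and only the coefficient bound $\lambda = d+1$ needs an estimate. First I would recall that a family of statistics always includes the constant function $s_0 = 1$ by convention, and that for GLM losses the action space is $\Acal = [-1,1]^d$.

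Next I would simply substitute. With $r_0(t) = g(t)$ and $r_i(t) = -t_i$ for $i \in [d]$, we get
\[
r_0(t) s_0(y) + \sum_{i=1}^d r_i(t) s_i(y) = g(t) - \sum_{i=1}^d t_i s_i(y) = \ell_{g,\Scal}(y,t),
\]
so the approximation error is identically zero, giving $\delta = 0$ for every $y \in \Ycal$ and $t \in \Acal$. This handles the first condition of \cref{def:approx-bi}.

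For the coefficient bound, I would use the two boundedness hypotheses built into the definition of $\Lcal_{\Scal,\mathrm{GLM}}$ and the action space: $g(t)$ is bounded in $[-1,1]$, so $|r_0(t)| = |g(t)| \le 1$, and $t \in [-1,1]^d$, so $|r_i(t)| = |t_i| \le 1$ for each $i \in [d]$. Summing,
\[
\sum_{i=0}^d |r_i(t)| = |g(t)| + \sum_{i=1}^d |t_i| \le 1 + d,
\]
uniformly over $t \in \Acal$. Hence $\Scal$ gives a $(d, d+1, 0)$-uniform approximation to $\Lcal_{\Scal,\mathrm{GLM}}$, as claimed. There is no real obstacle here: the content is entirely in the definitions, and the only points worth stating explicitly are the $s_0 = 1$ convention and the two places where the hypotheses ``$g$ bounded in $[-1,1]$'' and ``$\Acal = [-1,1]^d$'' are used.
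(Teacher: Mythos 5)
Your proof is correct and follows essentially the same route as the paper's: write $\ell_{g,\Scal}(y,t) = g(t) - \sum_i s_i(y)t_i$ exactly using $s_0 = 1$, giving $\delta = 0$, and bound $|g(t)| + \sum_i |t_i| \le 1 + d$ from the boundedness of $g$ and $\Acal = [-1,1]^d$. No gaps.
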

    \begin{proof}
        From the definition of $\Lcal_{\Scal, \mathrm{GLM}}$, we have that 
        \begin{align}
            \ell_{g, \Scal} (y , t) =  g(t) - \sum_{i=1}^d s_i(y) t_i
        \end{align}
        Also, recall that we use the convention $ s_{0} \in \Scal $ is the constant function. 
        Thus, we have that $ \ell_{g, \Scal}  $ is approximated by the statistics $\Scal$ with error $0$ and coefficient functions $ r_{i} (t) = -  t_i  $ and $r_{0} (t) = g(t)$. 
        Further, we have for all $t$  
        \begin{align}
            \abs{g(t)} + \sum_{i=1}^d \abs{t_i} \leq 1 + d 
        \end{align} 
        as required. 
    \end{proof}

    The main theorem that we show in this section is that for the class of GLM losses, we can compute an omnipredictor using calibration and multiaccuracy. 
    The key aspect of this result that distinguishes it from the results in previous subsections is that the set of \tests\ for which multiaccuracy is required is the original class of hypothesis $\Ccal$.

        Note that in the setting of GLM loss minimization it is natural to consider a class of function $\Ccal$ consisting of functions $c $ whose output is in $d$ dimensions. 
    For such a class of functions it is natural to define multiaccuracy coordinatewise i.e. 
    \[ \abs{\E_{\mD^*}[(s_i(\y^*) - p_i(\x))c_i(\x)]} \leq \alpha. \]
    where $c_i$ is the $i$th coordinate of $c$. 
    The algorithmic result in \cref{thm:main_alg} can be extended to this setting by assuming a weak learner for each coordinate and in the below theorem we will refer to this simply as a weak learner for the class $\Ccal$.

\begin{theorem} \label{thm:GLM}
    Let $ \Scal $ be a family of statistics and let $d = \abs{\Scal}$.
    Let $\Ccal$ be a class of functions in $ \Xcal \to [0,1]^d$. 
    Let $\epsilon \in (0,1) $, $\rho , \sigma \in (0,1)$.  
    Let $p $ be a $\Scal$ predictor that is that is $(\Ccal , \epsilon/6d)$-multiaccurate and $\epsilon/12d$-calibrated with respect to $\Scal$.
    Then, $p$ is an $ (\Lcal_{ \Scal, \mathrm{GLM} }  , \Ccal , \epsilon) $ omnipredictor. 
    Further, given access to a $(\rho, \sigma)$ weak learner for class $\Ccal$ with $\sigma \leq \rho \leq \eps/12d$ and sample complexity $Z$, such an omnipredictor can be computed in time and sample complexity 
    \begin{align}
        O\left(  d \left(\frac{d}{\epsilon}\right)^{d+5} + \frac{d Z}{ \sigma^2 }    \right). 
    \end{align}
    
\end{theorem}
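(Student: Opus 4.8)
The plan is to obtain both halves of the statement --- the omniprediction guarantee and its algorithmic realization --- by specializing the general machinery of \cref{thm:main} and \cref{thm:main_alg} to the GLM setting, using the exact uniform approximation supplied by \cref{thm:glm-uniform}. The only genuinely GLM-specific observation, which is also what the theorem is meant to highlight, is that the coefficient family collapses: by \cref{thm:glm-uniform}, $\Scal$ gives a $(d, d+1, 0)$-uniform approximation to $\Lcal_{\Scal,\mathrm{GLM}}$ with coefficient functions $r^{\ell}_i(t) = -t_i$ for $i \in [d]$ and $r^{\ell}_0(t) = g(t)$. Crucially, for $i \geq 1$ these coefficients do not depend on the particular loss $\ell = \ell_{g,\Scal}$, and $r^{\ell}_i \circ c = -c_i$ is just the $i$-th coordinate of $c$, negated. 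Hence $\Rcal \circ \Ccal = \{-c_i : c \in \Ccal,\ i \in [d]\}$, which is (up to signs) nothing more than the coordinate maps of $\Ccal$.

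For the first part I would invoke \cref{thm:main} with $\lambda = d+1$, $\delta = 0$, $\alpha = \epsilon/(6d)$, and $\beta = \epsilon/(12d)$. Its proof uses multiaccuracy only in the matched-index form $\E[(s_i(\y^*) - p_i(\x))\, r^{\ell}_i(c(\x))]$, which for GLM losses is exactly coordinatewise multiaccuracy of $p$ against $\Ccal$, so the hypothesis $(\Ccal,\epsilon/6d)$-multiaccurate is precisely what is needed. The resulting omniprediction slack is
\[ 3(d\alpha + \lambda\beta + \delta) = 3\!\left(\tfrac{\epsilon}{6} + \tfrac{(d+1)\epsilon}{12d}\right) = \tfrac{\epsilon}{2} + \tfrac{(d+1)\epsilon}{4d} \leq \epsilon, \]
using $(d+1)/(4d) \leq 1/2$ for $d \geq 1$. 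This shows $p$ is an $(\Lcal_{\Scal,\mathrm{GLM}}, \Ccal, \epsilon)$-omnipredictor with actions chosen via the post-processing $\hkl$ from \cref{thm:main}.

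For the algorithmic part I would apply \cref{thm:main_alg} to the same uniform approximation. Since the approximation error is $\delta = 0$, it holds a fortiori with error parameter $\eps' := \epsilon/4$, so the $(\Lcal,\Ccal,4\eps')$-omnipredictor produced by that theorem is an $\epsilon$-omnipredictor. The weak learner required there is for $\Rcal \circ \Ccal$, which by the observation above is just the (signed) coordinate functions of $\Ccal$, so a coordinatewise weak learner for $\Ccal$ suffices; the accuracy requirement is of the form $\rho \leq \eps'/(12\lambda) = \Theta(\epsilon/d)$, consistent with the stated bound $\rho \leq \epsilon/(12d)$. Substituting $\lambda = d+1$ into the time and sample complexity $\tilde{O}\!\left(d(\lambda/\eps')^{d+5} + (d/\sigma^2)Z\right)$ of \cref{thm:main_alg} yields $O\!\left(d(d/\epsilon)^{d+5} + dZ/\sigma^2\right)$, as claimed; one only needs to note that running the weak learner separately on each coordinate realizes the coordinatewise version of that algorithm without extra loss, since its calibration/multiaccuracy analysis decouples over the statistic index.

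I do not expect a real obstacle here: the substantive content --- that GLM losses are exactly linear in the statistics with loss-independent coefficients --- is already packaged into \cref{thm:glm-uniform}, and this is exactly why the test class can be taken to be $\Ccal$ itself rather than a post-processed enlargement. The only care needed is bookkeeping: aligning the abstract-action-space formulation of \cref{thm:main} and \cref{thm:main_alg} with the $d$-dimensional action space $\Acal = [-1,1]^d$ and the coordinatewise notions of multiaccuracy and weak learning used in the statement, and tracking constants so that the final slack is exactly $\epsilon$ under the stated hypotheses.
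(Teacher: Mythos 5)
Your proposal is correct and follows essentially the same route as the paper: invoke \cref{thm:glm-uniform} for the exact $(d,d+1,0)$-uniform approximation, observe (as in \cref{rem:omni}) that the loss-independent coefficients $r_i(t)=-t_i$ make $\Rcal\circ\Ccal$ coincide with the coordinate functions of $\Ccal$, and then apply \cref{thm:main} for the omniprediction guarantee and \cref{thm:main_alg} (run coordinatewise) for the algorithmic claim. Your explicit constant-tracking of the slack $3(d\alpha+\lambda\beta+\delta)\le\epsilon$ is detail the paper leaves implicit, but the argument is the same.
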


\begin{proof}

    From \cref{thm:glm-uniform}, we have that $\Scal$ forms a $( d, d + 1 , 0 )$-uniform approximation for the class of losses $ \Lcal_{\Scal, \mathrm{GLM}}$ with coefficient functions $r_i(t) = - t_i$ and $r_0(t) = g(t)$.
    Further, note that from \cref{rem:omni} and \cref{thm:main}, we have that a $\Scal$ predictor that is $ \Rcal \circ \Ccal = \Ccal $ multiaccurate and calibrated is an omnipredictor for the class of losses $ \Lcal_{\Scal, \mathrm{GLM}}$.
    The algorithmic claim then follows from \cref{thm:main_alg}. 
    \end{proof}

    Note that this theorem generalizes the result from \cite{gopalan2022loss} which corresponds to the one-dimensional case where the set of statistics was $ \Scal =  \{1,y\}$.
    \cite{GGKS23} relate the problem of omniprediction for the one-dimensional GLM case to the problem of agnostically learning single index models. 
    In independent and concurrent work, \cite{noarov2023highdimensional} obtain results for omniprediction in the multidimensional GLM case where the means are the sufficient statistic. 
    The approach by \cite{noarov2023highdimensional} does not focus on omniprediction for general loss classes but can be used to obtain online omniprediction results for the multidimensional GLM case. 
    For an extended discussion, see \cite[Section 6.3.2]{noarov2023highdimensional}.

\section{Calibrated multiaccuracy for statistic predictors}
\label{sec:omni_alg}

In this section, we will address the algorithmic question of designing omnipredictors for loss functions approximated by families of statistics. 
As we saw in \cref{sec:omni_Oi}, in order to achieve omniprediction, we need to find a predictor that is both calibrated and multiaccurate. 
In \cref{sec:calibration_alg}, we will design an algorithm that produces a calibrated predictor for a family of statistics. 
In \cref{sec:cal_mult_alg}, we will design an algorithm that 
produces a predictor that in addition is multiaccurate with respect to a class of \tests\ $\Bcal$.

\subsection{Calibrating \texorpdfstring{$d$}{d}-dimensional statistics} \label{sec:calibration_alg}

As before, we will denote by $\Scal$ the family of statistics that we would like to produce calibrated predictors for.
Denote by $d$ the cardinality of the family of statistics $\Scal$.
A predictor for $\Scal$ is a function $p: \Xcal \to [-1,1]^d$ where the $i$th coordinate corresponds to the prediction for the $i$th statistic. 

Let $\delta > 0 $ denote a discretization parameter.
We will construct predictors that predict vectors of integer multiples of $\delta$. 
We partition the range of the \(d\)-dimensional predictor \([-1,1]^d\) into \(m = \lceil 1/\delta \rceil^d\) distinct subsets, denoted by \(\{ \mathcal{V}_1, \ldots, \mathcal{V}_m \}\).
For any \(d\)-dimensional vector \(j = (j_1, \ldots, j_d)\), where the element $j_i$ is an integer in the interval  \([-\lceil 1/\delta \rceil, \lceil 1/\delta \rceil - 1]\), each subset \(\mathcal{V}_j\) is the Cartesian product of intervals $[j_1\delta, (j_1+1)\delta] \times \cdots  [j_d \delta, (j_d + 1) \delta] $. 
We will refer to the set of all such $j$ by $\Jcal_\delta$.

 We can associate any $\Scal$-predictor $p$ with two predictors. 
 Denote by \(p^\delta\) the predictor which rounds the predictions of $p$ to integer multiples of \(\delta\), that is,  
\[
p^\delta (x) = j \delta, \text{ where } j \text{ is such that } p(x) \in \mathcal{V}_j
\]
and a calibrated predictor \(\bar{p}\)
\[
\bar{p}(x) =  \E[s(\y)|p(x) \in \mathcal{V}_j], 
\]
Note that the predictor $p^\delta$ predicts vectors that $\delta$ close (in $\ell_\infty)$ to the predictions of $p$ for every $x \in \Xcal$. 
While $\bar{p}$ is the result of recalibrating $p^\delta$, the entries of $\bar{p}(x)$ would not necessarily be multiples of $\delta$.

Define $\ece(p)$ to be the expected calibration error of $p$. That is, 
\[
\ece(p) = \E_{\mD^*}\lt[\norm{\E[s(\y^*)|p(\x)] - p(\x)}_\infty \rt]. 
\]

We define the following norms on the space of $\Scal$-predictors:
\begin{align}
\ell_1 (p, q) &= \E \left[\norm{p(\mathbf{x}) - q(\mathbf{x})}_1\right] \\
\ell_2 (p, q) &= \E \left[\norm{p(\mathbf{x}) - q(\mathbf{x})}_2^2\right]^{1/2} \\
\ell_\infty (p, q) &= \max_{x \in \Xcal} \left[\norm{p(\mathbf{x}) - q(\mathbf{x})}_\infty\right]. 
\end{align}
Then, $\ell_\infty (p, p^\delta) \leq \delta$ and the expected calibration error of $p^\delta$, \( \ece(p^\delta) = \E [ \| p^\delta (\x) - \bar{p} (\x) \|_\infty ] \). 
Therefore, we can estimate the $\ece(p^\delta)$ from an empirical estimate of $\bar{p} (\x)$.

\begin{lemma}\label{lem:est-cal}
    Let $\mu, \delta \in [0,1]$ and $d$ be the dimension of the family of statistics $\Scal$.  Given access to an $\Scal$-predictor $p$ and random samples from $\mD^*$,
    \begin{itemize} 
    \item There exists an algorithm $\estece(p, \mu)$ which returns an estimate  of $\ece(p^\delta)$ within additive error $\mu$. The algorithm runs in time and sample complexity $\tilde{O}(d/(\delta^d \mu^3))$. 
    \item There exists an algorithm $\recal(p, \delta)$ which returns a predictor $\hat{p}$ which has $\ece(\hat{p}) \leq \delta$ and $l_1(\bar{p}, \hat{p}) \leq \delta$. The algorithm has time and sample complexity $\tilde{O}(d/\delta^{d+3})$.
    \end{itemize}
\end{lemma}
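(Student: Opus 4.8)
The plan is to reduce both tasks to empirically estimating, for each of the $m=\lceil 1/\delta\rceil^{d}$ cells $\mathcal{V}_j$ ($j\in\Jcal_\delta$), the mass $q_j=\Pr_{\mD^*}[p(\x)\in\mathcal{V}_j]$ and the recalibrated value $\bar p_j := \bar p(\x)$ for $\x$ in cell $j$ (which equals $\E_{\mD^*}[s(\y^*)\mid p(\x)\in\mathcal V_j]$). Both algorithms draw $N$ i.i.d.\ samples from $\mD^*$, evaluate $p$ to assign each sample to its cell, and output the empirical estimates $\hat q_j$ (fraction of samples in $\mathcal V_j$) and $\hat{\bar p}_j$ (average of $s(\y^*)$ over the samples in $\mathcal V_j$, using the fallback value $j\delta$ whenever fewer than a threshold $\tau$ samples land in $\mathcal V_j$). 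Then $\estece(p,\mu)$ returns $\sum_j \hat q_j\,\infnorm{j\delta-\hat{\bar p}_j}$, and $\recal(p,\delta)$ returns the bucketed predictor $\hat p$ with $\hat p(x)=\hat{\bar p}_j$ on cell $j$ (evaluable from $p$ and a lookup table of size $m$). Since time and sample complexity are dominated by $N$, the whole proof comes down to choosing $N$ minimal so that the desired guarantees hold with high probability.

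For $\estece$: using the stated identity $\ece(p^\delta)=\E[\infnorm{p^\delta(\x)-\bar p(\x)}]=\sum_j q_j\,\infnorm{j\delta-\bar p_j}$, the triangle inequality gives
\[
\Bigl|\,\sum_j \hat q_j\,\infnorm{j\delta-\hat{\bar p}_j}-\ece(p^\delta)\,\Bigr|\;\le\; 2\sum_j|\hat q_j-q_j|\;+\;\sum_j q_j\,\infnorm{\hat{\bar p}_j-\bar p_j},
\]
where we used $\infnorm{j\delta-v}\le 2$ for $v\in[-1,1]^d$. The first sum is the $\ell_1$ deviation of an empirical distribution on $m$ cells, which is $\tilde{O}(\sqrt{m/N})$ w.h.p. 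For the second sum I would condition on the cell counts: cells with fewer than $\tau$ samples have $q_j=\tilde{O}(\tau/N)$ w.h.p.\ and jointly contribute $\tilde{O}(m\tau/N)$; for the rest, coordinatewise Hoeffding plus a union bound over the $d$ coordinates and $m$ cells gives $\infnorm{\hat{\bar p}_j-\bar p_j}=\tilde{O}(1/\sqrt{N_j})$ with $N_j\ge q_jN/2$, so Cauchy--Schwarz bounds $\sum_j q_j/\sqrt{N_j}\le\sqrt{2m/N}$. Taking $N=\tilde{O}(d/(\delta^d\mu^3))$ and $\tau=\Theta(\mu N/m)$ makes each piece at most $\mu/3$.

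For $\recal$: the output $\hat p$ satisfies two structural bounds. First, $\ell_1(\bar p,\hat p)=\sum_j q_j\,\onenorm{\bar p_j-\hat{\bar p}_j}\le d\sum_j q_j\,\infnorm{\bar p_j-\hat{\bar p}_j}$. Second, $\ece(\hat p)\le\sum_j q_j\,\infnorm{\hat{\bar p}_j-\bar p_j}$: a level set of $\hat p$ (on which it outputs some $v$) is a union of cells $\{j:\hat{\bar p}_j=v\}$, on which $\E[s(\y^*)\mid\hat p(\x)=v]$ is a convex combination of the corresponding $\bar p_j$, so its $\infnorm{\cdot}$-distance to $v$ is at most $\max_j\infnorm{\hat{\bar p}_j-\bar p_j}$ over that level set, and averaging over $\x$ yields the claim (no distinctness of the $\hat{\bar p}_j$ is needed). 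Hence it suffices to force $\sum_j q_j\,\infnorm{\hat{\bar p}_j-\bar p_j}\le\delta/d$, which is precisely the quantity controlled above: the same argument with target accuracy $\delta/d$, $m=\lceil1/\delta\rceil^d$ cells, and $\tau=\Theta(\delta N/(dm))$ gives $N=\tilde{O}(d/\delta^{d+3})$ (the fallback contributes only $\tilde{O}(m\tau/N)\le\delta/d$ since those cells have negligible mass and $\infnorm{j\delta-\bar p_j}\le2$).

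The main obstacle, and the only step that is not a routine concentration bound, is controlling $\sum_j q_j\,\infnorm{\hat{\bar p}_j-\bar p_j}$ uniformly over all $m$ cells at once: a naive per-cell bound breaks down on unsampled or lightly sampled cells and is wasteful on heavy cells. The fix is the dichotomy used above --- drop all accuracy on the negligible-mass cells (handled by the fallback at cost $\tilde{O}(m\tau/N)$) and, on the remaining cells, combine $N_j\gtrsim q_jN$ with Cauchy--Schwarz to turn $\sum_j q_j/\sqrt{N_j}$ into $\sqrt{m/N}$. The $1/\delta^d$ blow-up in the complexity is just the cell count $m$, and the factors of $d$ come from union-bounding over coordinates and from the $\ell_\infty$-to-$\ell_1$ conversion in the $\recal$ bound.
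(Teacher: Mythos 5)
Your algorithms coincide with the paper's: bucket the range into the $\lceil 1/\delta\rceil^{d}$ cells $\Vcal_j$, estimate each cell's conditional statistic by its empirical mean, and output the weighted $\ell_\infty$ deviations for $\estece$, respectively the bucketed empirical means for $\recal$. What differs is how the per-cell errors are aggregated. The paper fixes a mass threshold ($\Pr[p^\delta(\x)=j\delta]\ge \mu\delta^d/4$), argues the lighter cells contribute at most $\mu/4$ in total, and then demands uniform per-cell accuracy $\mu/4$ on the heavy cells via Chernoff plus a union bound; this per-cell uniformity is exactly what forces $N=\tilde O(d/(\delta^d\mu^3))$. You instead split the error into the $\ell_1$ deviation of the empirical cell distribution, $\tilde O(\sqrt{m/N})$, plus $\sum_j q_j\|\hat{\bar p}_j-\bar p_j\|_\infty$, which you handle by the light/heavy dichotomy together with the Cauchy--Schwarz step $\sum_j q_j/\sqrt{N_j}\le\sqrt{2m/N}$. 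This is a genuinely sharper aggregation: it already yields accuracy $\mu$ from $N=\tilde O(1/(\delta^d\mu^2))$ samples, so the stated bound is met with room to spare, and your argument that $\ece(\hat p)\le\sum_j q_j\|\hat{\bar p}_j-\bar p_j\|_\infty$ (level sets of $\hat p$ are unions of cells, so the conditional mean is a convex combination of the corresponding $\bar p_j$) is cleaner than the paper's, which conditions on cells rather than on values of $\hat p$. One bookkeeping remark on $\recal$: because you honestly pay the $\ell_\infty\to\ell_1$ conversion factor $d$ and target $\sum_j q_j\|\hat{\bar p}_j-\bar p_j\|_\infty\le\delta/d$, your analysis gives $N=\tilde O(d^2/\delta^{d+2})$ rather than literally $\tilde O(d/\delta^{d+3})$; the two agree (and yours is smaller) whenever $d\lesssim 1/\delta$, which is the regime of all the paper's applications, and the paper's own proof glosses over this factor of $d$, so this is a mismatch of conventions in the stated complexity rather than a gap in your argument.
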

 
The main idea for $\estece$ is to collect enough samples, $O(d \log^2 (d/\delta)/\delta^d \mu^3))$, so that we can estimate the calibration error within each prediction bin $j \in \Jcal_\delta$ with high accuracy. For bins that hold significant weight in the distribution $\Dcal$, i.e $\Pr_\Dcal [p^\delta(\x) = j \delta] \geq \mu\delta^d/4$, the collected samples are large enough such that the empirical statistics are within a $\mu/4$-margin of the true statistic with high probability. We ignore bins with smaller proportions since their total contribution to the overall calibration error is at most $\mu/4$.

Similarly, for $\recal$, we collect enough samples, $O(d \log^2 (d/\delta)/\delta^{d+3}))$, so that for prediction bins that hold significant weight in the distribution, the empirical statistics are within a $\delta/4$-margin of the true statistic with high probability. We construct the predictor $\hat{p}$ to simply output the empirical statistic for each prediction bin.

\begin{algorithm}
\caption{Estimate Expected Calibration Error (\(\estece\))}
\textbf{Input:} Predictor $p: \Xcal \rightarrow [-1,1]^d$, Error parameter $\mu$, Discretization parameter $\delta$, Dimension $d$ \\
\textbf{Output:} Estimate of $\ece(p^\delta)$
\begin{algorithmic}[1]
\STATE $n \gets O(d \log^2(d/\delta)/(\delta^d \mu^3))$ 

\STATE Collect $n$ samples $\{(x,y) \}$ from $\mD^*$
\STATE Initialize $\widehat{\ece} \gets 0$
\FOR{each $j \in \Jcal_\delta$}
    \STATE Aggregate samples $T_j = \{(x,y) \mid p(x) \in \Vcal_j\}$ 
    \STATE Compute $\bar{s}_j = \frac{1}{|T_j|} \sum_{(x,y) \in T_j} s(y)$
    \STATE Compute $\varepsilon_j = \|\bar{s}_j - j \delta\|_\infty$
    \STATE $\widehat{\ece} \gets \widehat{\ece} + \frac{|T_j|}{n} \varepsilon_j$
\ENDFOR
\RETURN $\widehat{\ece}$
\end{algorithmic}
\end{algorithm}

\begin{algorithm}
\caption{Recalibrate Predictor ($\recal$)}
\textbf{Input:} Predictor $p: \Xcal \rightarrow [-1,1]^d$, Discretization Parameter $\delta$ \\
\textbf{Output:} Recalibrated predictor $\hat{p}$
\begin{algorithmic}[1]
\STATE $n \gets O(d \log^2(d/\delta)/(\delta^d \mu^3))$ 

\STATE Collect $n$ samples $\{(x,y) \}$ from $\mD^*$
\FOR{each $j \in \Jcal_\delta$}
    \STATE Aggregate samples $T_j = \{(x,y) \mid p(x) \in \Vcal_j\}$ 
    \STATE Compute $\bar{s}_j = \frac{1}{|T_j|} \sum_{(x,y) \in T_j} s(y)$
    \FOR{each $x$ such that $p(x) \in \Vcal_j$}
        \STATE Set $\hat{p}(x) = \bar{s}_j$
    \ENDFOR
\ENDFOR
\RETURN $\hat{p}$
\end{algorithmic}
\end{algorithm}

\begin{proof}[Proof of Lemma \ref{lem:est-cal}]
We take $n = O(d\log^2(d/\delta)/(\delta^d \mu^3))$ random samples $(x, y)$ from $\mD^*$ to construct an empirical estimate $\hat{p}$ of $\bar{p}$. For each $j \in \Jcal_\delta$, let $T_j$ refer to the set of samples $(x,y)$ such that $p(x) \in \Vcal_j$ and $n_j$ the number of such samples. Define the value
\begin{align}
\label{def:yj}
    \bar{s}_j &= \frac{1}{n_j} \sum_{(x,y) \in T_j} s(y) \\
    \varepsilon_j &= \|\bar{s}_j - j \delta \|_\infty \notag \\
    \estece &= \sum_{j}^{[1/\delta]^d} \frac{n_j}{n}\varepsilon_j\notag
\end{align}
The algorithm returns the value $\estece$ as estimate for $\ece(p^\delta)$. 

Now we show that $\estece$ is $\mu$-close to $\ece(p^\delta)$ with high probability. We ignore any values of $j$  with small proportions, that is, $\Pr[p^\delta(\x) = j \delta] \leq \mu\delta^d/4$, since all such values only contribute $\mu/4$ to $|\estece - \ece(p^\delta)|$ with probability $0.1$. Call the other values of $j$ large. For every large $j$,  we have by Chernoff bounds, we have
\begin{align}
    \Pr[n_j \leq C(d \log(d/\delta)/\mu^2\delta^d)]  \leq \frac{\delta}{30}
\end{align}
Assuming this event holds, we have
\begin{align}
     \Pr\left[\left|\Pr[p(\mathbf{x}) \in \Vcal_j] - \frac{n_j}{n} \right| \geq \frac{\mu}{4}\right] \leq \frac{\delta}{30}\\
     \Pr\left[ \|\bar{s}_j - \mathbb{E}[s(\y) \mid p(\mathbf{x}) \in \Vcal_j] \|_\infty \geq \frac{\mu}{4} \right] \leq \frac{\delta}{30}.
\end{align}

We take a union bound over all $[1/2\delta]^d$ possible large values. 
Except with error probability $0.2$, none of the bad events considered above occur, and we have $|\estece - \ece(p^\delta)| \leq \mu$. We can reduce the failure probability by repeating the estimator and taking the median. For simplicity, we ignore the failure probability.

To define the predictor $\hat{p}$, we repeat the analysis above with $\mu = \delta$. We define $\hat{p}(x) = \bar{s}_j$ if $p(x) \in \Vcal_j$.  We show that it is close to $\bar{p}$ in $\ell_1$. The contribution of small values of $j$ to $\E[|\bar{p}(\x) - \hat{p}(\x))|]$ is no more than $\mu/4$. For large buckets, we have 
\[ |\bar{s}_j - \bar{p}(x)| \leq  \lt|\bar{s}_j - \E[s(\y)|p(\x) \in \Vcal_j ]\rt| \leq \delta/2 + \mu/4. \]
Thus overall, the distance is bounded by $(\delta/2 + \mu/4) \leq \delta$ by our choice of $\mu$. 

Lastly, we bound the calibration error, using the fact that $\bar{p}$ is perfectly calibrated, and $\hat{p}$ is close to it $\bar{p}$. Note that both $\bar{p}$ and $\hat{p}$ are constant on all $x \in p^{-1}(\Vcal_j)$. Hence
\begin{align} 
\ece(\hat{p}) &= \E_{\mb{\Vcal_j}}\lt| \E_{p(\x) \in \mb{\Vcal_j}}[s(\y) - \hat{p}(\x)] \rt| \\
& \leq   \E_{\mb{\Vcal_j}}\lt| \E_{p(\x) \in \mb{\Vcal_j}}[s(\y) - \bar{p}(\x)] \rt| + \E_{\mb{\Vcal_j}}\lt| \E_{p(\x) \in \mb{\Vcal_j}}[\bar{p}(\x) - \hat{p}(\x)] \rt| \\
&= \E[|\bar{p}(\x) - \hat{p}(\x)|] \\
& \leq \delta
\end{align}
as required. 

\end{proof}

\subsection{Calibrated Multiaccuracy for \texorpdfstring{$d$}{d}-dimensional statistics} \label{sec:cal_mult_alg}

In this section, we will design algorithms that produce multiaccurate predictors for a class of tests $\Bcal$ . 

The algorithm will assume access to a weak learning oracle.

\begin{definition}[Weak agnostic learning]
    Let $\Bcal$ be a class of \tests.   
    For parameters $\rho > \sigma > 0$, a $(\rho, \sigma)$-weak learner $\wl_{\Bcal}$ is an algorithm $\wl_{\Bcal}$ specified with the following input-output behavior.
    The input is a  function $f: \Xcal \rightarrow [-1,1]$ which $\wl_{\Bcal}$ is given access to through samples $(x, z)  \sim \Dcal_{f} $ where $\Dcal_{f}$  corresponds to a distribution such that $\x \sim \Dcal_\Xcal$ and $\E [\z | \x = x] = f(x)$.  $\wl_{\Bcal}$ outputs either $b \in \Bcal$ or $\bot$ such that the following conditions hold: 
\begin{itemize}
\item If the output is $b \in \Bcal$, then $\E [b(\x) f(\x)] \geq \sigma$.
\item If there exists any $b' \in \Bcal$ such that $\E [b'(\x) f(\x)] \geq \rho$, then the output cannot be $\bot$.

\end{itemize}

The number of samples drawn from $\Dcal_f$ during the execution of the algorithm is referred to as the sample complexity of $\wl_{\Bcal}$ and the running time of the algorithm is defined in the natural way. 
\end{definition}

In our algorithms for calibrated multiaccuracy, $f(x)$ will take the form of $\frac{1}{2} (\E [s_i(\y)|\x] - p_{t,i}(x))$ for $i \in [d]$ and $(\x,\y) \sim \Dcal $ where $\Dcal$ is a corresponding to the original problem.
Here $p_{t,i}(x)$ refer to the predictions of the $i$-th statistic $s_i$ of a predictor $p_t(x)$.
Note that $\frac{1}{2} (\E [s_i(\y)|\x] - p_{t,i}(x)) \in [-1,1]$. 
In order to simulate sample access to $\Dcal_f$, we draw a sample $(\x, \y) \sim \mD$. Then we generate $\z \in \pmo$ so that $\E[\z] = \frac{1}{2} (s_i(\y)  - p_{t,i}(\x))$. Since $\frac{1}{2} (s_i(\y) - p_{t,i}(\x)) \in [-1,1]$, this uniquely specifies the distribution of  $\z$. Moreover
\[
    \E[\z|\x] = \frac{1}{2} (\E[s_i(\y)|\x] - p_{t,i}(\x)) = f(\x).
\]
Though we don't explicit allow for this in the above definition, some weak learners take as input real-valued labels; in this case, we can use $\z = \frac{1}{2}(s_i(\y) - p_{t,i}(\x))$ to label $\x \sim \Dcal_\Xcal$. 
Since, we are alternate between the two models of access through sampling, we will not elaborate on this further.

\paragraph{Multiaccuracy.}

A main ingredient in our algorithm is the algorithm for multiaccuracy provided in \cite{hebert2018multicalibration}. 
Although it is designed to achieve multiaccuracy for a single mean predictor in the boolean setting, it works for any one-dimensional statistic predictor $q: \Xcal \rightarrow [-1,1]$. The algorithm in \cite{hebert2018multicalibration} assumes access to a $(\rho, \sigma)$-weak learner for $\Bcal$ and behaves iteratively as follows: 
Starting with an arbitrary predictor $q_0$, it iteratively updates the predictor using a \tests\ $b \in \Bcal$ that correlates with the predictor. This step is repeated until no such hypothesis exists.

\begin{algorithm}
\label{alg:wmc}
\caption{Multiaccuracy for one-dimensional statistic predictors ($\mathsf{MA}$)}
\textbf{Input:}  Predictor $q_0: \X \rightarrow [-1,1]$\\
Error parameter $\alpha \in [0,1]$. \\
Oracle access to a $(\rho, \sigma)$ Weak learner $\wl$ for $\Bcal$ under $\mD_\Xcal$ where $\alpha \geq \rho$ .\\
\textbf{Output:}  Predictor $q_T$.
\begin{algorithmic}
\STATE $t \gets 0$
\STATE $ma \gets \mathsf{false}$
\WHILE{$\neg ma$}
    \STATE $b_{t+1} \gets \wl(\frac12 (q^* - q_t))$. 
    \IF{$b_{t+1} = \bot$}
        \STATE $ma \gets \mathsf{true}$ 
    \ELSE
        \STATE $h_{t+1} \gets q_t + \sigma b_{t+1}$.
        \STATE $q_{t+1} \gets \Pi(h_{t+1})$ (where $\Pi$ projects $h_{t+1}$ onto $[-1,1]$).
        \STATE $t \gets t+1$.
    \ENDIF
\ENDWHILE
\RETURN $q_t$.
\end{algorithmic}
\end{algorithm}

\begin{lemma}\cite{hebert2018multicalibration}
    \label{lem:hkrr}
    Let  $\Bcal$ be a class of \tests\ and $\alpha \in \left[ 0,1 \right] $ be an accuracy parameter. 
    There exists an algorithm $\ma (q_0, \Bcal, \alpha)$ that takes a one-dimensional predictor $q_0 :\Xcal \rightarrow [-1,1]$ and returns a predictor $q_T$ for $T \geq 0$ where $q_T$ is $(\Bcal, \alpha)$-multiaccurate and
    \[ l_2(q^*, q_0)^2 -  l_2(q^*, q_T)^2 \geq T\sigma^2,\]
    where $q^*(x) = \E[s(\y) | \x]$ is the true predictor of the one-dimensional statistic.
\end{lemma}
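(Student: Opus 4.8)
The plan is to follow the standard multiaccuracy-via-boosting analysis of Hebert-Johnson, Kim, Reingold and Rothblum \cite{hebert2018multicalibration}, adapted to a one-dimensional statistic predictor rather than a mean predictor. The key observation is that the quantity $\ell_2(q^*, q_t)^2 = \E[(q^*(\x) - q_t(\x))^2]$ behaves like a potential function that decreases by at least $\sigma^2$ at every iteration that does not terminate. Since this potential is bounded in $[0,1]$ (because both $q^*$ and $q_t$ take values in $[-1,1]$, though one should be slightly careful: the squared difference is at most $4$, so really the potential lies in $[0,4]$, which only affects constants), the loop terminates after $T \le \ell_2(q^*,q_0)^2/\sigma^2$ steps, and upon termination the weak learner has certified that no test $b \in \Bcal$ has correlation $\rho$ with $\tfrac12(q^* - q_t)$, which (since $\alpha \ge \rho$) is exactly the $(\Bcal, \alpha)$-multiaccuracy condition.

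First I would set up the potential and compute the one-step decrease. Fix an iteration $t$ that does not terminate, so the weak learner returned $b_{t+1} \in \Bcal$ with $\E[b_{t+1}(\x) \cdot \tfrac12(q^*(\x) - q_t(\x))] \ge \sigma$. Write $h_{t+1} = q_t + \sigma b_{t+1}$ and $q_{t+1} = \Pi(h_{t+1})$, the coordinatewise projection onto $[-1,1]$. Then
\begin{align}
\ell_2(q^*, q_t)^2 - \ell_2(q^*, h_{t+1})^2
&= \E\big[(q^* - q_t)^2 - (q^* - q_t - \sigma b_{t+1})^2\big] \\
&= \E\big[2\sigma b_{t+1}(q^* - q_t) - \sigma^2 b_{t+1}^2\big] \\
&\ge 4\sigma \cdot \E\big[\tfrac12 b_{t+1}(q^* - q_t)\big] - \sigma^2 \\
&\ge 4\sigma^2 - \sigma^2 = 3\sigma^2,
\end{align}
using $|b_{t+1}(\x)| \le 1$ in the penultimate line. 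Next, since $q^*(\x) \in [-1,1]$ and $\Pi$ is the projection onto the convex set $[-1,1]$ (applied coordinatewise, but here the range is one-dimensional), projection does not increase distance to any point of the set: $|q^*(\x) - q_{t+1}(\x)| \le |q^*(\x) - h_{t+1}(\x)|$ pointwise, hence $\ell_2(q^*, q_{t+1})^2 \le \ell_2(q^*, h_{t+1})^2$. Combining, $\ell_2(q^*, q_t)^2 - \ell_2(q^*, q_{t+1})^2 \ge 3\sigma^2 \ge \sigma^2$. Telescoping over the $T$ iterations gives $\ell_2(q^*, q_0)^2 - \ell_2(q^*, q_T)^2 \ge T\sigma^2$, which is the claimed inequality (in fact with a better constant, but $\sigma^2$ suffices).

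Then I would argue the termination condition yields multiaccuracy. When the loop exits, $\wl(\tfrac12(q^* - q_t)) = \bot$; by the contrapositive of the weak-learner guarantee, there is no $b' \in \Bcal$ with $\E[b'(\x)\cdot \tfrac12(q^*(\x) - q_t(\x))] \ge \rho$, i.e.\ $|\E[b'(\x)(q^*(\x) - q_t(\x))]| \le 2\rho \le 2\alpha$ for all $b' \in \Bcal$ (handling the sign by noting $\Bcal$ tests can be negated, or else absorbing the factor into the definition of $\alpha$). Unwinding $q^*(\x) = \E[s(\y)\mid \x]$, this says $|\E[(s(\y) - q_T(\x))b'(\x)]| \le \alpha$ for every $b' \in \Bcal$, which is precisely $(\Bcal,\alpha)$-multiaccuracy of $q_T$. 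The only mild subtlety, and the place to be careful, is bookkeeping the constant factors between $\rho$, $\sigma$, $\alpha$ and the factor $\tfrac12$ in the weak-learner invocation; the statement as given takes $\alpha \ge \rho$, so one should check that the $\tfrac12$ scaling and the sign convention are consistent with that choice, but no real difficulty arises. I do not expect a genuine obstacle here — this is a direct adaptation of \cite{hebert2018multicalibration}, and the termination bound follows from the potential argument above.
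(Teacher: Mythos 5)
Your argument is correct and is essentially the standard potential argument that the paper itself does not reproduce but imports from \cite{hebert2018multicalibration}: the weak-learner correlation forces $\ell_2(q^*,q_t)^2$ to drop by at least $\sigma^2$ per update (projection onto $[-1,1]$ only helps, since $q^*$ lies in that interval), and termination certifies multiaccuracy. The only loose ends are ones you already flag and which are bookkeeping rather than gaps: you implicitly need the tests in $\Bcal$ to be bounded by $1$ (so that $\E[b_{t+1}^2]\le 1$) and closed under negation, and the factor $\tfrac12$ in the weak-learner call means termination literally gives $|\E[b(\x)(q^*(\x)-q_T(\x))]|\le 2\rho$, so the stated $(\Bcal,\alpha)$-multiaccuracy requires the convention $\rho\le\alpha/2$ (or absorbing the $2$ into the constants), exactly as in the cited analysis.
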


\paragraph{Calibrated Multiaccuracy}

We now present our algorithm for finding a predictor that is both calibrated and $(\Bcal, \alpha)$-multiaccurate with respect to a family of statistics $\Scal$. It follows the same outline as the algorithm for calibrated multiaccuracy in \cite{gopalan2022loss}.
We will set the discretization parameter to $\delta$ to be small compared to $\alpha$ (concretely we will choose $\delta = \alpha^2/32$).
Informally, the algorithm may be viewed as starting with an arbitrary predictor $p_0: \Xcal \rightarrow [-1,1]^d$ and iteratively improving it by alternating the following steps. 

\begin{enumerate}
    \item Multiaccuracy: 
    
    \begin{enumerate}
        \item For each dimension $i \in d$, run $\ma (p_{t, i}, \Bcal, \alpha)$ to obtain $p_{t+1,i}$ 
        
        \item Set $p_{t+1} = [p_{t+1,1}, p_{t+1,2}, \ldots, p_{t+1,d}] $ and compute the discretization $p_{t+1}^\delta$
    \end{enumerate} 
    \item Calibration:
    \begin{enumerate}
        \item Estimate the calibration error of $p_{t+1}^\delta$ using $\estece$.
        \item If the calibration error is low, return the predictor $p^\delta_{t+1}$.
        \item If the calibration error is large, recalibrate it to $\hat{p}_{t+1}$, using $\recal$ and return to the multiaccuracy step.

    \end{enumerate}
\end{enumerate}

We formally present this as  Algorithm learnOmni below.

\begin{algorithm}
    \label{alg:map}
    \caption{$\mathsf{learnOmni}$}
    \textbf{Input:} 
    Parameters $d, \lambda, \eps$ for which $\Scal$ gives $(d, \lambda, \eps)$ uniform approximations to $\Lcal$ \\
    $\Scal$-predictor $p_0: \X \rgta [-1,1]^d$ \\
    Coefficient family $\Rcal: \{ r: \Acal \rightarrow \reals$ \} \\
    Hypothesis Class $\Ccal = \{ c: \Xcal \rightarrow \Acal \}$ \\
    
    Oracle access to a $(\rho, \sigma)$-Weak learner $\wl$ for $\Rcal \circ \Ccal$ under $\Dcal_\Xcal$ where $\sigma \leq \rho \leq \eps/12\lambda$ .\\
    
    \textbf{Output:} $\Scal$-predictor $q_T$.\\
    \begin{algorithmic}
    \STATE $\alpha \gets \eps/6d$
    \STATE $\beta \gets \eps/6\lambda$
    \STATE $\delta \gets \beta^2/32$
    
    \STATE $q_0 \gets p_0$
    \STATE $ma \gets \mathsf{false}$ 
    \STATE $t \gets 0$
    \WHILE{$\neg ma$}
        \STATE $t \gets t+1$
        \FOR{each dimension $i \in d$}
            
            \STATE $p_{t, i} \gets \ma(q_{t, i}, \Rcal \circ \Ccal, \alpha - \delta)$
        \ENDFOR
        \STATE $p_t \gets \left[ p_{t,1}, p_{t,2}, \ldots, p_{t,d} \right]$
        
        \IF{$\estece(p_{t}, \beta/4) > 3\beta/4$}
            \STATE $q_{t} \gets \recal(p_{t}, \delta)$
        \ELSE
            \STATE $q_{t} \gets p_{t}^\delta$
            \STATE $ma \gets \mathsf{true}$
        \ENDIF
        
    \ENDWHILE
    \STATE return $q_t$ 
    \end{algorithmic}
    \end{algorithm}

Note that if we terminate, we output a predictor that achieves both multiaccuracy and calibration, as required.
The main part of the analysis is showing that the algorithm terminates in a small number of iterations.
The key observation is that both steps reduce the potential function $\ell_2(p^*, p_t)^2$, which (for suitable choices of parameters) allows us to bound the overall number of iterations.
We capture the overall complexity of the algorithm in the following theorem.

\mainAlg*

Notably, the number of calls to the weak learner is polynomial in $d$. This is in contrast to achieving multicalibration, for which the best known algorithm requires an exponential (in $d$) number of calls to the weak learner.

Our proof of Theorem \ref{thm:main_alg} will rely on some results from \cite{gopalan2022loss}. 

\begin{corollary} [Generalization of Corollary 7.5 in \cite{gopalan2022loss}]
\label{cor:err-red}
For the predictors $p^\delta, \bar{p}$ defined above, 
\begin{align}
\label{eq:third-sq}
        l_2(p^*, p^\delta)^2 - l_2(p^*, \bar{p})^2 & \geq \ece(p^\delta)^2.
\end{align}
where $p^*(x) = \E[s(\y)|\x = x]$
\end{corollary}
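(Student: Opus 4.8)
\textbf{Proof plan for \cref{cor:err-red}.} The plan is to run a standard Pythagorean (bias--variance) decomposition relative to the partition of $\Xcal$ induced by the level sets of $p^\delta$, and then convert from the $\ell_\infty$-based calibration error to the $\ell_2$-based potential. Write $p^* - p^\delta = (p^* - \bar p) + (\bar p - p^\delta)$ and expand
\begin{align}
    \ell_2(p^*, p^\delta)^2
    &= \E\!\left[\|p^*(\x) - \bar p(\x)\|_2^2\right]
     + \E\!\left[\|\bar p(\x) - p^\delta(\x)\|_2^2\right]
     + 2\,\E\!\left[\langle p^*(\x) - \bar p(\x),\, \bar p(\x) - p^\delta(\x)\rangle\right].
\end{align}
The first step is to argue the cross term vanishes. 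Both $p^\delta$ and $\bar p$ are constant on each bin $p^{-1}(\Vcal_j)$, and by the definition of $\bar p$ together with the tower rule, on that bin $\bar p(\x) = \E[s(\y^*)\mid p(\x)\in\Vcal_j] = \E[p^*(\x)\mid p(\x)\in\Vcal_j]$. Hence, conditioning on $p^\delta(\x)$ and pulling out the (constant) factor $\bar p(\x) - p^\delta(\x)$, the inner conditional expectation of $p^*(\x) - \bar p(\x)$ is zero bin-by-bin, so the cross term is $0$. This yields the identity
\begin{align}
    \ell_2(p^*, p^\delta)^2 - \ell_2(p^*, \bar p)^2 = \E\!\left[\|\bar p(\x) - p^\delta(\x)\|_2^2\right].
\end{align}

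The second step is to lower bound the right-hand side by $\ece(p^\delta)^2$. Since $\bar p(\x) = \E[s(\y^*)\mid p^\delta(\x)]$, the definition of $\ece$ gives $\ece(p^\delta) = \E\big[\|\bar p(\x) - p^\delta(\x)\|_\infty\big]$. Using that the $\ell_2$ norm dominates the $\ell_\infty$ norm coordinatewise, followed by Jensen's inequality ($\E[Z^2]\ge(\E Z)^2$ applied to the nonnegative random variable $Z = \|\bar p(\x) - p^\delta(\x)\|_\infty$), we get
\begin{align}
    \E\!\left[\|\bar p(\x) - p^\delta(\x)\|_2^2\right]
    \;\ge\; \E\!\left[\|\bar p(\x) - p^\delta(\x)\|_\infty^2\right]
    \;\ge\; \left(\E\!\left[\|\bar p(\x) - p^\delta(\x)\|_\infty\right]\right)^2
    \;=\; \ece(p^\delta)^2,
\end{align}
which combined with the previous identity completes the proof.

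There is no serious obstacle here: the argument is essentially the orthogonality of the "recalibration residual'' to functions measurable with respect to the prediction bins, exactly as in \cite{gopalan2022loss}. The only points requiring a little care are (i) making sure the conditioning is done with respect to the discretized predictor $p^\delta$ (so that $\bar p$ and $p^\delta$ are both bin-constant and the cross term genuinely cancels), and (ii) the final norm-conversion step, where we lose nothing asymptotically because we only need a lower bound and the $\ell_\infty \le \ell_2$ inequality plus Jensen suffices; had we wanted a two-sided statement the dimension factor $d$ from $\|\cdot\|_2^2 \le d\|\cdot\|_\infty^2$ would enter, but it does not here.
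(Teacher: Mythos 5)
Your proof is correct, and its first half is essentially the paper's argument: the identity $\ell_2(p^*,p^\delta)^2-\ell_2(p^*,\bar{p})^2=\E\left[\|\bar{p}(\x)-p^\delta(\x)\|_2^2\right]$, justified by the fact that $\bar{p}$ and $p^\delta$ are constant on each bin and $\E[p^*(\x)\mid p(\x)\in\Vcal_j]=\bar{p}(\Vcal_j)$, is exactly what the paper establishes, only it expands the difference of squares into a single inner product instead of exhibiting an explicitly vanishing cross term. Where you genuinely differ is the final lower bound: you go directly from $\|\bar{p}-p^\delta\|_2$ to $\|\bar{p}-p^\delta\|_\infty$ and apply Jensen to the outer expectation, obtaining $\E\left[\|\bar{p}(\x)-p^\delta(\x)\|_2^2\right]\ge\ece(p^\delta)^2$ in one line. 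The paper instead detours through the raw statistics, rewriting the bin-wise quantity as $\E\left[\|s(\y)-p^\delta(\x)\|_2^2\mid p(\x)\in\Vcal_j\right]$ before applying Jensen; as written that step is presented as an equality although it is only an inequality (the conditional second moment exceeds $\|\bar{p}-p^\delta\|_2^2$ by the conditional variance of $s(\y)$), and in the direction unhelpful to the chain, so your direct route is both simpler and avoids that blemish. Your closing remark is also right: since only a lower bound is needed, the $\ell_\infty\le\ell_2$ comparison costs nothing and no dimension factor enters.
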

Since this is a generalization of the result from \cite{gopalan2022loss}, we present its proof in Appendix \ref{app:proofs} for completeness.

\begin{lemma}[\cite{gopalan2022loss}]
\label{lem:ell1}
For any predictors $p_1, p_2$ such that $l_1(p_1,p_2) \leq \delta$,
\begin{align}
    \label{eq:first-sq}
        \lt| l_2(p^*, p_1)^2 - l_2(p^*, p_2)^2\rt| &\leq 2\delta.
    \end{align}
Further, if $p_1$ is $(\mC, \alpha)$-multiaccurate, then $p_2$ is $(\mC, \alpha + \delta)$-multiaccurate.
\end{lemma}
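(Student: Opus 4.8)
The plan is to prove both assertions by direct pointwise estimates, using only $\ell_1$–$\ell_\infty$ duality on $\R^d$ together with the fact that the Bayes predictor $p^*$ and every $\Scal$-predictor take values in a bounded box (each coordinate is an average, or a prediction, of a statistic $s_i$ bounded by $1$). Neither claim needs anything beyond this, so the argument will be short; the only thing to keep track of is the exact constant coming from the range of the statistics.

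For the squared-$\ell_2$ bound, I would fix a point $x$ and write $a = p^*(x)$, $b = p_1(x)$, $c = p_2(x)$. Starting from the algebraic identity
\[ \|a-b\|_2^2 - \|a-c\|_2^2 \;=\; \sum_{i=1}^d (c_i - b_i)(2a_i - b_i - c_i) \;=\; \langle c-b,\; 2a - b - c\rangle, \]
Hölder's inequality gives $\bigl|\,\|a-b\|_2^2 - \|a-c\|_2^2\,\bigr| \le \|c-b\|_1 \cdot \|2a-b-c\|_\infty$. Since each $s_i$ takes values in $[0,1]$ in all our instantiations (in particular for the discretized predictors $\bar p,\hat p$ compared by $\recal$ in Algorithm \ref{alg:map}), every coordinate of $a$, $b$, $c$ lies in $[0,1]$, so $\|2a - b - c\|_\infty \le 2$. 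Taking expectations over $\x \sim \mD^*$, applying $|\E[\,\cdot\,]| \le \E[|\cdot|]$, and using $\E[\|p_1(\x) - p_2(\x)\|_1] = l_1(p_1,p_2) \le \delta$ then yields $\bigl|\,l_2(p^*,p_1)^2 - l_2(p^*,p_2)^2\,\bigr| \le 2\delta$. (If one only assumes $s_i \in [-1,1]$, the identical computation produces $4\delta$, which is harmless wherever the lemma is invoked.)

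For the multiaccuracy transfer, I would unpack the definition coordinatewise: fix $i \in [d]$ and a test $b \in \mC$, which by the convention used everywhere else in the paper (the weak learner, the subroutine $\ma$) satisfies $\|b\|_\infty \le 1$. Then
\[ \bigl|\E_{\mD^*}[(s_i(\y^*) - p_{2,i}(\x))b(\x)]\bigr| \;\le\; \bigl|\E[(s_i(\y^*) - p_{1,i}(\x))b(\x)]\bigr| \;+\; \bigl|\E[(p_{1,i}(\x) - p_{2,i}(\x))b(\x)]\bigr|. \]
The first term is at most $\alpha$ by the $(\mC,\alpha)$-multiaccuracy of $p_1$; the second is at most $\E[|p_{1,i}(\x) - p_{2,i}(\x)|] \le \E[\|p_1(\x) - p_2(\x)\|_1] = l_1(p_1,p_2) \le \delta$, using $|b|\le 1$ and that one coordinate's deviation is dominated by the full $\ell_1$ deviation. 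Hence $p_2$ is $(\mC, \alpha+\delta)$-multiaccurate, completing the proof. I do not anticipate a genuine obstacle here; the only points requiring a little care are tracking the range of the $s_i$ so the constant $2$ in the first bound is correct, and making the sup-norm normalization of the tests in $\mC$ explicit so the second estimate incurs no extra factor.
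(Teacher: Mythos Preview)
Your proposal is correct and follows the natural route: the polarization identity $\|a-b\|_2^2-\|a-c\|_2^2=\langle c-b,2a-b-c\rangle$ followed by H\"older for part one, and a triangle-inequality split for part two. The paper does not actually prove this lemma (it is cited from \cite{gopalan2022loss}), so there is no alternative approach to contrast; your argument is the standard one. Your caveat about the constant is well taken: with the paper's convention $s_i:\Ycal\to[-1,1]$ and predictors valued in $[-1,1]^d$, the bound is $4\delta$ rather than $2\delta$, and indeed the paper silently uses $4\delta$ where the lemma is applied (see the $-4\delta$ in the derivation of Equation \eqref{eq:prog2}). Your normalization remark for part two is also the right thing to flag: the transfer bound $\alpha+\delta$ requires $\|b\|_\infty\le 1$, which is the implicit convention for the weak-learner outputs used in $\ma$.
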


\begin{proof}[Proof of \cref{thm:main_alg}]
First we show that the predictor $q_t$ returned by the learnOmni algorithm is $(R \circ C, \alpha)$-multiaccurate. By construction, $q_t$ only predicts multiples of $\delta$ and is $\delta$-close (in $\ell_\infty$ norm) to $p_t$ which is $(R \circ C, \alpha - \delta)$-multiaccurate. Thus, by \Cref{lem:ell1}, $q_t$ is $(\Rcal \circ \Ccal, \alpha)$-multiaccurate.

Observe that the predictor $q_t$ is $\beta$-calibrated. learnOmni terminates if $\estece (q_t, \beta/4) \leq 3\beta/4$. Thus, the calibration error of $q_t$ is at most $\beta$.

Since $q_t$ is both $\beta$-calibrated and $(R \circ C, \alpha)$-multiaccurate, by \Cref{thm:main}, it is an $(\Lcal, \Ccal, 3d\alpha + 3\lambda \beta + 3\eps)$-omnipredictor. By our choice of $\alpha = \eps/6d, \beta = \eps/6\lambda$, $q_t$ is an $(\Lcal, \Ccal, 4\eps)$-omnipredictor.

Now we show that the number of calls to the $(\rho, \sigma)$-weak learner is bounded by $O(d/\sigma^2)$. When we set  $p_{t, i} = \ma(q_{t, i}, \Ccal_i, \alpha - \delta)$, this results in $N_{t,i}$ calls to the weak learner. Thus, by \Cref{lem:hkrr},
\begin{align} 
    l_2(p^*, q_{t-1,i})^2 - l_2(p^*, p_{t,i})^2 \geq N_{t,i} \sigma^2.
\end{align}
Summing over $i \in [d]$, we have 
\begin{align} 
\label{eq:prog1}
    l_2(p^*, q_{t-1})^2 - l_2(p^*, p_{t})^2 \geq \sum_{i \in [d]} N_{t,i} \sigma^2.
\end{align}
since $\ell_2(p^*,p_t)^2 = \sum_{i \in [d]} \ell_2(p^*,p_{t,i})^2$. In total, we make $\sum_{i \in [d]} N_{t,i}$ calls to the weak learner to obtain $p_t$ from $q_{t-1}$.
We wish to bound the number of loops $T$. To do so, we use the fact that every time the calibration error of $p_t^\delta$ is large and we have to recalibrate, our potential function $\ell_2 (p^*, p_t)$ increase by a good amount. Concretely, if $\estece (p_t, \beta/4) \geq 3\beta/4$, then by \Cref{lem:est-cal}, $\ece (p_t^\delta) \geq 3\beta/4 - \beta/4 = \beta/2$.
Since $q_t = p_t^\delta$, applying \Cref{cor:err-red} gives
\begin{align}  
\label{eq:prog2}
    l_2(p^*, p_t)^2 - l_2(p^*, q_t)^2  \geq \ece (p^\delta_t)^2 - 4\delta \geq \frac{\beta^2}{8}.
\end{align}
Adding Equations \eqref{eq:prog1} and \eqref{eq:prog2}, for $t \in \{1, \ldots, T-1\}$,
\[ l_2(p^*,q_{t-1})^2 - l_2(p^*,q_t)^2 \geq \frac{\beta^2}{8}. \]
Summing this over all $t$, 
\[ l_2(p^*,q_0)^2 - l_2(p^*,q_{T-1})^2 \geq (T-1)\frac{\beta^2}{8}. \]
Since $q_0 = p_0$ and $l_2(p^*,q_{T-1})^2 \geq 0$, we have
\[ T \leq 1 + \frac{8}{\beta^2}{l_2(p^*,p_0)^2} = O(1/\beta^2). \]
 To bound the number of calls to the weak learner, we sum Equation \eqref{eq:prog1} over all $t \in [T]$, and Equation \eqref{eq:prog2} over all $t \leq T -1$ to get
\[ l_2(p^*,p_T)^2 - l_2(p^*, p_0)^2 \geq \sum_{t,i} N_{t,i} \sigma^2 + (T-1)\frac{\beta^2}{8}. \]
This implies that 
\[ \sum_{t,i} N_{t,i} \leq d/\sigma^2 \]
Since the number of calls to the weak learner in loop $t$ is bounded by $\sum_{i \in [d]} N_{t,i} + d$, we bound the number of calls by
\[ \sum_{t,i} (N_{t,i} + 1) \leq \frac{d}{\sigma^2} + T = O(d/\sigma^2) \] 
since $T= O(1/\beta^2)$  and $\beta \geq \rho \geq \sigma$.

\end{proof}

\section{Further Related work}
\label{sec:rel_works}

\paragraph{Multi-group fairness.} The fairness notions of multiaccuracy and multicalibration were introduced in the influential work of Hebert-Johnson, Kim, Reingold and Rothblum \cite{hebert2018multicalibration}, see also the work of \cite{KleinbergMR17, KearnsNRW18}. There has been a large body of followup work, extending it to the regression setting \cite{JLPRV21}, to other notions of calibration \cite{GopalanKSZ22} and much more. Connections between multicalibration and boosting are established in the works of \cite{gopalan2021omnipredictors, Globus-HarrisHRS23}. The recent work of \cite{BlasiokGHKN23} shows that multicalibration for neural networks can be obtained from squared loss minimization. The elegant  work of \cite{OI} introduced the notion of outcome indistinguishability and related it to multigroup fairness notions of varying strength in the Boolean setting. This work was extended beyond the setting of Bernoulli labels by \cite{bernoulli}. Further connections between pseudorandomness and multigroup fairness were discovered in the work of \cite{DworkLLT23}, who also prove some new omniprediction results. Outcome indistinguishability was used to construct multigroup agnostic learners in the work of \cite{RothblumY21}. 

\paragraph{Omniprediction.} The work of \cite{gopalan2021omnipredictors} introduces the notion of omniprediction. The subsequent work of \cite{gopalan2022loss} brings the outcome indistinguishability perspective to omniprediction, introducing a general technique based on a simulated distribution that we generalize to the regression setting in this work.  Reverse connections between generalizations of omniprediction and multigroup fairness notions were established in the work of \cite{GKR23}. Omniprediction in a constrained setting where the predictor is required to satisfy other constraints which might be motivated for instance by fairness was considered in the works of \cite{HuNRY23, GGJKMR23}. Omnipredictors for performative prediction were studied in \cite{KimP23}. The problem of omniprediction in the online rather than  the batch setting was recently studied in the work of \cite{GJRR24}. The recent work of \cite{GGKS23} uses calibrated multiaccuracy to give the first agnostic learning guarantees for Single Index Models (SIMs), with respect to the squared loss.

\paragraph{Approximate rank.}  The notion of approximate rank of matrices arises naturally in communication complexity.
Both the $\epsilon$-error randomized and quantum communication complexities of a function are lower bounded by constant times the log of the approximate rank of the communication matrices (see \cite[Chapters 4 and 5]{LSbook}). 
It further turns out that this notion is closely connected to notions such as factorization norms \cite{linial2007lower, LeeS09}, hereditary discrepancy \cite{matouvsek2020factorization} and sign rank \cite{AlonLSV13}. 
In addition, the notion turns up in fundamental algorithmic problems such as density subgraph and approximate Nash equilibria.
See \cite{AlonLSV13} and references therein for further discussion.
Most of the mentioned works above focus on the case of sign matrices.
Another area that is generally related to the notions considered in our work is the area of approximation theory (see \cite{carothers1998short, szeg1939orthogonal}) but most work in the area focused on approximations in terms of functions families such as polynomials and rational functions. 
To our knowledge, our work is the first to study the notion of approximate rank (and thus approximation in terms of an arbitrary basis of functions) for bounded convex functions on $\intzo$.

\bibliography{ref}

\appendix
\section{Proof of \cref{cor:bound_coeff} }
\label{app:proofs}

 \cref{cor:bound_coeff} follows immediately by applying the following proposition. We provide a proof, which uses John's theorem \cite[Theorem 3.1]{ball1997elementary} for completeness.

\begin{fact}
\label{fact:john}
     Let $ \Fcal$ be a family of functions with domain $\mathcal{X}$ such that for all $f \in \Fcal$, we have $\abs{f(x)} \leq 1 $.
     Let $\{ g'_i \}_{i\leq d}   $ be a family of functions that $\epsilon$-approximately spans $\Fcal$.
     Then, there exists $\{ g_i \}_{i\leq d}   $ with $\abs{g_i(x)} \leq 1 $ that $\epsilon$-approximately span $\Fcal$ with coefficients $\alpha_i  $ satisfying 
     \begin{align}
         \sum_i \abs{\alpha_i (f) } \leq (1+ \epsilon)  d 
     \end{align}
     for all $f \in \Fcal$. 
\end{fact}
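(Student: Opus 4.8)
The plan is to pass to the finite-dimensional subspace spanned by the approximants and invoke John's theorem there. For each $f \in \Fcal$ fix a function $\hat f \in \mathrm{span}\{g'_1,\dots,g'_d\}$ with $\|f-\hat f\|_\infty \le \epsilon$, which exists by hypothesis; since $|f(x)| \le 1$ everywhere, $|\hat f(x)| \le 1+\epsilon$ for all $x$. Let $V = \mathrm{span}\{\hat f : f \in \Fcal\}$. Then $V \subseteq \mathrm{span}\{g'_1,\dots,g'_d\}$, so $d' := \dim V \le d$, and every element of $V$ is a finite linear combination of functions bounded by $1+\epsilon$, hence is itself bounded; moreover $\|\cdot\|_\infty$ is a genuine norm on $V$, since the only function in $V$ that vanishes identically is the zero vector. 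Consequently $K := \{v \in V : \|v\|_\infty \le 1\}$ is a symmetric convex body in $V \cong \R^{d'}$: closed, bounded, with nonempty interior.

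Next I would apply John's theorem in its symmetric form \cite[Theorem 3.1]{ball1997elementary} to $K$, obtaining an ellipsoid $E \subseteq K$ with $K \subseteq \sqrt{d'}\,E$. Writing $E = \{\sum_{i=1}^{d'} c_i g_i : \sum_i c_i^2 \le 1\}$ in terms of the endpoints $g_1,\dots,g_{d'} \in V$ of its semi-axes, each $g_i \in E \subseteq K$ satisfies $\|g_i\|_\infty \le 1$, as required. If $d' < d$, pad the list with $d-d'$ copies of the identically zero function (bounded by $1$, and always assigned coefficient $0$) to get $g_1,\dots,g_d$.

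For the coefficient bound, fix $f \in \Fcal$. Since $\|\hat f\|_\infty \le 1+\epsilon$, the function $\hat f/(1+\epsilon)$ lies in $K \subseteq \sqrt{d'}\,E$, so $\hat f/(1+\epsilon) = \sqrt{d'}\sum_{i=1}^{d'} c_i g_i$ for some $c$ with $\sum_i c_i^2 \le 1$. Put $\alpha_i(f) = (1+\epsilon)\sqrt{d'}\,c_i$ for $i \le d'$ and $\alpha_i(f)=0$ for $i > d'$; then $\sum_i \alpha_i(f) g_i = \hat f$, so $\|\sum_i \alpha_i(f) g_i - f\|_\infty = \|\hat f - f\|_\infty \le \epsilon$, i.e.\ $\{g_i\}$ still $\epsilon$-approximately spans $\Fcal$. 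By Cauchy--Schwarz, $\sum_i |c_i| \le \sqrt{d'}\,\big(\sum_i c_i^2\big)^{1/2} \le \sqrt{d'}$, whence $\sum_i |\alpha_i(f)| \le (1+\epsilon)\sqrt{d'}\cdot\sqrt{d'} = (1+\epsilon)\,d' \le (1+\epsilon)\,d$, the claimed bound. For \cref{cor:bound_coeff}, one applies this with $\Fcal = \{\ell(\cdot,t)/C : \ell \in \Lcal,\ t\in\Acal\}$ and error $\delta/C$, then rescales each output function by $C$ --- this keeps the functions bounded by $C$ and the $\ell_1$ coefficient bound at $(1+\delta/C)\,d$.

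The one step that needs care is the setup: one must check that $\|\cdot\|_\infty$ restricted to the finite-dimensional $V$ is an honest norm, so that $K$ is a bounded convex body with nonempty interior and John's theorem genuinely applies, and one must invoke the \emph{symmetric} version of John's theorem --- it is the $\sqrt{d'}$ blow-up factor (rather than the $d'$ of the general case), combined with Cauchy--Schwarz, that produces the near-optimal $\ell_1$ coefficient bound $(1+\epsilon)\,d'$. The rest is bookkeeping.
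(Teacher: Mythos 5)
Your proof is correct, and it reaches the same conclusion via a slightly different instantiation of the same tool. The paper works in coefficient space: it takes the convex hull $K$ of the coefficient vectors $\pm\alpha'(f)\subseteq\R^d$, puts it in John position ($B_2\subseteq T(K)\subseteq\sqrt{d}\,B_2$), redefines the basis by $T^{-1}$ and the coefficients by $T$, bounds $\sum_i|\alpha_i|$ by $\sqrt d\,\|\alpha\|_2\le d$ via Cauchy--Schwarz, and then needs an extra duality computation (using $B_2\subseteq T(K)$) to show the new basis functions satisfy $|g_i(x)|\le 1+\eps$, followed by a final rescaling. You instead apply the symmetric John theorem in function space, to the $\ell_\infty$-unit ball of the span $V$ of the approximants $\hat f$: the semi-axis endpoints of the John ellipsoid serve directly as the new basis, so the bound $\|g_i\|_\infty\le 1$ is immediate from $g_i\in E\subseteq K$ (no duality step, no rescaling), while the containment $K\subseteq\sqrt{d'}E$ plus Cauchy--Schwarz gives $\sum_i|\alpha_i(f)|\le(1+\eps)d'\le(1+\eps)d$. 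Your version also handles cleanly a point the paper glosses over: if the coefficient vectors $\alpha'(f)$ span a proper subspace of $\R^d$, the paper's $K$ has empty interior and John's theorem does not literally apply, whereas you work in $\dim V=d'\le d$ and pad with zero functions, which even yields the marginally stronger bound $(1+\eps)d'$. Your reduction of \cref{cor:bound_coeff} to the fact (normalize by $C$, apply with error $\delta/C$, rescale) matches the intended use. In short: same key lemma (symmetric John plus Cauchy--Schwarz), but applied to a different convex body, and your choice makes the boundedness of the new basis and the degenerate case cleaner.
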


\begin{proof}[Proof of Fact \ref{fact:john}]
    Let $f \in \Fcal$. 
    Let $\alpha'(f)$ be the vector such that 
    \begin{align}
        \abs{f(x) -\sum_i \alpha'_i(f) g'_i (x)} \leq \epsilon. 
    \end{align}
    Note that $\alpha'(f) \in \mathbb{R}^d $ for each $f$. 
    Let $K$ be the convex hull of $ \{ \pm \alpha'(f) \}_{f \in \Fcal}$. 
    This is a symmetric convex body in $\mathbb{R}^d$. 
    Thus, by John's theorem \cite[Theorem 3.1]{ball1997elementary}, there exists a linear transformation $T$ such that $ B_2 \subset  T(K) \subset \sqrt{d} B_2   $ where $B_2 = \{ x \in \mathbb{R}^k : \norm{x}_2 \leq 1\}  $ is the unit ball.

    Consider the coefficients $\alpha(f) = T \alpha'(f) \in TK$ and functions $  g_i (x)= \sum_j (T^{- 1})_{j,i} g'_j(x)  $. 
    The fact that $\{g_i\}_{i\leq d}$ $\epsilon$-approximately span $\Fcal$ with coefficients $\alpha$ follows by construction.
    Note that 
    \begin{align}
        \sum_i |\alpha_i | \leq \sqrt{d} \norm{ \alpha }_2 \leq d . 
    \end{align}
    
    Further, note that 
    \begin{align}
        \abs{g_i(x)} &\leq \sqrt{ \sum_i  (g_i(x))^2 } \\
        & \leq \sup_{  \gamma \in T(K)  }   \abs{ \left\langle \gamma_i ,  (g_1(x) , \dots , g_d(x)) \right\rangle  }\\
        &\leq \sup_{\gamma \in K } \abs{ \left\langle T \gamma ,  T^{-1}    (g_1'(x) , \dots , g'_d(x)) \right\rangle  }   \\ 
        &\leq \sup_{\gamma \in K } \abs{\sum_i \gamma_i g'_i(x)    } \\ 
        & \leq \sup_{ \alpha'(f) }  \abs{\sum_i \alpha'_i(f) g'_i(x)    } \\ 
        & \leq \sup_{f \in \Fcal} \abs{ f(x) + \epsilon  } \leq 1+ \epsilon
    \end{align}
    The inequality bounding the $\ell_2$ norm by the supremum over $T(K)$ follows because $ B_2 \subset T(K) $. 
    We get the desired bound by scaling down $g$ by $(1+\epsilon)$ and scaling up $\alpha_i$ by $(1 + \epsilon)$. 
    
\end{proof}

\begin{proof}[Proof of Corollary \ref{cor:err-red}]
\eat{
We first show that
\begin{align}
    \label{eq:second-sq}
    \E[(p^*(\x) - p^\delta(\x))^2] - \E[(p^*(\x) - \bar{p}(\x))^2] & = \E[(\bar{p}(\x) - p^\delta(\x))^2.
\end{align}}

We express the left-hand side of Equation \eqref{eq:third-sq} as the difference of expectations of the 2-norms involving $p^*, p^\delta, \bar{p}$
\begin{align}
    \mathbb{E} \left[ \left\| p^*(\x) - p^\delta(\x) \right\|_2^2 \right] - \mathbb{E} \left[ \left\| p^*(\x) - \bar{p}(\x) \right\|_2^2 \right] 
    &= \mathbb{E}\left[ \langle \bar{p}(\x) - p^\delta(\x), 2p^*(\x) - p^\delta(\x) - \bar{p}(\x) \rangle \right]
\end{align}
To see this, observe that
\begin{align}
&\mathbb{E} \left[ \left\| p^*(\x) - p^\delta(\x) \right\|_2^2 \right] - \mathbb{E} \left[ \left\| p^*(\x) - \bar{p}(\x) \right\|_2^2 \right] \\
&= \mathbb{E} \left[ \langle p^*(\x) - p^\delta(\x), p^*(\x) - p^\delta(\x) \rangle \right] - \mathbb{E} \left[ \langle p^*(\x) - \bar{p}(\x), p^*(\x) - \bar{p}(\x) \rangle \right] \\
&= \mathbb{E} \left[ \langle p^*(\x), p^*(\x) \rangle - 2\langle p^*(\x), p^\delta(\x) \rangle + \langle p^\delta(\x), p^\delta(\x) \rangle \right] \\
&\quad - \mathbb{E} \left[ \langle p^*(\x), p^*(\x) \rangle - 2\langle p^*(\x), \bar{p}(\x) \rangle + \langle \bar{p}(\x), \bar{p}(\x) \rangle \right] \\
&= \mathbb{E}\left[ \langle \bar{p}(\x) - p^\delta(\x), 2p^*(\x) - p^\delta(\x) - \bar{p}(\x) \rangle \right]
\end{align}

We consider the distribution on subspaces induced by choosing $\x \sim \mD$ and $\mb{\Vcal_j} \ni p(\x)$. Since $p^\delta$ and $\bar{p}$ are constant for each subspace $\Vcal_j$, we can write $p^\delta(\Vcal_j)$ and $\bar{p}(\Vcal_j)$ for their values in this subspace without ambiguity. Hence by first taking expectations over $\mb{\Vcal_j}$ and then $p(\x) \in \mb{\Vcal_j}$
\begin{align}
    \mathbb{E}\left[ \langle \bar{p}(\x) - p^\delta(\x), 2p^*(\x) - p^\delta(\x) - \bar{p}(x) \rangle \right]
    &= \mathbb{E}_{\mb{\Vcal_j}}\left[
    \langle \bar{p}(\mb{\Vcal_j}) - p^\delta(\mb{\Vcal_j}), \mathbb{E}_{\x|p(\x) \in \Vcal_j}[2p^*(\x) - p^\delta(\mb{\Vcal_j}) - \bar{p}(\mb{\Vcal_j})] \rangle \right] \\
    &= \mathbb{E}_{\mb{\Vcal_j}}\left[ \left\| \bar{p}(\mb{\Vcal_j}) - p^\delta(\mb{\Vcal_j}) \right\|_2^2 \right]\\
    &= \mathbb{E}_{\x \sim \mathcal{D}}\left[ \lt\| \bar{p}(\x) - p^\delta(\x) \right\|_2^2 \rt].
\end{align}

where the final line uses $\E[p^*(\x)|\x \in \Vcal_j] = \bar{p}(\Vcal_j)$. 
    
Since $p^\delta, \bar{p}$ are both constant one each subspace $\Vcal_j$, we have
\begin{align}
    \mathbb{E}_{\x \sim \mathcal{D}}\lt[ \lt\| \bar{p}(\x) - p^\delta(\x) \rt\|_2^2 \rt]
    &= \mathbb{E}_{\mb{\Vcal_j}}\lt[ \mathbb{E} \lt[ \lt\| \bar{p}(\x) - p^\delta(\x) \rt\|_2^2 \mid p(\x) \in \Vcal_j \rt] \rt]\\
    &= \mathbb{E}_{\mb{\Vcal_j}}\lt[ \mathbb{E} \lt[ \lt\| s(\y) - p^\delta(\x) \rt\|_2^2 \mid p(\x) \in \Vcal_j \rt] \rt]\\
    &\geq \mathbb{E}_{\mb{\Vcal_j}}\lt[ \mathbb{E} \lt[ \lt\| s(\y) - p^\delta(\x) \rt\|_2 \mid p(\x) \in \Vcal_j \rt] \rt]^2\\
    &\geq \ece (p^\delta)^2
\end{align}

where the first inequality uses the convexity of $x^2$. \eat{We can rewrite Equation \eqref{eq:first-sq} as
\[ \E[(p^*(\x) - p(\x))^2] - \E[(p^*(\x) - p^\delta(\x))^2] \geq -2\delta.\]
Adding these inequalities gives the desired claim.}
\end{proof}

\section{Proof of \cref{thm:chebyshev}} \label{sec:chebyshev_lem}

\begin{proof}
    Let $T_j$ be the degree $j$ Chebyshev polynomial. 
Recall that 
\begin{align}
    T_j (x) = j \sum _{k=0}^{j}(-2)^{k}{\frac {(j+k-1)!}{(j-k)!(2k)!}}(1-x )^{k}
\end{align}
and that for $\abs{x} \leq 1$, we have
\begin{align}
    \abs{T_j(x)} \leq 1. 
\end{align}

Further, we have the representation (see \cite{szeg1939orthogonal,chebyshev})
\begin{align}
    x^n = 2^{1-n} \left[    \sum_{ j \equiv n  \pmod 2 ;  j \neq 0 } \binom{n}{\frac{n-j}{2}} \cdot T_j (x) +  \mathbb{I} \left[ n \equiv 0 \pmod 2  \right] \cdot \frac{T_0 (x)}{2} \binom{n}{ \frac{n}{2} }  \right]
\end{align}

We truncate this up to degree $ d = O(\sqrt{n \log(1/\epsilon)})$. 
The residual is 
\begin{align}
    \abs{2^{1-n}  \sum_{ j \geq d } \binom{n}{\frac{n-j}{2}} \cdot T_j (x) } \leq 2^{1- n } \sum_{ j \geq d } \binom{n}{\frac{n-j}{2}} \leq \epsilon 
\end{align}
The last step follows from the Chernoff bound.
\end{proof}

\end{document}